\newtheorem{theorem}{Theorem}
\newtheorem{lemma}{Lemma}
\newtheorem{proposition}{Proposition}
\newtheorem{assumption}{Assumption}
\declaretheoremstyle[headfont=\bf,bodyfont=\normalfont]{ex}
\declaretheorem[style=ex]{example}
\declaretheoremstyle[bodyfont=\normalfont]{rm}
\newcommand{\normmm}{{\vert\kern-0.25ex\vert\kern-0.25ex\vert}}
\newcommand{\bignormmm}{{\big\vert\kern-0.25ex\big\vert\kern-0.25ex\big\vert}}
\newcommand{\Bignormmm}{{\Big\vert\kern-0.25ex\Big\vert\kern-0.25ex\Big\vert}}
\long\def\@makecaption#1#2{
\vskip 0.8ex
\setbox\@tempboxa\hbox{\small {\bf #1:} #2}
\parindent 1.5em  
\dimen0=\hsize
\advance\dimen0 by -3em
\ifdim \wd\@tempboxa >\dimen0
\hbox to \hsize{
\parindent 0em
\hfil 
\parbox{\dimen0}{\def\baselinestretch{0.96}\small
{\bf #1.} {#2}
} 
\hfil}
\else \hbox to \hsize{\hfil \box\@tempboxa \hfil}
\fi
}
\long\def\comment#1{}
\newcommand{\RKHS}{\ensuremath{\mathscr{F}}}
\newcommand{\StateSpgoodh}[1]{\ensuremath{}}
\newcommand{\StateSpbadh}[1]{\ensuremath{}}
\newcommand{\Data}{\ensuremath{\mathcal{D}}}
\newcommand{\distr}{\ensuremath{\upmu}}
\newcommand{\distrdata}{\ensuremath{\bar{\distr}}}
\newcommand{\sdistrdata}{\ensuremath{\sdistr_{\Data}}}
\newcommand{\sdistr}{\ensuremath{\xi}}
\DeclarePairedDelimiterX{\anglep}[1]{(}{)}{#1}
\newcommand{\@spanstar}[1]{{\rm span}\anglep*{#1}}
\newcommand{\@spannostar}[2][]{{\rm span}\anglep[#1]{#2}}
\newcommand{\Span}{\@ifstar\@spanstar\@spannostar}
\DeclarePairedDelimiterX{\dfun}[2]{(}{)}{#1 \;\delimsize\|\; #2}
\newcommand{\@trunstar}[2]{\chi^2\dfun*{#1}{#2}}
\newcommand{\@trunnostar}[3][]{\chi^2\dfun[#1]{#2}{#3}}
\newcommand{\chisq}{\@ifstar\@trunstar\@trunnostar}
\DeclarePairedDelimiterX{\inprod}[2]{\langle}{\rangle}{#1, \, #2}
\DeclarePairedDelimiterX{\kulldiv}[2]{(}{)}{#1\;\delimsize\|\;#2}
\newcommand{\@kullstar}[2]{D_{\text{KL}}\kulldiv*{#1}{#2}}
\newcommand{\@kullnostar}[3][]{D_{\text{KL}}\kulldiv[#1]{#2}{#3}}
\newcommand{\kull}{\@ifstar\@kullstar\@kullnostar}
\newcommand{\@hilinstar}[2]{\inprod*{#1}{#2}_{\RKHS}}
\newcommand{\@hilinnostar}[3][]{\inprod[#1]{#2}{#3}_{\RKHS}}
\newcommand{\hilin}{\@ifstar\@hilinstar\@hilinnostar}
\newcommand{\@mudatainstar}[2]{\inprod*{#1}{#2}_{\distrdata}}
\newcommand{\@mudatainnostar}[3][]{\inprod[#1]{#2}{#3}_{\distrdata}}
\newcommand{\mudatain}{\@ifstar\@mudatainstar\@mudatainnostar}
\newcommand{\@mudatahinstar}[3]{\inprod*{#2}{#3}_{\distrdata}}
\newcommand{\@mudatahinnostar}[4][]{\inprod[#1]{#3}{#4}_{\distrdata}}
\newcommand{\mudatahin}{\@ifstar\@mudatahinstar\@mudatahinnostar}
\DeclarePairedDelimiterX{\defabs}[1]{|}{|}{#1}
\newcommand{\@absstar}[1]{\defabs*{#1}}
\newcommand{\@absnostar}[2][]{\defabs[#1]{#2}}
\newcommand{\abs}{\@ifstar\@absstar\@absnostar}
\DeclarePairedDelimiterX{\norm}[1]{\|}{\|}{#1}
\newcommand{\@normstar}[1]{\norm*{#1}_{\RKHS}}
\newcommand{\@normnostar}[2][]{\norm[#1]{#2}_{\RKHS}}
\newcommand{\hilnorm}{\@ifstar\@normstar\@normnostar}
\DeclareFontFamily{U}{matha}{\hyphenchar\font45}
\DeclareFontShape{U}{matha}{m}{n}{
	<-6> matha5 <6-7> matha6 <7-8> matha7
	<8-9> matha8 <9-10> matha9
	<10-12> matha10 <12-> matha12
}{}
\DeclareSymbolFont{matha}{U}{matha}{m}{n}
\DeclareFontFamily{U}{mathx}{\hyphenchar\font45}
\DeclareFontShape{U}{mathx}{m}{n}{
	<-6> mathx5 <6-7> mathx6 <7-8> mathx7
	<8-9> mathx8 <9-10> mathx9
	<10-12> mathx10 <12-> mathx12
}{}
\DeclareSymbolFont{mathx}{U}{mathx}{m}{n}
\DeclareMathDelimiter{\vvvert} {0}{matha}{"7E}{mathx}{"17}%
\DeclarePairedDelimiterX{\opnorm}[1]{\vvvert}{\vvvert}{#1}
\newcommand{\@hilopnormstar}[1]{\opnorm*{#1}_{\RKHS}}
\newcommand{\@hilopnormnostar}[2][]{\opnorm[#1]{#2}_{\RKHS}}
\newcommand{\hilopnorm}{\@ifstar\@hilopnormstar\@hilopnormnostar}
\newcommand{\@muopnormstar}[1]{\opnorm*{#1}_{\distr}}
\newcommand{\@muopnormnostar}[2][]{\opnorm[#1]{#2}_{\distr}}
\newcommand{\muopnorm}{\@ifstar\@muopnormstar\@muopnormnostar}
\newcommand{\@mudataopnormstar}[1]{\opnorm*{#1}_{\distrdata}}
\newcommand{\@mudataopnormnostar}[2][]{\opnorm[#1]{#2}_{\distrdata}}
\newcommand{\mudataopnorm}{\@ifstar\@mudataopnormstar\@mudataopnormnostar}
\newcommand{\@supnormstar}[1]{\norm*{#1}_{\infty}}
\newcommand{\@supnormnostar}[2][]{\norm[#1]{#2}_{\infty}}
\newcommand{\supnorm}{\@ifstar\@supnormstar\@supnormnostar}
\newcommand{\@munormstar}[1]{\norm*{#1}_{\distr}}
\newcommand{\@munormnostar}[2][]{\norm[#1]{#2}_{\distr}}
\newcommand{\munorm}{\@ifstar\@munormstar\@munormnostar}
\newcommand{\@mudatanormstar}[1]{\norm*{#1}_{\distrdata}}
\newcommand{\@mudatanormnostar}[2][]{\norm[#1]{#2}_{\distrdata}}
\newcommand{\mudatanorm}{\@ifstar\@mudatanormstar\@mudatanormnostar}
\newcommand{\@xidatanormstar}[1]{\norm*{#1}_{\sdistrdata}}
\newcommand{\@xidatanormnostar}[2][]{\norm[#1]{#2}_{\sdistrdata}}
\newcommand{\xidatanorm}{\@ifstar\@xidatanormstar\@xidatanormnostar}
\newcommand{\@distrnormstar}[2]{\norm*{#1}_{#2}}
\newcommand{\@distrnormnostar}[3][]{\norm[#1]{#2}_{#3}}
\newcommand{\distrnorm}{\@ifstar\@distrnormstar\@distrnormnostar}
\newcommand{\@psinormstar}[2]{\norm*{#2}_{\psi_{#1}}}
\newcommand{\@psinormnostar}[3][]{\norm[#1]{#3}_{\psi_{#2}}}
\newcommand{\psinorm}{\@ifstar\@psinormstar\@psinormnostar}
\newcommand{\featureh}[1]{\ensuremath{\phi}}
\newcommand{\Lip}{\ensuremath{L}}
\newcommand{\Lipf}[1]{\ensuremath{\Lip_{f}}}
\newenvironment{carlist}
{\begin{list}{$\bullet$}
		{\setlength{\topsep}{0.1in} \setlength{\partopsep}{0in}
			\setlength{\parsep}{0.1in} \setlength{\itemsep}{\parskip}
			\setlength{\leftmargin}{0.15in} \setlength{\rightmargin}{0.08in}
			\setlength{\listparindent}{0in} \setlength{\labelwidth}{0.08in}
			\setlength{\labelsep}{0.1in} \setlength{\itemindent}{0in}}}
	{\end{list}}
\newcommand{\bcar}{\begin{carlist}}
	\newcommand{\ecar}{\end{carlist}}
\newcommand\AlgoName{\texttt{K-GT-Minimax}\xspace}
\definecolor{LightCyan}{RGB}{191, 239, 255} 
\begin{document}

\begin{center}
{\bf \LARGE
Fast Decentralized Gradient Tracking for Federated
\\
Minimax Optimization with Local Updates
} \\

\vspace{1em}
{\large{
\begin{tabular}{ccc}
Chris Junchi Li$^\diamond$\\
\end{tabular}

\medskip

\begin{tabular}{c}
Department of Electrical Engineering and Computer Sciences$^\diamond$\\
University of California, Berkeley
\end{tabular}

}}
\vspace{.6em}
\today
\end{center}

\begin{center}
{\bf Abstract} \\ \vspace{.6em}
\begin{minipage}{0.9\linewidth}
{\small ~~~~ 
Federated learning (FL) for minimax optimization has emerged as a powerful paradigm for training models across distributed nodes/clients while preserving data privacy and model robustness on data heterogeneity.
In this work, we delve into the decentralized implementation of federated minimax optimization by proposing \AlgoName, a novel decentralized minimax optimization algorithm that combines local updates and gradient tracking techniques.
Our analysis showcases the algorithm's communication efficiency and convergence rate for nonconvex-strongly-concave (NC-SC) minimax optimization, demonstrating a superior convergence rate compared to existing methods.
\AlgoName's ability to handle data heterogeneity and ensure robustness underscores its significance in advancing federated learning research and applications.
}
\end{minipage}
\end{center}



\section{Introduction}\label{sec_intro}
In this paper, we delve into the realm of federated minimax optimization, focusing on a decentralized network comprising $n$ agents tasked with optimizing the objective function:
\begin{equation}\label{eq1}
\min _{\mathbf{x} \in \mathbb{R}^{d_{x}}} \max _{\mathbf{y} \in \mathbb{R}^{d_{y}}} f(\mathbf{x}, \mathbf{y}) \equiv \frac{1}{n} \sum_{i=1}^{n} f_{i}(\mathbf{x}, \mathbf{y})
\end{equation}
Here, $f_{i}(\mathbf{x}, \mathbf{y})$ represents the local function associated with client $i \in \mathcal{V}=[n]\equiv\{1, \ldots, n\}$, where $\mathcal{D}_{i}$ denotes the distribution of the data.
Our focus lies on the challenging domain of \emph{nonconvex-strongly-concave (NC-SC) minimax problems}, particularly in the context of federated learning setups, where $f(\mathbf{x}, \mathbf{y})$ admits $\mu$ strong concavity with respect to $\mathbf{y}$, while each local function $f_{i}(\mathbf{x}, \mathbf{y})$ is in expectation form indexed by random vector $\xi_{i}$.
Problem~\eqref{eq1} finds rich applications in adversarial training, distributionally robust optimization, reinforcement learning, AUC maximization, and learning with non-decomposable loss~\cite{goodfellow2020generative,zhang2021complexity,kairouz2021advances,hsieh2020non}.

Decentralized minimax optimization has emerged as a crucial area of research in machine learning, addressing complex optimization challenges within decentralized networks.
In contrast to centralized methods, the decentralized approach facilitates efficient collaboration among agents while mitigating communication bottlenecks.
Recent advancements in nonconvex minimization and minimax optimization have led to the exploration of achieving stationary points in the primal function $\Phi(\mathbf{x}) \equiv \max _{\mathbf{y} \in \mathbb{R}^{d_{y}}} f(\mathbf{x}, \mathbf{y})$, enhancing scalability and model robustness in decentralized optimization algorithms tailored for federated learning environments.
Significant advancements have been made in addressing the unique challenges of federated minimax optimization, such as data heterogeneity, model robustness, and communication challenges.

The significance of decentralized minimax optimization spans diverse machine learning applications, showcasing its versatility and potential in improving learning efficiency and scalability.
Noteworthy advancements such as variants of decentralized stochastic gradient methods have demonstrated promising results in both online and offline scenarios~\cite{chen2024efficient,li2020communication,koloskova2021improved}.
These developments underline ongoing efforts to address communication bottlenecks and enhance collaboration efficiency in decentralized optimization frameworks, paving the way for novel techniques that navigate challenging nonconvex optimization landscapes in distributed environments.

\paragraph{Related Work.}
Decentralized minimax optimization, particularly in the context of federated learning setups, has gained traction for applications like adversarial training, distributionally robust optimization, and reinforcement learning~\cite{goodfellow2020generative,zhang2021complexity,kairouz2021advances}.
Recent theoretical-front research focuses on $\epsilon$-stationary points in nonconvex-strongly-concave (NC-SC) minimax problems and the importance of distributed optimization techniques for large-scale machine learning~\cite{li2020communication}.
While centralized minimax optimization remains prominent, particularly in adversarial training and GANs, leveraging techniques for specific stochastic gradient complexities~\cite{lin2020gradient,luo2020stochastic,zhang2021complexity}, decentralized minimax optimization in federated learning from convex-concave to nonconvex-(non)concave objectives still face challenges regarding decentralization and effective gradient tracking~\cite{sun2022communication,sharma2022federated}.
Various decentralized optimization algorithms like SGDA, SREDA, GT-DA, and GT-GDA offer trade-offs in computational and communication complexities~\cite{rogozin2024decentralized,liu2023precision,xu2024decentralized,zhang2019decentralized}.
Building upon variance-reduced minimax optimization, DREAM shows superior performance in decentralized minimax optimization, yet a comprehensive understanding of NC-SC minimax problems remains an active research area~\cite{chen2024efficient}.

Addressing data heterogeneity is crucial in federated learning, prompting studies like FedPAGE, Federated Bose-Einstein Optimization, and Federated Learning with Decentralized Gradient Tracking~\cite{zhou2023fedpage,wang2023bose,koloskova2021improved}.
However, these approaches often lack decentralization and may make restrictive gradient assumptions.
Similarly, decentralized algorithms like K-GT~\cite{liu2024decentralized} and LU-GT~\cite{nguyen2023performance} primarily target minimization tasks, limiting their applicability in federated learning.
Notably, algorithms like K-GT have shown promise in improving communication efficiency and robustness in federated minimax optimization tasks.
It is a novel decentralized tracking mechanism that improves communication efficiency in Gradient Tracking algorithms, overcoming data heterogeneity between clients, and demonstrating model robustness in solving non-convex optimization problems, including neural network training tasks.
Our work also heavily uses gradient tracking techniques, whereas our algorithm name~\AlgoName originates from K-GT proposed by~\cite{liu2024decentralized} for decentralized single-agent optimization, although it should not be viewed as a straightforward generalization.

\paragraph{Our Contribution.}
This paper introduces \AlgoName, a novel decentralized minimax optimization algorithm designed specifically for federated learning environments.
By combining gradient tracking and local updates, \AlgoName addresses challenges related to data heterogeneity, model robustness, and communication efficiency.
Furthermore, our algorithm demonstrates superior performance in terms of convergence rates, scalability, and stochastic gradient complexity compared to existing methods.

\paragraph{Notations.}
Throughout this paper, we use $\|\cdot\|$ for Euclidean norms, $\mathbf{I}$ for identity matrices, and $\mathbf{1}_{n}$ for a vector of all ones.
Aggregated variables are denoted by $\mathbf{x}$, $\mathbf{y}$, representing agents' local variables and gradients.
We define $\mathcal{V} = [n] \equiv \{1,\ldots n\}$ as the set of agents, with communication edges denoted by $\mathcal{E} \subseteq \mathcal{V}\times\mathcal{V}$.
Additionally, we introduce $d$ by $n$ real matrix $\overline{\mathbf{X}}=[\overline{\mathbf{x}}, \ldots, \overline{\mathbf{x}}]=\mathbf{X} \mathbf{J}$ where $\mathbf{J} \equiv \frac{1}{n} \mathbf{1}_{n} \mathbf{1}_{n}^{\top}$ and $\delta_{i j}$ represents the Kronecker delta with $\delta_{i j}=1$ if $i=j$ and 0 otherwise.
Other notations will be introduced at their first appearances.

\section{Settings and Main Results}\label{sec_settings}
We propose Algorithm~\ref{algo1} for solving this problem in a distributed manner.
To prepare for our main result, we introduce the following assumptions.

\begin{assumption}[Lower Bound of $\Phi(\cdot)$]\label{assu1}
The function $\Phi(\mathbf{x}) \equiv \max _{\mathbf{y} \in \mathbb{R}^{d_{y}}} f(\mathbf{x}, \mathbf{y})$ is lower bounded, that is
$$
\Phi^{*}=\inf _{\mathbf{x}} \Phi(\mathbf{x})>-\infty
$$
\end{assumption}

\begin{assumption}[Smoothness and Strong Concavity]\label{assu2}
For each $i \in[n]$ let each local objective $f_{i}$ : $\mathbb{R}^{d_{x}} \times \mathbb{R}^{d_{y}} \rightarrow \mathbb{R}$ be twice differentiable and L-smooth for some constant $L>0$, i.e.~for all $\mathbf{x}, \mathbf{x}^{\prime} \in \mathbb{R}^{d_{x}}$, $\mathbf{y}, \mathbf{y}^{\prime} \in \mathbb{R}^{d_{y}}$
$$
\left\|\nabla f_{i}(\mathbf{x}, \mathbf{y})-\nabla f_{i}(\mathbf{x}^{\prime}, \mathbf{y}^{\prime})\right\|^{2}
\le
L^{2}\left(\|\mathbf{x}-\mathbf{x}^{\prime}\|^{2}+\|\mathbf{y}-\mathbf{y}^{\prime}\|^{2}\right)
$$
Assume further $f_{i}(\mathbf{x}, \cdot)$ is $\mu$-strongly concave for some shared $\mu>0$ across all $x \in \mathbb{R}^{p}$, i.e.~for all $\mathbf{x} \in \mathbb{R}^{d_{x}}$, $\mathbf{y}, \mathbf{y}^{\prime} \in \mathbb{R}^{d_{y}}$
$$
f_{i}(\mathbf{x}, \mathbf{y}^{\prime})
\le
f_{i}(\mathbf{x}, \mathbf{y})+\nabla_{\mathbf{y}} f_{i}(\mathbf{x}, \mathbf{y})^{\top}(\mathbf{y}^{\prime}-\mathbf{y})-\frac{\mu}{2}\left\|\mathbf{y}^{\prime}-\mathbf{y}\right\|^{2}
$$
Call $\kappa \equiv L / \mu$ the \emph{condition number} of Problem~\eqref{eq1}.
\end{assumption}

\begin{algorithm}[!tb]
\centering
\begin{algorithmic}[1]
\State\textbf{Initialize:}
Communication round $T$; Number of local steps $K$; Local stepsize $\eta_c^{\mathbf{x}}, \eta_c^{\mathbf{y}}$; Communication stepsizes $\eta_s^{\mathbf{x}}, \eta_s^{\mathbf{y}}$; Mixing matrix $\mathbf{W}=\left(w_{i j}\right)_{n \times n}$; $\forall i, j \in[n], \mathbf{x}_i^{(0)}=\mathbf{x}_j^{(0)}$, $\mathbf{y}_i^{(0)}= \mathbf{y}_j^{(0)}$, $\mathbf{c}_i^{\mathbf{x},(0)}=-\nabla_{\mathbf{x}} F_i(\mathbf{x}^{(0)}, \mathbf{y}^{(0)} ; \xi_i)+\frac{1}{n} \sum_{j=1}^n \nabla_{\mathbf{x}} F_j(\mathbf{x}^{(0)}, \mathbf{y}^{(0)} ; \xi_j)$, $\mathbf{c}_i^{\mathbf{y},(0)}=-\nabla_{\mathbf{y}} F_i(\mathbf{x}^{(0)}, \mathbf{y}^{(0)} ; \xi_i)+ \frac{1}{n} \sum_{j=1}^n \nabla_{\mathbf{y}} F_j(\mathbf{x}^{(0)}, \mathbf{y}^{(0)} ; \xi_j)
$
\For{clinet $i \in[n]$ parallel}
\For{\textbf{communication:} $t \leftarrow 0$ to $T-1$}
\For{\textbf{local step:} $k \leftarrow 0$ to $K-1$}
\State
$
\mathbf{x}_i^{(t)+k+1}=\mathbf{x}_i^{(t)+k}-\eta_c^{\mathbf{x}}\left(\nabla F_i(\mathbf{x}_i^{(t)+k}, \mathbf{y}_i^{(t)+k} ; \xi_i^{(t)+k})+\mathbf{c}_i^{\mathbf{x},(t)}\right)
$
\State
$
\mathbf{y}_i^{(t)+k+1}=\mathbf{y}_i^{(t)+k}+\eta_c^{\mathbf{y}}\left(\nabla F_i(\mathbf{x}_i^{(t)+k}, \mathbf{y}_i^{(t)+k} ; \xi_i^{(t)+k})+\mathbf{c}_i^{\mathbf{y},(t)}\right)
$
\Comment{variable update}
\EndFor
\State
$
\mathbf{c}_i^{\mathbf{x},(t+1)}=\mathbf{c}_i^{\mathbf{x},(t)}+\frac{1}{K \eta_c^{\mathbf{x}}} \sum_{j=1}^n\left(\delta_{i j}-w_{i j}\right)[\mathbf{x}_j^{(t)+K}-\mathbf{x}_j^{(t)}]
$
\State
$\mathbf{c}_i^{\mathbf{y},(t+1)}=\mathbf{c}_i^{\mathbf{y},(t)}-\frac{1}{K \eta_c^{\mathbf{y}}} \sum_{j=1}^n\left(\delta_{i j}-w_{i j}\right)[\mathbf{y}_j^{(t)+K}-\mathbf{y}_j^{(t)}]
$
\Comment{tracking variable update}
\State
$
\mathbf{x}_i^{(t+1)}=\sum_{j=1}^n w_{i j}\left(\mathbf{x}_j^{(t)}+\eta_s^{\mathbf{x}}[\mathbf{x}_i^{(t)+K}-\mathbf{x}_i^{(t)}]\right)
$
\State
$
\mathbf{y}_i^{(t+1)}=\sum_{j=1}^n w_{i j}\left(\mathbf{y}_j^{(t)}+\eta_s^{\mathbf{y}}[\mathbf{y}_i^{(t)+K}-\mathbf{y}_i^{(t)}]\right)
$
\Comment{model parameter update}
\EndFor
\State\textbf{Output:}
$
\mathbf{x}_{\text {out }}=\overline{\mathbf{x}}^{(\mathcal{T})} \equiv \frac{1}{n} \sum_{i=1}^n \mathbf{x}_i^{(\mathcal{T})}
$ for randomized $\mathcal{T} \in\{0,1, \cdots, T-1\}$
\EndFor
\end{algorithmic}
\caption{Graident Tracking for Minimax Optimization (\AlgoName)}
\label{algo1}
\end{algorithm}

\begin{assumption}[Unbiaseness and Bounded Variance]\label{assu3}
We assume that the stochastic gradients are unbiased and have bounded variance
$$
\mathbb{E}\left[\nabla F_{i}\left(\mathbf{x}, \mathbf{y} ; \xi_{i}\right)\right]=\nabla f_{i}(\mathbf{x}, \mathbf{y})
\qquad
\mathbb{E}\left\|\nabla F_{i}\left(\mathbf{x}, \mathbf{y} ; \xi_{i}\right)-\nabla f_{i}(\mathbf{x}, \mathbf{y})\right\|^{2} \leq \sigma^{2}
$$
\end{assumption}

\begin{assumption}[$(1-p)$-Mixing]\label{assu4}
The $n \times n$ mixing matrix $\mathbf{W}$ is symmetric, element-wise nonnegative,%
\footnote{$W_{ij} > 0$ if and only if $i$ and $j$ are connected.}
doubly stochastic in the sense that $\mathbf{W} \mathbf{1}_{n}=\mathbf{W}^{\top} \mathbf{1}_{n}=1$, and there exists a constant $p \in[0,1]$ such that for any $\mathbf{X} \in \mathbb{R}^{d \times n}$
$$
\|\mathbf{X} \mathbf{W}-\overline{\mathbf{X}}\|_{F}^{2} \leq(1-p)\|\mathbf{X}-\overline{\mathbf{X}}\|_{F}^{2}
$$
\end{assumption}

We are ready to present our main theorem:

\begin{theorem}[Algorithm Complexity of \AlgoName]\label{theo1}
Let Assumptions~\ref{assu1},~\ref{assu2},~\ref{assu3} and~\ref{assu4} hold.
There exists a global constant $v>0$ such that, running~\AlgoName as in Algorithm~\ref{algo1} with stepsizes choice $\eta_{c}^{\mathbf{y}}=\frac{p}{300 v \cdot \kappa K L}$, $\eta_{c}^{\mathbf{x}}=\frac{\eta_{c}^{y}}{\kappa^{2}}$ and $\eta_{s}^{\mathbf{x}}=\eta_{s}^{\mathbf{y}}=v \cdot p$ gives $\mathbb{E}\left\|\nabla \Phi\left(\overline{\mathbf{x}}^{(\mathcal{T})}\right)\right\|^{2} \leq \varepsilon^{2}$ for $T$ communication rounds, each with $K$ local updates, where
\begin{equation}\label{eq_theo1}
T=O\left(\frac{\sigma^{2}}{n K} \frac{1}{\varepsilon^{4}}+\frac{\sigma}{p^{2} \sqrt{K}} \frac{1}{\varepsilon^{3}}+\frac{\kappa^{3}}{p^{2}} \frac{1}{\varepsilon^{2}}\right) \cdot L \mathscr{H}_{0}
\qquad
K=\Omega\left(\left(1+\frac{\kappa}{\sqrt{n p}}\right) \frac{\sigma}{\varepsilon}\right)
\end{equation}
\end{theorem}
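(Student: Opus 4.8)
The plan is to track a composite Lyapunov function across communication rounds, show it decreases on average, and then tune the stepsizes to balance the residual error terms. Introduce the best response $\mathbf{y}^\star(\mathbf{x}) \equiv \argmax_{\mathbf{y}} f(\mathbf{x},\mathbf{y})$; under Assumption~\ref{assu2}, standard Danskin-type arguments give that $\mathbf{y}^\star$ is $\kappa$-Lipschitz, that $\nabla\Phi(\mathbf{x}) = \nabla_{\mathbf{x}} f(\mathbf{x}, \mathbf{y}^\star(\mathbf{x}))$, and that $\Phi$ is $L_\Phi$-smooth with $L_\Phi = O(\kappa L)$. I would use the potential
\[
\mathscr{H}^{(t)} = \Phi(\overline{\mathbf{x}}^{(t)}) - \Phi^* + a\,\|\overline{\mathbf{y}}^{(t)} - \mathbf{y}^\star(\overline{\mathbf{x}}^{(t)})\|^2 + b\,\Xi_{\mathbf{x}}^{(t)} + c\,\Xi_{\mathbf{y}}^{(t)} + d\,\Theta_{\mathbf{x}}^{(t)} + e\,\Theta_{\mathbf{y}}^{(t)},
\]
where $\Xi_{\mathbf{x}}^{(t)} = \|\mathbf{X}^{(t)} - \overline{\mathbf{X}}^{(t)}\|_F^2$ and $\Xi_{\mathbf{y}}^{(t)}$ are the primal and dual consensus errors, $\Theta_{\mathbf{x}}^{(t)}, \Theta_{\mathbf{y}}^{(t)}$ are the gradient-tracking errors (the deviation of the corrections $\mathbf{c}_i^{\mathbf{x},(t)}, \mathbf{c}_i^{\mathbf{y},(t)}$ from the centered average stochastic gradient), and $a,b,c,d,e>0$ are constants fixed later.

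The first main step is a per-round primal descent. Applying $L_\Phi$-smoothness to $\overline{\mathbf{x}}^{(t+1)} = \overline{\mathbf{x}}^{(t)} + \eta_s^{\mathbf{x}}\,\frac1n\sum_i(\mathbf{x}_i^{(t)+K} - \mathbf{x}_i^{(t)})$, and using that the averaged $K$-step displacement equals $-\eta_s^{\mathbf{x}} K \eta_c^{\mathbf{x}} \nabla_{\mathbf{x}} f(\overline{\mathbf{x}}^{(t)}, \overline{\mathbf{y}}^{(t)})$ up to (i) intra-round client drift, (ii) consensus error $\Xi_{\mathbf{x}}$, (iii) stochastic noise, and (iv) the gap $\|\nabla_{\mathbf{x}} f(\overline{\mathbf{x}}^{(t)},\overline{\mathbf{y}}^{(t)}) - \nabla\Phi(\overline{\mathbf{x}}^{(t)})\|$ bounded by $L$ times the dual error, one gets $\Phi(\overline{\mathbf{x}}^{(t+1)}) \le \Phi(\overline{\mathbf{x}}^{(t)}) - \Omega(\eta_s^{\mathbf{x}} K\eta_c^{\mathbf{x}})\|\nabla\Phi(\overline{\mathbf{x}}^{(t)})\|^2$ plus those error families. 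The second step is a dual contraction: $\mu$-strong concavity gives that one round of the $\mathbf{y}$-updates shrinks $\|\overline{\mathbf{y}}^{(t)} - \mathbf{y}^\star(\overline{\mathbf{x}}^{(t)})\|^2$ by a factor $1 - \Omega(\mu \eta_s^{\mathbf{y}} K\eta_c^{\mathbf{y}})$, up to the same error families plus a target-drift term $O(\kappa^2)\|\overline{\mathbf{x}}^{(t+1)} - \overline{\mathbf{x}}^{(t)}\|^2$; the choice $\eta_c^{\mathbf{x}} = \eta_c^{\mathbf{y}}/\kappa^2$ makes this drift a small multiple of $\|\nabla\Phi\|^2$ plus the other errors, so it is absorbed.

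The crux is the third step: obtaining simultaneously closed recursions for the four error quantities $\Xi_{\mathbf{x}},\Xi_{\mathbf{y}},\Theta_{\mathbf{x}},\Theta_{\mathbf{y}}$. Over the $K$ local steps each consensus error grows, driven by the gradients and the tracking corrections, and only the single mixing step at the end of the round contracts it, by $1-p$ via Assumption~\ref{assu4}; the tracking variables change only at communication and couple to the round's total iterate displacement and the fresh noise. The gradient-tracking design is precisely what replaces the non-vanishing heterogeneity term $\frac1n\sum_i\|\nabla f_i - \nabla f\|^2$ by these contracting quantities. Chaining the within-round growth with the across-round contraction, while using $\eta_c^{\mathbf{x}},\eta_c^{\mathbf{y}} = O(p/(\kappa KL))$ so the per-round growth factor stays $1+O(p)$, yields recursions of the schematic form $\Xi^{(t+1)} \le (1-\Omega(p))\Xi^{(t)} + O(p)\cdot(\text{small}\times\text{other errors}) + O((\eta_c K)^2)(\|\nabla\Phi\|^2 + \text{dual error}) + O(\eta_c^2 K\sigma^2)$, and likewise for $\Theta$; the minimax structure makes the $\mathbf{x}$- and $\mathbf{y}$-blocks feed into each other through the bilinear cross terms in $\nabla f$, so these must be handled as one coupled system.

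Finally, I would add the five recursions with coefficients $a,\dots,e$ chosen so that every cross term has negative net coefficient — a finite bookkeeping step that closes because of the time-scale separation between $p$ (consensus/tracking) and $\mu\eta_s^{\mathbf{y}} K\eta_c^{\mathbf{y}}$ (dual) — obtaining $\mathbb{E}\mathscr{H}^{(t+1)} \le \mathbb{E}\mathscr{H}^{(t)} - \Omega(\eta_s^{\mathbf{x}} K\eta_c^{\mathbf{x}})\,\mathbb{E}\|\nabla\Phi(\overline{\mathbf{x}}^{(t)})\|^2 + (\text{noise floor})$. Telescoping over $t=0,\dots,T-1$, dividing by $T$, and drawing $\mathcal{T}$ uniformly gives $\mathbb{E}\|\nabla\Phi(\overline{\mathbf{x}}^{(\mathcal{T})})\|^2 \le \mathscr{H}^{(0)}/(\Omega(\eta_s^{\mathbf{x}} K\eta_c^{\mathbf{x}})\,T) + (\text{noise})$. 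Substituting $\eta_c^{\mathbf{y}}=p/(300v\kappa KL)$, $\eta_c^{\mathbf{x}}=\eta_c^{\mathbf{y}}/\kappa^2$, $\eta_s^{\mathbf{x}}=\eta_s^{\mathbf{y}}=vp$ makes $\eta_s^{\mathbf{x}} K\eta_c^{\mathbf{x}} = \Theta(p^2/(\kappa^3 L))$, so the $1/T$ coefficient is $\Theta(\kappa^3 L\mathscr{H}_0/p^2)$ with $\mathscr{H}_0 = \mathscr{H}^{(0)}$, and the noise floor splits into a statistical piece of order $\sigma^2/(nK)$ (averaging over $n$ clients and $K$ samples) and a cross piece of order $\sigma/(p^2\sqrt K)$; solving $\mathbb{E}\|\nabla\Phi\|^2 \le \varepsilon^2$ for $T$ produces the $\varepsilon^{-4}$, $\varepsilon^{-3}$, and $\varepsilon^{-2}$ terms of~\eqref{eq_theo1} from these three contributions, while requiring the noise floor itself to be below $\varepsilon^2$ at the chosen stepsizes forces $K=\Omega((1+\kappa/\sqrt{np})\sigma/\varepsilon)$. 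The main obstacle is Step~3: deriving the four coupled error recursions with constants clean enough that a single fixed-coefficient Lyapunov function governs all of them, given that local updates contract consensus only once per $K$ iterations and the tracking variables refresh only at communication, so the $K$-step iterate displacement must be bounded in terms of current-round quantities without circular dependence.
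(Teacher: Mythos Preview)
Your proposal is correct and follows essentially the same approach as the paper: a composite Lyapunov function combining $\Phi(\overline{\mathbf{x}}^{(t)})-\Phi^*$, the dual error $\|\overline{\mathbf{y}}^{(t)}-\mathbf{y}^\star(\overline{\mathbf{x}}^{(t)})\|^2$, the consensus errors, and the tracking-correction errors, with separate recursions for each piece that are then combined and telescoped. The one implementation detail you leave implicit is that the paper does not directly substitute the fixed stepsizes to read off all three terms of~\eqref{eq_theo1}; instead, after obtaining the per-round descent inequality it invokes a standard stepsize-tuning lemma (Koloskova et al.) that optimizes the effective stepsize within the admissible range, which is what produces the $\varepsilon^{-4}$, $\varepsilon^{-3}$, and $\varepsilon^{-2}$ terms, while the $K$ lower bound comes from forcing the residual $\eta^{\mathbf{y}}/(\eta^{\mathbf{x}} nK^2p)$-type noise term below $\varepsilon^2$.
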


To interpret the bound in~\eqref{eq_theo1}, note we have when $\Phi\left(\mathbf{x}_{0}\right)-\Phi^{*}=O(1)$ that $\mathscr{H}_{0}= O\left(1+\frac{1}{\mu^{2} K \kappa p}\right)$.
Further balancing $T$ and $K$ gives
$$
T=O\left(\frac{\kappa^{3}}{p^{2} \varepsilon^{2}}\right) L \mathscr{H}_{0}
\qquad
K=O\left(\frac{p^{2} \sigma^{2}}{\kappa^{2} n \varepsilon^{2}} \vee\left(1+\frac{\kappa}{\sqrt{n p}}\right) \frac{\sigma}{\varepsilon}\right)
$$
The theorem above demonstrates how the convergence rate is affected by the accuracy parameter $\varepsilon>0$ which $\varepsilon$ vanishes to zero, \AlgoName converges to an $\varepsilon$-stationary point within $T=O\left(1 / \varepsilon^{2}\right)$ communication rounds, each round comprising $K=O\left(1 / \varepsilon^{2}\right)$ local updates.
Such convergence rates incorporate and balance heterogeneity, local updates and model robustness.
We list a table of comparison in Table~\ref{table1}.

\begin{table}[!tb]
\centering
\begin{tabular}{c|cc|ccc}
\toprule
Algorithm & Query & Communication & Decentralized & LU & DH \\
\midrule
MLSGDA~\cite{sharma2022federated} & $\kappa^{4} / n \epsilon^{4}$ & $\kappa^{3} / \epsilon^{3}$ & $\times$ & $\checkmark$ & $\times$ \\
SAGDA~\cite{yang2022sagda} & $\kappa^{4} / n \epsilon^{4}$ & $\kappa^{2} / \epsilon^{2}$ & $\times$ & $\checkmark$ & $\times$ \\
Fed-Norm-SGDA~\cite{sharma2023federated} & $\kappa^{4} / n \epsilon^{4}$ & $\kappa^{2} / \epsilon^{2}$ & $\times$ & $\checkmark$ & $\times$ \\
DM-HSGD~\cite{xian2021faster} & $\kappa^{3} / \epsilon^{3}$ & $\kappa^{3} / \epsilon^{3}$ & $\checkmark$ & $\times$ & $\checkmark$ \\
DREAM~\cite{chen2024efficient} & $\kappa^{3} / \epsilon^{3}$ & $\kappa^{2} / \epsilon^{2}$ & $\checkmark$ & $\times$ & $\checkmark$ \\
\rowcolor{LightCyan}
\AlgoName (This work) & $\kappa / n \epsilon^{4}$ & $\kappa^{3} / \epsilon^{2}$ & $\checkmark$ & $\checkmark$ & $\checkmark$ \\
\bottomrule
\end{tabular}
\caption{Comparison of \AlgoName with related algorithms for decentralized minimax optimization, highlighting stochastic gradient oracle complexity, communication rounds, decentralization, local updates (LU) and data heterogeneity (DH) robustness.}
\label{table1}
\end{table}

\section{Proof of Main Theorem}\label{sec_proof,theo_main}
To prepare for the proof, we introduce some notions.

\begin{itemize}
\item
The \emph{client variance} quantifies how much variables $\mathbf{x}$ and $\mathbf{y}$ deviate from its average model across global steps:
$$
\Xi_{t}^{\mathbf{x}} \equiv \frac{1}{n} \sum_{i=1}^{n} \mathbb{E}\left\|\mathbf{x}_{i}^{(t)}-\overline{\mathbf{x}}^{(t)}\right\|^{2}
$$
and
$$
\Xi_{t}^{\mathbf{y}} \equiv \frac{1}{n} \sum_{i=1}^{n} \mathbb{E}\left\|\mathbf{y}_{i}^{(t)}-\overline{\mathbf{y}}^{(t)}\right\|^{2}
$$

\item
The \emph{client drift} quantifies how variables $\mathbf{x}$ and $\mathbf{y}$ deviate from its averaged model across local steps:
$$
e_{k, t}^{\mathbf{x}} \equiv \frac{1}{n} \sum_{i=1}^{n} \mathbb{E}\left\|\mathbf{x}_{i}^{(t)+k}-\overline{\mathbf{x}}^{(t)}\right\|^{2}
\qquad
\mathcal{E}_{t}^{\mathbf{x}} \equiv \sum_{k=0}^{K-1} e_{k, t}^{\mathbf{x}}
$$
and
$$
e_{k, t}^{\mathbf{y}} \equiv \frac{1}{n} \sum_{j=1}^{n} \mathbb{E}\left\|\mathbf{y}_{j}^{(t)+k}-\overline{\mathbf{y}}^{(t)}\right\|^{2}
\qquad
\mathcal{E}_{t}^{\mathbf{y}} \equiv \sum_{k=0}^{K-1} e_{k, t}^{\mathbf{y}}
$$
where $\mathcal{E}_{t}^{\mathbf{x}}$ characterizes the accumulation of local steps for variable $\mathbf{x}$ and analogously $\mathcal{E}_{t}^{\mathbf{y}}$ for $\mathbf{y}$.

\item
The \emph{quality of correction} that assesses the accuracy of the gradient correction across local steps, aiming to closely align local updates with global updates:
$$
\gamma_{t}^{\mathbf{x}}=\frac{1}{n L^{2}} \mathbb{E}\left\|\mathbf{C}^{x,(t)}+\nabla_{\mathbf{x}} f(\overline{\mathbf{x}}^{(t)}, \overline{\mathbf{y}}^{(t)})-\nabla_{\mathbf{x}} f(\overline{\mathbf{x}}^{(t)}, \overline{\mathbf{y}}^{(t)}) \mathbf{J}\right\|_{F}^{2}
$$
and
$$
\gamma_{t}^{\mathbf{y}}=\frac{1}{n L^{2}} \mathbb{E}\left\|\mathbf{C}^{y,(t)}+\nabla_{\mathbf{y}} f(\overline{\mathbf{x}}^{(t)}, \overline{\mathbf{y}}^{(t)})-\nabla_{\mathbf{y}} f(\overline{\mathbf{x}}^{(t)}, \overline{\mathbf{y}}^{(t)}) \mathbf{J}\right\|_{F}^{2}
$$

\item
The \emph{consensus distance for variable $\mathbf{y}$} which quantifies the difference between the averaged value $\overline{\mathbf{y}}^{(t)}$ and the optimal value (when $\mathbf{x}$ equals its average $\overline{\mathbf{x}}^{(t)}$ ) of $\hat{\mathbf{y}}^{(t)}=\arg \max _{\mathbf{v} \in \mathbb{R}^{d y}} f\left(\overline{\mathbf{x}}^{(t)}, \mathbf{y}\right)$ :
$$
\varepsilon_{t}=\left\|\overline{\mathbf{y}}^{(t)}-\hat{\mathbf{y}}^{(t)}\right\|^{2}
$$
\end{itemize}

With these notions we present recursion bounds for the aforementioned client variance and client drift, along with the quality of correction, for both variables $\mathbf{x}$ and $\mathbf{y}$.
We finally discuss the consensus distance specifically for variable $\mathbf{y}$.

We first bound the local drift for variables $\mathbf{x}$ and $\mathbf{y}$ as

\begin{lemma}\label{lemm1}
Suppose $\eta_{c}^{\mathbf{x}}, \eta_{c}^{\mathbf{y}} \leq \frac{1}{8 K L}$ we have
$$
\mathcal{E}_{t}^{\mathbf{x}} \leq 3 K \Xi_{t}^{\mathbf{x}}+12 K^{2}\left(\eta_{c}^{\mathbf{x}}\right)^{2} L^{2} \mathcal{E}_{t}^{\mathbf{y}}+12 K^{3}\left(\eta_{c}^{\mathbf{x}}\right)^{2} L^{2} \gamma_{t}^{\mathbf{x}}+12 K^{3}\left(\eta_{c}^{\mathbf{x}}\right)^{2} L^{2} \varepsilon_{t}+12 K^{3}\left(\eta_{c}^{\mathbf{x}}\right)^{2} \mathbb{E}\left\|\nabla \Phi(\overline{\mathbf{x}}^{(t)})\right\|^{2}+3 K^{2}\left(\eta_{c}^{\mathbf{x}}\right)^{2} \sigma^{2}
$$
and
$$
\mathcal{E}_{t}^{\mathrm{y}} \leq 3 K \Xi_{t}^{\mathrm{y}}+12 K^{2}\left(\eta_{c}^{\mathrm{y}}\right)^{2} L^{2} \mathcal{E}_{t}^{\mathbf{x}}+12 K^{3}\left(\eta_{c}^{\mathrm{y}}\right)^{2} L^{2} \gamma_{t}^{\mathbf{y}}+6 K^{3}\left(\eta_{c}^{\mathrm{y}}\right)^{2} L^{2} \varepsilon_{t}+3 K^{2}\left(\eta_{c}^{\mathrm{y}}\right)^{2} \sigma^{2}
$$
\end{lemma}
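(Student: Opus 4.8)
\emph{Proof strategy.} The two displays follow from the same unrolling argument applied to the $\mathbf{x}$- and $\mathbf{y}$-updates, so the plan is to carry it out for $\mathbf{x}$ and flag the single structural difference for $\mathbf{y}$. Fix a communication round $t$ and a client $i$, and unroll the local $\mathbf{x}$-update over $l=0,\dots,k-1$:
$$\mathbf{x}_i^{(t)+k}-\overline{\mathbf{x}}^{(t)} = \big(\mathbf{x}_i^{(t)}-\overline{\mathbf{x}}^{(t)}\big) - \eta_c^{\mathbf{x}}\sum_{l=0}^{k-1}\Big(\nabla_{\mathbf{x}} F_i(\mathbf{x}_i^{(t)+l},\mathbf{y}_i^{(t)+l};\xi_i^{(t)+l})+\mathbf{c}_i^{\mathbf{x},(t)}\Big).$$
Take $\frac1n\sum_i\mathbb E\|\cdot\|^2$, use $\|a+b\|^2\le 2\|a\|^2+2\|b\|^2$, and recognize $\frac1n\sum_i\mathbb E\|\mathbf{x}_i^{(t)}-\overline{\mathbf{x}}^{(t)}\|^2=\Xi_t^{\mathbf{x}}$; it then remains to bound $\frac1n\sum_i\mathbb E\big\|\sum_{l<k}(\nabla_{\mathbf{x}} F_i^{(l)}+\mathbf{c}_i^{\mathbf{x},(t)})\big\|^2$. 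Split each summand into its stochastic increment $\nabla_{\mathbf{x}} F_i^{(l)}-\nabla_{\mathbf{x}} f_i(\mathbf{x}_i^{(t)+l},\mathbf{y}_i^{(t)+l})$ and its mean part. Since $\mathbf{c}_i^{\mathbf{x},(t)}$ and the iterates $\mathbf{x}_i^{(t)+l},\mathbf{y}_i^{(t)+l}$ are measurable with respect to the information available before drawing $\xi_i^{(t)+l}$, Assumption~\ref{assu3} makes the stochastic increments a martingale-difference sequence; the cross terms vanish in expectation and their contribution is $\le k\sigma^2$, which after the outer sum over $k$ produces the $3K^2(\eta_c^{\mathbf{x}})^2\sigma^2$ term. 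The mean part is handled by Cauchy--Schwarz, $\|\sum_{l<k}v_l\|^2\le k\sum_{l<k}\|v_l\|^2$.

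For the mean summand $\nabla_{\mathbf{x}} f_i(\mathbf{x}_i^{(t)+l},\mathbf{y}_i^{(t)+l})+\mathbf{c}_i^{\mathbf{x},(t)}$, decompose it as (a) the smoothness increment $\nabla_{\mathbf{x}} f_i(\mathbf{x}_i^{(t)+l},\mathbf{y}_i^{(t)+l})-\nabla_{\mathbf{x}} f_i(\overline{\mathbf{x}}^{(t)},\overline{\mathbf{y}}^{(t)})$, with averaged squared norm $\le L^2(e_{l,t}^{\mathbf{x}}+e_{l,t}^{\mathbf{y}})$ by Assumption~\ref{assu2}; plus (b) the correction residual $\nabla_{\mathbf{x}} f_i(\overline{\mathbf{x}}^{(t)},\overline{\mathbf{y}}^{(t)})+\mathbf{c}_i^{\mathbf{x},(t)}-\nabla_{\mathbf{x}} f(\overline{\mathbf{x}}^{(t)},\overline{\mathbf{y}}^{(t)})$, whose averaged squared norm is exactly $L^2\gamma_t^{\mathbf{x}}$ by definition; plus (c) the common term $\nabla_{\mathbf{x}} f(\overline{\mathbf{x}}^{(t)},\overline{\mathbf{y}}^{(t)})$. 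For (c), split $\nabla_{\mathbf{x}} f(\overline{\mathbf{x}}^{(t)},\overline{\mathbf{y}}^{(t)})=\big(\nabla_{\mathbf{x}} f(\overline{\mathbf{x}}^{(t)},\overline{\mathbf{y}}^{(t)})-\nabla_{\mathbf{x}} f(\overline{\mathbf{x}}^{(t)},\hat{\mathbf{y}}^{(t)})\big)+\nabla\Phi(\overline{\mathbf{x}}^{(t)})$, using Danskin's identity $\nabla\Phi(\overline{\mathbf{x}}^{(t)})=\nabla_{\mathbf{x}} f(\overline{\mathbf{x}}^{(t)},\hat{\mathbf{y}}^{(t)})$ (valid under Assumption~\ref{assu2}), and bounding the first bracket by $L^2\|\overline{\mathbf{y}}^{(t)}-\hat{\mathbf{y}}^{(t)}\|^2=L^2\varepsilon_t$ via $L$-smoothness in $\mathbf{y}$. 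Expanding $\|\cdot\|^2$ of this four-term sum with factor $4$ and averaging over $i$ gives, for each $l$, a bound $\le 4L^2(e_{l,t}^{\mathbf{x}}+e_{l,t}^{\mathbf{y}})+4L^2\gamma_t^{\mathbf{x}}+4L^2\varepsilon_t+4\mathbb E\|\nabla\Phi(\overline{\mathbf{x}}^{(t)})\|^2$.

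Substituting these into the unrolled identity, summing over $k=0,\dots,K-1$ (with $\sum_{k<K}k\le K^2$ and $\sum_{k<K}k\sum_{l<k}a_l\le K^2\sum_{l<K}a_l$), and using $\sum_{l<K}e_{l,t}^{\mathbf{x}}=\mathcal{E}_t^{\mathbf{x}}$, $\sum_{l<K}e_{l,t}^{\mathbf{y}}=\mathcal{E}_t^{\mathbf{y}}$, yields a recursion $\mathcal{E}_t^{\mathbf{x}}\le 2K\Xi_t^{\mathbf{x}}+c_1(\eta_c^{\mathbf{x}})^2K^2L^2(\mathcal{E}_t^{\mathbf{x}}+\mathcal{E}_t^{\mathbf{y}})+c_2(\eta_c^{\mathbf{x}})^2K^3L^2(\gamma_t^{\mathbf{x}}+\varepsilon_t)+c_2(\eta_c^{\mathbf{x}})^2K^3\mathbb E\|\nabla\Phi(\overline{\mathbf{x}}^{(t)})\|^2+c_3K^2(\eta_c^{\mathbf{x}})^2\sigma^2$ with absolute constants $c_1,c_2,c_3$. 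The restriction $\eta_c^{\mathbf{x}}\le\frac1{8KL}$ makes $c_1(\eta_c^{\mathbf{x}})^2K^2L^2$ a small constant, so the $\mathcal{E}_t^{\mathbf{x}}$ term on the right is absorbed into the left; solving for $\mathcal{E}_t^{\mathbf{x}}$ and crudely bounding the constants ($\le 3,12,\dots$) gives the stated inequality, with the $\mathcal{E}_t^{\mathbf{y}}$ term left coupled for later. The $\mathbf{y}$-case is identical except that in step (c) the leftover term is $\nabla_{\mathbf{y}} f(\overline{\mathbf{x}}^{(t)},\overline{\mathbf{y}}^{(t)})$, and since $\hat{\mathbf{y}}^{(t)}$ maximizes $f(\overline{\mathbf{x}}^{(t)},\cdot)$ we have $\nabla_{\mathbf{y}} f(\overline{\mathbf{x}}^{(t)},\hat{\mathbf{y}}^{(t)})=0$, so the whole term is bounded by $L^2\varepsilon_t$; this eliminates the $\|\nabla\Phi\|^2$ contribution and leaves only a three-term split, which is why the $\varepsilon_t$ coefficient is $6K^3$ rather than $12K^3$. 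The main obstacle is the bookkeeping in step (c) and the four-way split: keeping the pieces aligned with $\gamma_t$, $\varepsilon_t$, and $\nabla\Phi$ and tracking the constants through the $K$-fold sum so the final coefficients match; the martingale cancellation and the stepsize absorption are routine.
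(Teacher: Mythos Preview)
Your proposal is correct and shares the paper's core decomposition: split each corrected gradient into the smoothness increment, the correction residual (yielding $\gamma_t$), and the common term $\nabla f(\overline{\mathbf{x}}^{(t)},\overline{\mathbf{y}}^{(t)})$, then invoke Danskin plus $L$-smoothness in $\mathbf{y}$ (the paper isolates this as Lemma~\ref{lemm7}) to convert the common term into $\varepsilon_t$ and $\|\nabla\Phi\|^2$ for $\mathbf{x}$, or $\varepsilon_t$ alone for $\mathbf{y}$.

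The one genuine technical difference is in the unrolling. You expand $\mathbf{x}_i^{(t)+k}-\overline{\mathbf{x}}^{(t)}$ all the way back to step $0$, apply the crude $\|a+b\|^2\le 2\|a\|^2+2\|b\|^2$ once, bound the $k$-term gradient sum by $k\sum_{l<k}\|\cdot\|^2$, and only after summing over $k$ absorb the self-referencing $\mathcal{E}_t^{\mathbf{x}}$ term on the right via $\eta_c^{\mathbf{x}}\le \frac{1}{8KL}$. The paper instead sets up a one-step recursion $e_{k,t}^{\mathbf{x}}\le q\,e_{k-1,t}^{\mathbf{x}}+\text{(rest)}$ using Young's inequality with parameter $\tfrac{1}{K-1}$, so that the self-term is folded into the geometric ratio $q=1+\tfrac{1}{K-1}+4K(\eta_c^{\mathbf{x}}L)^2$; the stepsize condition then enters as $q^K\le e^{1+1/16}\le 3$, and the final sum $\sum_r q^r\le 3K$ produces the constants $3,12,\ldots$ directly. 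Your route is a bit more elementary (no tuned Young parameter, no geometric series), at the cost of a post-hoc absorption step; the paper's route bakes the stepsize condition into the recursion and reads off the constants without rearrangement. Either way the claimed bounds follow.
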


Set $\eta^{\mathbf{x}} \equiv \eta_{s}^{\mathrm{x}} \eta_{c}^{\mathrm{x}}$ and $\eta^{\mathbf{y}} \equiv \eta_{s}^{\mathrm{y}} \eta_{c}^{\mathrm{y}}$.
We now bound the client variance for variables $\mathbf{x}$ and $\mathbf{y}$, as follows

\begin{lemma}\label{lemm2}
We have
$$
\begin{aligned}
& \Xi_{t+1}^{\mathrm{x}} \leq\left(1-\frac{p}{2}\right) \Xi_{t}^{\mathbf{x}}+\frac{6 K\left(\eta^{\mathbf{x}}\right)^{2} L^{2}}{p}\left(\mathcal{E}_{t}^{\mathrm{x}}+\mathcal{E}_{t}^{\mathbf{y}}\right)+\frac{6 K^{2}\left(\eta^{\mathbf{x}}\right)^{2} L^{2}}{p} \gamma_{t}^{\mathbf{x}}+K\left(\eta^{\mathbf{x}}\right)^{2} \sigma^{2} \\
& \Xi_{t+1}^{\mathrm{y}} \leq\left(1-\frac{p}{2}\right) \Xi_{t}^{\mathrm{y}}+\frac{6 K\left(\eta^{\mathbf{y}}\right)^{2} L^{2}}{p}\left(\mathcal{E}_{t}^{\mathrm{x}}+\mathcal{E}_{t}^{\mathrm{y}}\right)+\frac{6 K^{2}\left(\eta^{\mathbf{y}}\right)^{2} L^{2}}{p} \gamma_{t}^{\mathrm{y}}+K\left(\eta^{\mathbf{y}}\right)^{2} \sigma^{2}
\end{aligned}
$$
\end{lemma}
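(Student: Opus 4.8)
The plan is to track $\mathbf{X}^{(t)}-\overline{\mathbf{X}}^{(t)}$ through one mixing round, in the spirit of the single-agent gradient-tracking analysis, but carrying the extra correction term. First I would write the $\mathbf{x}$-model update in matrix form, $\mathbf{X}^{(t+1)}=\big(\mathbf{X}^{(t)}+\eta_s^{\mathbf{x}}[\mathbf{X}^{(t)+K}-\mathbf{X}^{(t)}]\big)\mathbf{W}$, and use double stochasticity ($\mathbf{W}\mathbf{J}=\mathbf{J}$) to identify $\overline{\mathbf{X}}^{(t+1)}$, so that $\mathbf{X}^{(t+1)}-\overline{\mathbf{X}}^{(t+1)}=(\mathbf{A}-\overline{\mathbf{A}})(\mathbf{W}-\mathbf{J})$ with $\mathbf{A}:=\mathbf{X}^{(t)}+\eta_s^{\mathbf{x}}[\mathbf{X}^{(t)+K}-\mathbf{X}^{(t)}]$. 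One preliminary fact I would record up front is that the tracking variable keeps zero row-average, $\overline{\mathbf{C}^{x,(t)}}=0$ for every $t$ (true at $t=0$ by the initialization, preserved because $\sum_i(\delta_{ij}-w_{ij})=0$ for each $j$); this is what makes $\mathbf{C}^{x,(t)}$ enter the recursion only through its deviation from its own average, i.e.\ through $\gamma_t^{\mathbf{x}}$. Expanding the local iteration gives $\mathbf{X}^{(t)+K}-\mathbf{X}^{(t)}=-\eta_c^{\mathbf{x}}\sum_{k=0}^{K-1}(\nabla_{\mathbf{x}}\mathbf{F}^{(t)+k}+\mathbf{C}^{x,(t)})$; writing each stochastic gradient as (true gradient at the local iterate) plus zero-mean noise $\mathbf{n}^{(t)+k}$, and the true gradient as its value at the common average $\overline{\mathbf{x}}^{(t)},\overline{\mathbf{y}}^{(t)}$ plus a drift gap $\mathbf{D}^{(t)+k}$, I can decompose $\mathbf{A}-\overline{\mathbf{A}}=\mathbf{P}-\eta^{\mathbf{x}}\sum_k(\mathbf{n}^{(t)+k}-\overline{\mathbf{n}^{(t)+k}})$, where $\eta^{\mathbf{x}}=\eta_s^{\mathbf{x}}\eta_c^{\mathbf{x}}$ and $\mathbf{P}:=(\mathbf{X}^{(t)}-\overline{\mathbf{X}}^{(t)})-\eta^{\mathbf{x}}\big[\sum_k(\mathbf{D}^{(t)+k}-\overline{\mathbf{D}^{(t)+k}})+K(\mathbf{C}^{x,(t)}+\mathbf{g}^{(t)}-\overline{\mathbf{g}^{(t)}})\big]$, with $\mathbf{g}^{(t)}$ the matrix of local gradients at the average point. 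By construction the $K$-fold correction block has squared norm $nL^2\gamma_t^{\mathbf{x}}$, $L$-smoothness (Assumption~\ref{assu2}) bounds $\mathbb{E}\|\mathbf{D}^{(t)+k}-\overline{\mathbf{D}^{(t)+k}}\|_F^2\le L^2n(e_{k,t}^{\mathbf{x}}+e_{k,t}^{\mathbf{y}})$, and Assumption~\ref{assu3} bounds each noise block by $n\sigma^2$.

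The estimate then proceeds as follows. Both $\mathbf{P}$ and the noise sum have zero row-average, so each is contracted by the factor $(1-p)$ under Assumption~\ref{assu4}. Since $\{\mathbf{n}^{(t)+k}-\overline{\mathbf{n}^{(t)+k}}\}_k$ is a martingale difference sequence (each term conditionally mean-zero given the history up to local step $k$), $\mathbb{E}\|\sum_k(\mathbf{n}^{(t)+k}-\overline{\mathbf{n}^{(t)+k}})\|_F^2=\sum_k\mathbb{E}\|\mathbf{n}^{(t)+k}-\overline{\mathbf{n}^{(t)+k}}\|_F^2\le Kn\sigma^2$ — linear in $K$, and crucially this term is kept outside the Young amplification below. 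For the $\mathbf{P}$ part I use $\mathbb{E}\|\mathbf{P}(\mathbf{W}-\mathbf{J})\|_F^2\le(1-p)\mathbb{E}\|\mathbf{P}\|_F^2$ and then Young's inequality $\|a-b\|_F^2\le(1+\tfrac{p}{2(1-p)})\|a\|_F^2+(1+\tfrac{2(1-p)}{p})\|b\|_F^2$, which turns $(1-p)$ into the claimed $(1-p/2)$ on $\Xi_t^{\mathbf{x}}$ while the remainder carries the $O(1/p)$ factor; Cauchy--Schwarz over the $K$ local steps ($\|\sum_k\cdot\|_F^2\le K\sum_k\|\cdot\|_F^2$) plus one more Young split separating drift gap from correction residual produce the coefficients $6K$ and $6K^2$. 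Dividing by $n$ gives the first inequality, and the $\mathbf{y}$-inequality is obtained verbatim — the sign flips in the $\mathbf{y}$- and $\mathbf{c}^{\mathbf{y}}$-updates are irrelevant for squared Frobenius norms.

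The main obstacle is making the noise term land at exactly $K(\eta^{\mathbf{x}})^2\sigma^2$, with no $1/p$ and no $K^2$. The linear-in-$K$ bound rests on the martingale property, but that property is only approximate relative to the rest of $\mathbf{A}-\overline{\mathbf{A}}$: the later drift matrices $\mathbf{D}^{(t)+k'}$ ($k'>k$) depend on the earlier noise $\mathbf{n}^{(t)+k}$ through the iterates, so $\mathbf{P}$ is correlated with the noise sum and the cross term $\mathbb{E}\langle\mathbf{P}(\mathbf{W}-\mathbf{J}),\,(\sum_k(\mathbf{n}^{(t)+k}-\overline{\mathbf{n}^{(t)+k}}))(\mathbf{W}-\mathbf{J})\rangle$ does not vanish. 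I would control it by bounding the sensitivity of $\mathbf{D}^{(t)+k'}$ to an earlier noise increment via the stepsize restriction $\eta_c^{\mathbf{x}}\le\frac{1}{8KL}$ (which makes each local gradient step a near-contraction in the relevant sense), showing this cross-contribution is an $O(\eta_c L)$-fraction of $Kn\sigma^2$ and hence absorbable into the noise and drift terms without disturbing the stated constants. Everything else reduces to routine assembly from the definitions of $\Xi$, $\mathcal{E}$, $\gamma$ and triangle/Young inequalities.
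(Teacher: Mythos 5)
Your proposal follows essentially the same route as the paper's proof: the same matrix-form recursion through $(\mathbf{W}-\mathbf{J})$, the zero-row-average property of the correction (so it enters only via $\gamma_t^{\mathbf{x}}$), separation of the stochastic noise kept outside the Young amplification to give $K(\eta^{\mathbf{x}})^2\sigma^2$, the $(1-p)$ mixing contraction upgraded to $(1-p/2)$ via Young with $\alpha=p/2$, and Cauchy--Schwarz over the $K$ local steps plus $L$-smoothness to produce the $\frac{6K(\eta^{\mathbf{x}})^2L^2}{p}$ and $\frac{6K^2(\eta^{\mathbf{x}})^2L^2}{p}$ coefficients. The noise-versus-drift cross-correlation you flag is a genuine subtlety, but the paper's own proof simply drops it when passing to its first inequality, so on that point your treatment is, if anything, more careful than the original argument.
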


In the upcoming we bound the quality of correction for variables $ \mathbf{x} $ and $ \mathbf{y} $

\begin{lemma}\label{lemm3}
Suppose $\eta^{\mathbf{x}}, \eta^{\mathbf{y}} \leq \frac{\sqrt{p}}{2 \sqrt{6} K L}$ we have
\begin{equation}\label{eq_lemm3}
\begin{aligned}
& \gamma_{t+1}^{\mathbf{x}} \leq\left(1-\frac{p}{2}\right) \gamma_{t}^{\mathbf{x}}+\frac{30}{p K}\left(\mathcal{E}_{t}^{\mathbf{x}}+\mathcal{E}_{t}^{\mathbf{y}}\right)+\frac{12 K^{2} L^{2}}{p}\left(2\left(\eta^{\mathbf{x}}\right)^{2}+\left(\eta^{\mathbf{y}}\right)^{2}\right) \varepsilon_{t}+\frac{24 K^{2}\left(\eta^{\mathbf{x}}\right)^{2}}{p} \mathbb{E}\left\|\nabla \Phi(\overline{\mathbf{x}}^{(t)})\right\|^{2}+\frac{2 \sigma^{2}}{K L^{2}} \\
& \gamma_{t+1}^{\mathbf{y}} \leq\left(1-\frac{p}{2}\right) \gamma_{t}^{\mathbf{y}}+\frac{30}{p K}\left(\mathcal{E}_{t}^{\mathbf{x}}+\mathcal{E}_{t}^{\mathbf{y}}\right)+\frac{12 K^{2} L^{2}}{p}\left(2\left(\eta^{\mathbf{x}}\right)^{2}+\left(\eta^{\mathbf{y}}\right)^{2}\right) \varepsilon_{t}+\frac{24 K^{2}\left(\eta^{\mathbf{x}}\right)^{2}}{p} \mathbb{E}\left\|\nabla \Phi(\overline{\mathbf{x}}^{(t)})\right\|^{2}+\frac{2 \sigma^{2}}{K L^{2}}
\end{aligned}
\end{equation}
\end{lemma}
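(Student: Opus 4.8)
\textbf{Proof strategy for Lemma~\ref{lemm3}.} The plan is to derive a one--round recursion for $\gamma^{\mathbf{x}}_t$ that contracts by a factor $(1-p/2)$ supplied by the mixing property of $\mathbf{W}$, and then read off the error terms. First I would put the tracking update in matrix form: stacking the $\mathbf{c}^{\mathbf{x},(t)}_i$ into a $d_x\times n$ matrix $\mathbf{C}^{x,(t)}$ and substituting the local--step recursion $\mathbf{x}^{(t)+K}_j-\mathbf{x}^{(t)}_j=-\eta^{\mathbf{x}}_c\sum_{k=0}^{K-1}\big(\nabla_{\mathbf{x}}F_j(\mathbf{x}^{(t)+k}_j,\mathbf{y}^{(t)+k}_j;\xi^{(t)+k}_j)+\mathbf{c}^{x,(t)}_j\big)$ into the tracking-variable update of Algorithm~\ref{algo1}, one gets $\mathbf{C}^{x,(t+1)}=\mathbf{C}^{x,(t)}\mathbf{W}-\mathbf{U}^{x,(t)}(\mathbf{I}-\mathbf{W})$, where $\mathbf{U}^{x,(t)}$ collects the per--client averaged stochastic gradients $\tfrac1K\sum_k\nabla_{\mathbf{x}}F_j(\mathbf{x}^{(t)+k}_j,\mathbf{y}^{(t)+k}_j;\xi^{(t)+k}_j)$; here one uses $(\mathbf{I}-\mathbf{W})\mathbf{1}_n=\mathbf{0}$ (so $\mathbf{C}^{x,(t)}$ stays agent--mean zero, as it is initialized) and $\mathbf{W}\mathbf{J}=\mathbf{J}\mathbf{W}=\mathbf{J}$. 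Setting $\mathbf{h}^{x,(t)}:=\mathbf{C}^{x,(t)}+\nabla_{\mathbf{x}}f(\overline{\mathbf{x}}^{(t)},\overline{\mathbf{y}}^{(t)})(\mathbf{I}-\mathbf{J})$ — so that $\gamma^{\mathbf{x}}_t=\tfrac1{nL^2}\mathbb{E}\|\mathbf{h}^{x,(t)}\|_F^2$ and $\mathbf{h}^{x,(t)}\mathbf{J}=\mathbf{0}$ — the identities $\mathbf{W}\mathbf{J}=\mathbf{J}$ collapse everything to $\mathbf{h}^{x,(t+1)}=\mathbf{h}^{x,(t)}\mathbf{W}+\big(\nabla_{\mathbf{x}}f(\overline{\mathbf{x}}^{(t)},\overline{\mathbf{y}}^{(t)})-\mathbf{U}^{x,(t)}\big)(\mathbf{I}-\mathbf{W})+\big(\nabla_{\mathbf{x}}f(\overline{\mathbf{x}}^{(t+1)},\overline{\mathbf{y}}^{(t+1)})-\nabla_{\mathbf{x}}f(\overline{\mathbf{x}}^{(t)},\overline{\mathbf{y}}^{(t)})\big)(\mathbf{I}-\mathbf{J})$.

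Next I would take squared Frobenius norms and expectations. Writing $\mathbf{U}^{x,(t)}=\overline{\mathbf{U}}^{x,(t)}+\mathbf{N}^{x,(t)}$ for the split into the true local gradients and the fresh noise, the cross term between the contractive piece $\mathbf{h}^{x,(t)}\mathbf{W}$ and $\mathbf{N}^{x,(t)}$ vanishes in expectation, since $\mathbb{E}[\mathbf{N}^{x,(t)}\mid\mathcal{F}_t]=\mathbf{0}$ by the tower property over local steps and unbiasedness (Assumption~\ref{assu3}), while $\mathbf{h}^{x,(t)}$ is $\mathcal{F}_t$--measurable. Combining the mixing bound $\|\mathbf{h}^{x,(t)}\mathbf{W}\|_F^2\le(1-p)\|\mathbf{h}^{x,(t)}\|_F^2$ (Assumption~\ref{assu4}, valid because $\mathbf{h}^{x,(t)}\mathbf{J}=\mathbf{0}$) with Young's inequality — using a parameter $\asymp p$ on the two deterministic error terms but only a parameter $\asymp 1$ on $\mathbf{N}^{x,(t)}(\mathbf{I}-\mathbf{W})$ — produces the leading $(1-p/2)\gamma^{\mathbf{x}}_t$, a term of size $\tfrac{2\sigma^2}{KL^2}$ from $\mathbb{E}\|\mathbf{N}^{x,(t)}\|_F^2\le n\sigma^2/K$ (martingale orthogonality of the $K$ local--step noises) \emph{with no} $1/p$ factor, and $O(1/p)$ multiples of $\mathbb{E}\|(\nabla_{\mathbf{x}}f(\overline{\mathbf{x}}^{(t)},\overline{\mathbf{y}}^{(t)})-\overline{\mathbf{U}}^{x,(t)})(\mathbf{I}-\mathbf{W})\|_F^2$ and $\mathbb{E}\|\nabla_{\mathbf{x}}f(\overline{\mathbf{x}}^{(t+1)},\overline{\mathbf{y}}^{(t+1)})-\nabla_{\mathbf{x}}f(\overline{\mathbf{x}}^{(t)},\overline{\mathbf{y}}^{(t)})\|_F^2$.

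I would bound the first of these by $\|\mathbf{I}-\mathbf{W}\|_{\mathrm{op}}\le2$ and $L$--smoothness (Assumption~\ref{assu2}), obtaining $\tfrac{L^2}{K}\sum_{k=0}^{K-1}\sum_j(\|\mathbf{x}^{(t)+k}_j-\overline{\mathbf{x}}^{(t)}\|^2+\|\mathbf{y}^{(t)+k}_j-\overline{\mathbf{y}}^{(t)}\|^2)$, i.e. $\tfrac{nL^2}{K}(\mathcal{E}^{\mathbf{x}}_t+\mathcal{E}^{\mathbf{y}}_t)$ in expectation — the source of the $\tfrac{30}{pK}(\mathcal{E}^{\mathbf{x}}_t+\mathcal{E}^{\mathbf{y}}_t)$ term after dividing by $nL^2$. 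For the second, $L$--smoothness bounds it by $nL^2(\|\overline{\mathbf{x}}^{(t+1)}-\overline{\mathbf{x}}^{(t)}\|^2+\|\overline{\mathbf{y}}^{(t+1)}-\overline{\mathbf{y}}^{(t)}\|^2)$; the averaged model update gives $\overline{\mathbf{x}}^{(t+1)}-\overline{\mathbf{x}}^{(t)}=-\eta^{\mathbf{x}}\sum_{k=0}^{K-1}\tfrac1n\sum_j\nabla_{\mathbf{x}}F_j(\mathbf{x}^{(t)+k}_j,\mathbf{y}^{(t)+k}_j;\xi^{(t)+k}_j)$ (the tracking terms drop because $\tfrac1n\sum_j\mathbf{c}^{x,(t)}_j=\mathbf{0}$), and I would split each summand into (a) fresh noise (variance $\le\sigma^2/n$, summed linearly over $k$ by martingale orthogonality, so of order $(\eta^{\mathbf{x}})^2K\sigma^2/n$), (b) local drift, bounded via $L$--smoothness through $e^{\mathbf{x}}_{k,t},e^{\mathbf{y}}_{k,t}$, (c) the consensus error, via $\|\nabla_{\mathbf{x}}f(\overline{\mathbf{x}}^{(t)},\overline{\mathbf{y}}^{(t)})-\nabla_{\mathbf{x}}f(\overline{\mathbf{x}}^{(t)},\widehat{\mathbf{y}}^{(t)})\|^2\le L^2\varepsilon_t$, and (d) the primal gradient, via Danskin's identity $\nabla\Phi(\overline{\mathbf{x}}^{(t)})=\nabla_{\mathbf{x}}f(\overline{\mathbf{x}}^{(t)},\widehat{\mathbf{y}}^{(t)})$ (valid under Assumption~\ref{assu2}). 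This produces exactly the $\varepsilon_t$ and $\mathbb{E}\|\nabla\Phi(\overline{\mathbf{x}}^{(t)})\|^2$ terms; the $\overline{\mathbf{y}}$--displacement is handled identically, except that in step (d) the stationarity $\nabla_{\mathbf{y}}f(\overline{\mathbf{x}}^{(t)},\widehat{\mathbf{y}}^{(t)})=\mathbf{0}$ leaves only a $\varepsilon_t$ contribution — which is why $\mathbb{E}\|\nabla\Phi\|^2$ enters both $\gamma^{\mathbf{x}}_{t+1}$ and $\gamma^{\mathbf{y}}_{t+1}$ only through the $\overline{\mathbf{x}}$--displacement (hence with an $(\eta^{\mathbf{x}})^2$ coefficient) and why the $\varepsilon_t$ coefficient carries $2(\eta^{\mathbf{x}})^2+(\eta^{\mathbf{y}})^2$. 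The hypothesis $\eta^{\mathbf{x}},\eta^{\mathbf{y}}\le\tfrac{\sqrt p}{2\sqrt6\,KL}$, i.e. $(\eta^{\mathbf{x}})^2K^2L^2\le p/24$, is precisely what makes the drift part of the second term dominated by the $\tfrac{30}{pK}(\mathcal{E}^{\mathbf{x}}_t+\mathcal{E}^{\mathbf{y}}_t)$ already present and its noise part absorbed into $\tfrac{2\sigma^2}{KL^2}$. The $\gamma^{\mathbf{y}}_{t+1}$ bound is the same computation with $\nabla_{\mathbf{y}}$ in place of $\nabla_{\mathbf{x}}$; the sign flip in the $\mathbf{y}$--update is immaterial inside squared norms.

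The main obstacle is the accounting of the round--$t$ stochastic gradients, which enter both \emph{directly} through $\mathbf{N}^{x,(t)}(\mathbf{I}-\mathbf{W})$ and \emph{nonlinearly} through $\nabla_{\mathbf{x}}f(\overline{\mathbf{x}}^{(t+1)},\overline{\mathbf{y}}^{(t+1)})$ (since $\overline{\mathbf{x}}^{(t+1)}$ depends on them): one cannot remove all of it with a single conditioning step, so the argument must verify only that the cross term with the contractive piece $\mathbf{h}^{x,(t)}\mathbf{W}$ vanishes and then absorb the remaining noise cross-terms at the cost of an absolute constant — not $1/p$ — so that the $\sigma^2$ coefficient stays $O(1/(KL^2))$. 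A secondary, purely bookkeeping difficulty is keeping the powers of $K$ and $\eta$ sharp: the noise in the $\overline{\mathbf{x}}$--displacement must be kept at order $(\eta^{\mathbf{x}})^2K\sigma^2/n$ (one factor of $K$, with the $1/n$), rather than $(\eta^{\mathbf{x}})^2K^2\sigma^2/n$, for the stepsize condition to close the recursion.
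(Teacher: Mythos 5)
Your proposal follows essentially the same route as the paper's proof: the same matrix-form tracking recursion and three-term decomposition of $\mathbf{C}^{x,(t)}+\nabla_{\mathbf{x}}f(\overline{\mathbf{x}}^{(t)},\overline{\mathbf{y}}^{(t)})(\mathbf{I}-\mathbf{J})$ into a contractive piece, a local-drift piece, and an averaged-progress piece, followed by the mixing bound with Young's parameter of order $p$, the reduction of the averaged-progress piece via smoothness, Danskin, and $\nabla_{\mathbf{y}}f(\overline{\mathbf{x}}^{(t)},\hat{\mathbf{y}}^{(t)})=\mathbf{0}$ (i.e.\ the content of Lemmas~\ref{lemm8} and~\ref{lemm9}, which you rederive), and the same stepsize absorption into the $\tfrac{30}{pK}$ and noise terms. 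Your handling of the round-$t$ stochastic noise is actually more explicit than the paper's (which informally subtracts $n\sigma^{2}/K$ upfront), at the cost of an absolute constant on the $\sigma^{2}/(KL^{2})$ term possibly larger than the stated $2$ --- a constant-level bookkeeping discrepancy that does not affect the structure or use of the lemma.
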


In the following we bound on the consensus distance for variable $\mathbf{y}$

\begin{lemma}\label{lemm4}
Suppose $\eta^{\mathbf{x}} \leq \frac{\eta^{\mathbf{y}}}{4 \sqrt{6} \kappa^{2}}$ and $\eta^{\mathbf{y}} \leq \frac{1}{K L}$ we have
$$
\varepsilon_{t+1} \leq\left(1-\frac{K \eta^{\mathbf{y}} L}{6 \kappa}\right) \varepsilon_{t}+12 \eta^{\mathbf{y}} L \kappa\left(\mathcal{E}_{t}^{\mathbf{x}}+\mathcal{E}_{t}^{\mathbf{y}}\right)+\frac{16 \kappa^{3} K\left(\eta^{\mathbf{x}}\right)^{2}}{\eta^{\mathbf{y}} L} \mathbb{E}\left\|\nabla \Phi(\overline{\mathbf{x}}^{(t)})\right\|^{2}+\frac{8 \eta^{\mathbf{y}} \kappa}{n L} \sigma^{2}
$$
\end{lemma}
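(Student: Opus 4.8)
\textbf{Proof strategy for Lemma~\ref{lemm4}.}
The plan is to run a one-step contraction argument for gradient ascent on the strongly concave inner problem, tracking separately the movement of $\overline{\mathbf{y}}^{(t)}$ and the movement of the maximizer $\hat{\mathbf{y}}^{(t)}$ induced by the primal drift. First I rewrite the averaged updates produced by one communication round of Algorithm~\ref{algo1}. Since $\mathbf{W}$ is doubly stochastic (Assumption~\ref{assu4}) and the correction variables satisfy $\tfrac1n\sum_i\mathbf{c}_i^{\mathbf{y},(t)}=0$ for every $t$ — this follows by induction from $\tfrac1n\sum_i\mathbf{c}_i^{\mathbf{y},(0)}=0$ and from $\sum_i(\delta_{ij}-w_{ij})=0$ — averaging the model-parameter update gives, with $\eta^{\mathbf{y}}=\eta_s^{\mathbf{y}}\eta_c^{\mathbf{y}}$,
$$
\overline{\mathbf{y}}^{(t+1)}=\overline{\mathbf{y}}^{(t)}+\eta^{\mathbf{y}}\sum_{k=0}^{K-1}\tfrac1n\sum_{i=1}^n\nabla_{\mathbf{y}}F_i(\mathbf{x}_i^{(t)+k},\mathbf{y}_i^{(t)+k};\xi_i^{(t)+k}),
$$
and likewise $\overline{\mathbf{x}}^{(t+1)}-\overline{\mathbf{x}}^{(t)}=-\eta^{\mathbf{x}}\sum_k\tfrac1n\sum_i\nabla_{\mathbf{x}}F_i(\cdots)$. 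I then split $\varepsilon_{t+1}=\|\overline{\mathbf{y}}^{(t+1)}-\hat{\mathbf{y}}^{(t+1)}\|^2$ by Young's inequality around $\hat{\mathbf{y}}^{(t)}$: for a parameter $\beta\asymp K\eta^{\mathbf{y}}L/\kappa$,
$$
\varepsilon_{t+1}\le(1+\beta)\,\mathbb{E}\|\overline{\mathbf{y}}^{(t+1)}-\hat{\mathbf{y}}^{(t)}\|^2+(1+\beta^{-1})\,\mathbb{E}\|\hat{\mathbf{y}}^{(t)}-\hat{\mathbf{y}}^{(t+1)}\|^2 .
$$

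For the first term I decompose $\tfrac1n\sum_i\nabla_{\mathbf{y}}F_i(\mathbf{x}_i^{(t)+k},\mathbf{y}_i^{(t)+k};\xi)=\nabla_{\mathbf{y}}f(\overline{\mathbf{x}}^{(t)},\overline{\mathbf{y}}^{(t)})+\mathbf{d}_k+\boldsymbol{\zeta}_k$, where $\mathbf{d}_k$ is the deterministic drift $\tfrac1n\sum_i[\nabla_{\mathbf{y}}f_i(\mathbf{x}_i^{(t)+k},\mathbf{y}_i^{(t)+k})-\nabla_{\mathbf{y}}f_i(\overline{\mathbf{x}}^{(t)},\overline{\mathbf{y}}^{(t)})]$ and $\boldsymbol{\zeta}_k$ the mean-zero stochastic-gradient noise; by $L$-smoothness (Assumption~\ref{assu2}) and Jensen, $\mathbb{E}\|\mathbf{d}_k\|^2\le L^2(e_{k,t}^{\mathbf{x}}+e_{k,t}^{\mathbf{y}})$, and by Assumption~\ref{assu3}, $\mathbb{E}\|\boldsymbol{\zeta}_k\|^2\le\sigma^2/n$ with $(\boldsymbol{\zeta}_k)_k$ a martingale difference sequence. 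Thus
$$
\overline{\mathbf{y}}^{(t+1)}-\hat{\mathbf{y}}^{(t)}=\Big(\overline{\mathbf{y}}^{(t)}-\hat{\mathbf{y}}^{(t)}+K\eta^{\mathbf{y}}\nabla_{\mathbf{y}}f(\overline{\mathbf{x}}^{(t)},\overline{\mathbf{y}}^{(t)})\Big)+\eta^{\mathbf{y}}\sum_k(\mathbf{d}_k+\boldsymbol{\zeta}_k),
$$
and since $f(\overline{\mathbf{x}}^{(t)},\cdot)$ is $\mu$-strongly concave and $L$-smooth with $K\eta^{\mathbf{y}}\le1/L$ (exactly the hypothesis $\eta^{\mathbf{y}}\le1/(KL)$), the standard one-step contraction of gradient ascent toward its maximizer gives $\|\overline{\mathbf{y}}^{(t)}-\hat{\mathbf{y}}^{(t)}+K\eta^{\mathbf{y}}\nabla_{\mathbf{y}}f(\overline{\mathbf{x}}^{(t)},\overline{\mathbf{y}}^{(t)})\|^2\le(1-K\eta^{\mathbf{y}}\mu)\varepsilon_t=(1-K\eta^{\mathbf{y}}L/\kappa)\varepsilon_t$. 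One more Young's split (with parameter of the same order $\asymp K\eta^{\mathbf{y}}L/\kappa$), together with $\|\sum_k\mathbf{d}_k\|^2\le K\sum_k\|\mathbf{d}_k\|^2$ and $\mathbb{E}\|\sum_k\boldsymbol{\zeta}_k\|^2\le K\sigma^2/n$ (grouping $\mathbf{d}_k+\boldsymbol{\zeta}_k$ so no cross-step correlation need be tracked), yields $\mathbb{E}\|\overline{\mathbf{y}}^{(t+1)}-\hat{\mathbf{y}}^{(t)}\|^2\le(1-\tfrac{K\eta^{\mathbf{y}}L}{2\kappa})\varepsilon_t+O(\kappa\eta^{\mathbf{y}}L)(\mathcal{E}_t^{\mathbf{x}}+\mathcal{E}_t^{\mathbf{y}})+O(\kappa\eta^{\mathbf{y}}/(nL))\sigma^2$.

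For the second term, Assumption~\ref{assu2} makes $\mathbf{x}\mapsto\hat{\mathbf{y}}(\mathbf{x})$ a $\kappa$-Lipschitz map, so $\|\hat{\mathbf{y}}^{(t)}-\hat{\mathbf{y}}^{(t+1)}\|^2\le\kappa^2\|\overline{\mathbf{x}}^{(t+1)}-\overline{\mathbf{x}}^{(t)}\|^2$; applying $\|\sum_k a_k\|^2\le K\sum_k\|a_k\|^2$, the same drift decomposition for $\mathbf{x}$, the bound $\|\nabla_{\mathbf{x}}f(\overline{\mathbf{x}}^{(t)},\overline{\mathbf{y}}^{(t)})\|^2\le2\mathbb{E}\|\nabla\Phi(\overline{\mathbf{x}}^{(t)})\|^2+2L^2\varepsilon_t$ (Danskin's identity $\nabla\Phi=\nabla_{\mathbf{x}}f(\cdot,\hat{\mathbf{y}}(\cdot))$ plus $L$-smoothness), and Assumption~\ref{assu3}, one gets $\mathbb{E}\|\overline{\mathbf{x}}^{(t+1)}-\overline{\mathbf{x}}^{(t)}\|^2\lesssim(\eta^{\mathbf{x}})^2[K^2(\mathbb{E}\|\nabla\Phi(\overline{\mathbf{x}}^{(t)})\|^2+L^2\varepsilon_t)+KL^2(\mathcal{E}_t^{\mathbf{x}}+\mathcal{E}_t^{\mathbf{y}})+K\sigma^2/n]$. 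Multiplying by $\kappa^2(1+\beta^{-1})\asymp\kappa^3/(K\eta^{\mathbf{y}}L)$ produces the $\mathbb{E}\|\nabla\Phi\|^2$-term with coefficient $O(\kappa^3K(\eta^{\mathbf{x}})^2/(\eta^{\mathbf{y}}L))$ — matching the claim — while the hypothesis $\eta^{\mathbf{x}}\le\eta^{\mathbf{y}}/(4\sqrt6\,\kappa^2)$ (hence $\kappa^4(\eta^{\mathbf{x}})^2\le(\eta^{\mathbf{y}})^2/96$) forces the resulting $\varepsilon_t$, $\mathcal{E}_t$ and $\sigma^2$ contributions from this branch to be dominated by those already present; in particular its $\varepsilon_t$ part is a small fraction of the contraction budget $K\eta^{\mathbf{y}}L/\kappa$. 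Adding the two branches, absorbing the residual $\varepsilon_t$ pieces into the leading factor, and choosing the two Young's parameters so that $(1+\beta)(1+\text{(inner)})(1-K\eta^{\mathbf{y}}L/\kappa)\le1-K\eta^{\mathbf{y}}L/(6\kappa)$ — legitimate since $K\eta^{\mathbf{y}}L/\kappa\le1/\kappa\ll1$ — gives Lemma~\ref{lemm4}. The main obstacle is purely the bookkeeping: one must verify that every error term spawned by the two Young's splittings and by the primal movement stays strictly below the one-step contraction budget $K\eta^{\mathbf{y}}L/(6\kappa)$, which is exactly where the step-size couplings $\eta^{\mathbf{x}}\le\eta^{\mathbf{y}}/(4\sqrt6\,\kappa^2)$ and $\eta^{\mathbf{y}}\le1/(KL)$ are consumed, and the constant $6$ is the slack that remains.
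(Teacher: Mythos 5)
Your proposal is correct and follows essentially the same route as the paper: the same Young's split of $\varepsilon_{t+1}$ around $\hat{\mathbf{y}}^{(t)}$ with parameter of order $K\eta^{\mathbf{y}}\mu$, the same one-step contraction of the averaged ascent step on the strongly concave inner problem (the paper packages this as Lemma~\ref{lemm10} via Proposition~\ref{prop2}), the $\kappa$-Lipschitzness of the argmax map (Proposition~\ref{prop1}) for the $\hat{\mathbf{y}}^{(t+1)}-\hat{\mathbf{y}}^{(t)}$ branch, the bound on $\mathbb{E}\|\overline{\mathbf{x}}^{(t+1)}-\overline{\mathbf{x}}^{(t)}\|^2$ through the Danskin-type inequality (Lemmas~\ref{lemm7} and~\ref{lemm9}), and the step-size couplings to absorb the residual $\varepsilon_t$ terms. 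No gaps beyond the constant-level bookkeeping you already flag, which is exactly what the paper's proof carries out with $\alpha=\beta=K\eta^{\mathbf{y}}\mu/3$.
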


We further have the following bound on the increment of $\mathbb{E} \Phi(\overline{\mathbf{x}}^{(t)})$

\begin{lemma}\label{lemm5}
Suppose $\eta^{\mathbf{x}} \leq \frac{1}{16 K L \kappa}$ we have the following
$$
\mathbb{E}\left[\Phi(\overline{\mathbf{x}}^{(t+1)})-\Phi(\overline{\mathbf{x}}^{(t)})\right] \leq-\frac{\eta^{\mathbf{x}} K}{4} \mathbb{E}\left\|\nabla \Phi(\overline{\mathbf{x}}^{(t)})\right\|^{2}+2 \eta^{\mathbf{x}} L^{2}\left(\mathcal{E}_{t}^{\mathbf{x}}+\mathcal{E}_{t}^{\mathbf{y}}\right)+2 L^{2} \eta^{\mathbf{x}} K \varepsilon_{t}+\frac{K\left(\eta^{\mathbf{x}}\right)^{2} L \kappa}{n} \sigma^{2}
$$
\end{lemma}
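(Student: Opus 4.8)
The plan is to run the textbook descent-lemma argument for the primal function $\Phi$ along the network-averaged iterate $\overline{\mathbf{x}}^{(t)}$, using the gradient-tracking structure to make the averaged dynamics clean. First I would identify the recursion satisfied by $\overline{\mathbf{x}}^{(t)}$: averaging the model-parameter update step in Algorithm~\ref{algo1} over $i$ and using that $\mathbf{W}$ is doubly stochastic (Assumption~\ref{assu4}) gives $\overline{\mathbf{x}}^{(t+1)}=\overline{\mathbf{x}}^{(t)}+\eta_s^{\mathbf{x}}(\overline{\mathbf{x}}^{(t)+K}-\overline{\mathbf{x}}^{(t)})$; averaging the tracking update and using the initialization of $\mathbf{c}_i^{\mathbf{x},(0)}$ shows $\frac1n\sum_i\mathbf{c}_i^{\mathbf{x},(t)}=\mathbf{0}$ for every $t$ (the $(\delta_{ij}-w_{ij})$ increments average to zero and the initial average vanishes), so the local corrections drop out of the mean dynamics and, with $\eta^{\mathbf{x}}=\eta_s^{\mathbf{x}}\eta_c^{\mathbf{x}}$,
\[
\overline{\mathbf{x}}^{(t+1)}=\overline{\mathbf{x}}^{(t)}-\eta^{\mathbf{x}}\sum_{k=0}^{K-1}\mathbf{g}_k^{(t)},\qquad \mathbf{g}_k^{(t)}:=\frac1n\sum_{i=1}^n\nabla_{\mathbf{x}}F_i(\mathbf{x}_i^{(t)+k},\mathbf{y}_i^{(t)+k};\xi_i^{(t)+k}),
\]
and I write $\bar{\mathbf{g}}_k^{(t)}:=\frac1n\sum_i\nabla_{\mathbf{x}}f_i(\mathbf{x}_i^{(t)+k},\mathbf{y}_i^{(t)+k})$ for the conditional mean given the local iterates.

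Next I would recall the standard facts about $\Phi$ under Assumption~\ref{assu2}: $\Phi$ is $(1+\kappa)L$-smooth, $\hat{\mathbf{y}}(\cdot)$ is $\kappa$-Lipschitz, and by Danskin's theorem $\nabla\Phi(\mathbf{x})=\nabla_{\mathbf{x}}f(\mathbf{x},\hat{\mathbf{y}}(\mathbf{x}))$, so in particular $\nabla\Phi(\overline{\mathbf{x}}^{(t)})=\frac1n\sum_i\nabla_{\mathbf{x}}f_i(\overline{\mathbf{x}}^{(t)},\hat{\mathbf{y}}^{(t)})$. Applying the quadratic upper bound of the $(1+\kappa)L$-smooth $\Phi$ at $\overline{\mathbf{x}}^{(t+1)}$ and taking expectations, I would condition local-step by local-step to replace $\mathbf{g}_k^{(t)}$ by $\bar{\mathbf{g}}_k^{(t)}$ in the inner-product term (Assumption~\ref{assu3}), then use $-\langle a,b\rangle=\tfrac12(\|a-b\|^2-\|a\|^2-\|b\|^2)$ with $a=\nabla\Phi(\overline{\mathbf{x}}^{(t)})$, $b=\bar{\mathbf{g}}_k^{(t)}$. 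This produces $-\tfrac{\eta^{\mathbf{x}}K}{2}\mathbb{E}\|\nabla\Phi(\overline{\mathbf{x}}^{(t)})\|^2$, a harmless negative term $-\tfrac{\eta^{\mathbf{x}}}{2}\sum_k\mathbb{E}\|\bar{\mathbf{g}}_k^{(t)}\|^2$, and a drift term $\tfrac{\eta^{\mathbf{x}}}{2}\sum_k\mathbb{E}\|\nabla\Phi(\overline{\mathbf{x}}^{(t)})-\bar{\mathbf{g}}_k^{(t)}\|^2$.

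The two remaining quantities are controlled by smoothness and the consensus notions. For the drift term, $L$-smoothness of each $f_i$, Jensen over $i$, and $\|\mathbf{y}_i^{(t)+k}-\hat{\mathbf{y}}^{(t)}\|^2\le 2\|\mathbf{y}_i^{(t)+k}-\overline{\mathbf{y}}^{(t)}\|^2+2\varepsilon_t$ give $\mathbb{E}\|\nabla\Phi(\overline{\mathbf{x}}^{(t)})-\bar{\mathbf{g}}_k^{(t)}\|^2\le L^2(e_{k,t}^{\mathbf{x}}+2e_{k,t}^{\mathbf{y}}+2\varepsilon_t)$, hence the sum over $k$ is at most $2L^2(\mathcal{E}_t^{\mathbf{x}}+\mathcal{E}_t^{\mathbf{y}})+2L^2K\varepsilon_t$. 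For the quadratic term $\mathbb{E}\|\sum_k\mathbf{g}_k^{(t)}\|^2$, split each summand into $\bar{\mathbf{g}}_k^{(t)}$ plus noise and apply $\|u+v\|^2\le 2\|u\|^2+2\|v\|^2$: cross-client independence and the per-local-step martingale structure give $\mathbb{E}\|\sum_k(\mathbf{g}_k^{(t)}-\bar{\mathbf{g}}_k^{(t)})\|^2=\sum_k\mathbb{E}\|\mathbf{g}_k^{(t)}-\bar{\mathbf{g}}_k^{(t)}\|^2\le K\sigma^2/n$, while Jensen together with the drift bound give $\mathbb{E}\|\sum_k\bar{\mathbf{g}}_k^{(t)}\|^2\le 2K^2\mathbb{E}\|\nabla\Phi(\overline{\mathbf{x}}^{(t)})\|^2+2K\big(2L^2(\mathcal{E}_t^{\mathbf{x}}+\mathcal{E}_t^{\mathbf{y}})+2L^2K\varepsilon_t\big)$.

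Finally I would collect terms with $L_\Phi=(1+\kappa)L\le 2\kappa L$. The $\|\nabla\Phi(\overline{\mathbf{x}}^{(t)})\|^2$-coefficient produced by the quadratic term is at most $\tfrac{L_\Phi}{2}(\eta^{\mathbf{x}})^2\cdot 4K^2\le 4\kappa L(\eta^{\mathbf{x}})^2K^2$, which is $\le\tfrac{\eta^{\mathbf{x}}K}{4}$ exactly when $\eta^{\mathbf{x}}\le\frac{1}{16KL\kappa}$; absorbing it into half of $-\tfrac{\eta^{\mathbf{x}}K}{2}\mathbb{E}\|\nabla\Phi\|^2$ leaves the stated $-\tfrac{\eta^{\mathbf{x}}K}{4}\mathbb{E}\|\nabla\Phi(\overline{\mathbf{x}}^{(t)})\|^2$. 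Under the same stepsize bound the $(\eta^{\mathbf{x}})^2$-order contributions to $\mathcal{E}_t^{\mathbf{x}}+\mathcal{E}_t^{\mathbf{y}}$ and $K\varepsilon_t$ are dominated by the $O(\eta^{\mathbf{x}})$ ones coming from the drift term, yielding $2\eta^{\mathbf{x}}L^2(\mathcal{E}_t^{\mathbf{x}}+\mathcal{E}_t^{\mathbf{y}})$ and $2L^2\eta^{\mathbf{x}}K\varepsilon_t$; the noise leaves $\tfrac{L_\Phi}{2}(\eta^{\mathbf{x}})^2\tfrac{K\sigma^2}{n}=O\big(\tfrac{K(\eta^{\mathbf{x}})^2L\kappa}{n}\sigma^2\big)$; and $-\tfrac{\eta^{\mathbf{x}}}{2}\sum_k\mathbb{E}\|\bar{\mathbf{g}}_k^{(t)}\|^2$ is simply discarded. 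The step I expect to require the most care is the stochastic bookkeeping: defining the per-local-step filtration so that the conditional-mean substitution and the orthogonality of the gradient noise across the $K$ inner steps are rigorous (in particular that the per-step variance is $\sigma^2/n$ rather than $\sigma^2$, via cross-client independence), and verifying that $\eta^{\mathbf{x}}\le\frac{1}{16KL\kappa}$ is precisely the threshold at which the $O((\eta^{\mathbf{x}}K)^2)$ term generated by the $(1+\kappa)L$-smoothness quadratic can be reabsorbed into the negative $\|\nabla\Phi\|^2$ term.
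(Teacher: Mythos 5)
Your proposal is correct and follows essentially the same route as the paper: a descent-lemma step on the $2\kappa L$-smooth $\Phi$ along the averaged iterate (whose dynamics simplify because the corrections $\mathbf{c}_i^{\mathbf{x},(t)}$ average to zero, the paper's Lemma~\ref{lemm8}), a split of the inner product through the intermediate points $(\overline{\mathbf{x}}^{(t)},\overline{\mathbf{y}}^{(t)})$ and $(\overline{\mathbf{x}}^{(t)},\hat{\mathbf{y}}^{(t)})$ bounded by $L$-smoothness, and the same drift/consensus/noise bound on $\mathbb{E}\|\overline{\mathbf{x}}^{(t+1)}-\overline{\mathbf{x}}^{(t)}\|^2$ (the paper invokes its Lemma~\ref{lemm9}, which you rederive inline), with the stepsize condition $\eta^{\mathbf{x}}\le\frac{1}{16KL\kappa}$ absorbing the $O((\eta^{\mathbf{x}}K)^2)\|\nabla\Phi\|^2$ term exactly as in the paper. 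The only differences are cosmetic (polarization identity in place of Young's inequality, an extra discarded negative term), apart from the fact that your crude $\|u+v\|^2\le 2\|u\|^2+2\|v\|^2$ split in the quadratic term doubles the $\sigma^2$ coefficient to $\frac{2K(\eta^{\mathbf{x}})^2L\kappa}{n}\sigma^2$, a constant-level looseness the paper sidesteps by treating the gradient noise as orthogonal to the mean part.
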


Let $v>1$ be a global constant to be determined in our upcoming Lyapunov analysis.
In light of Lemma~\ref{lemm5} we set the Lyapunov function as

\begin{equation}\label{eq_lemm6m}
\mathscr{H}_{t}=\mathbb{E}\left[\Phi(\overline{\mathbf{x}}^{(t)})-\Phi\left(\mathbf{x}^{*}\right)\right]+B^{\mathbf{x}} \eta_{c}^{\mathrm{y}} L \Xi_{t}^{\mathbf{x}}+B^{\mathbf{y}} \eta_{c}^{\mathrm{y}} L \Xi_{t}^{\mathrm{y}}+A^{\mathrm{x}} K^{2} L^{3}\left(\eta_{c}^{\mathrm{y}}\right)^{3} \gamma_{t}^{\mathbf{x}}+A^{\mathrm{y}} K^{2} L^{3}\left(\eta_{c}^{\mathrm{y}}\right)^{3} \gamma_{t}^{\mathrm{y}}+C \frac{1}{K \kappa p} \varepsilon_{t}
\end{equation}
Then we have the following recursion for $\mathscr{H}_{t}$ :

\begin{lemma}\label{lemm6}
Suppose $\eta_{c}^{\mathrm{x}}=\frac{\eta_{c}^{\mathrm{y}}}{\kappa^{2}}$ with $\eta_{c}^{\mathrm{y}}=\frac{p}{300 v \cdot \kappa K L}, \eta_{s}^{\mathrm{x}}=\eta_{s}^{\mathrm{y}}=v \cdot p$, then one can choose global constant $v>1$ such that $A^{\mathrm{x}}=A^{\mathrm{y}}=\frac{108 v^{3}+54 v}{p}, B^{\mathrm{x}}=B^{\mathrm{y}}=\frac{6 v}{p}$ and $C=\frac{1}{24}$ so as to ensure $C_{4}^{\mathbf{x}}=\Theta(1)$ and $C_{4}^{\mathrm{y}}=$ $O(1)$.
We have
\begin{equation}\label{eq_lemm6}
\mathscr{H}_{t+1}-\mathscr{H}_{t} \leq-C_{4}^{\mathrm{x}} K \eta^{\mathbf{x}} \mathbb{E}\left\|\nabla \Phi(\overline{\mathbf{x}}^{(t)})\right\|^{2}+\frac{C_{4}^{y}}{p}(K L)^{2}\left(\eta_{c}^{\mathrm{y}}\right)^{3} \sigma^{2}+\frac{K\left(\eta^{\mathbf{x}}\right)^{2} L \kappa}{n} \sigma^{2}+C \frac{8 \eta^{\mathbf{y}}}{n L K p} \sigma^{2}
\end{equation}
\end{lemma}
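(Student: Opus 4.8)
The plan is to substitute the five one-step recursions of Lemmas~\ref{lemm1}--\ref{lemm5} into the telescoped difference $\mathscr H_{t+1}-\mathscr H_t$ and then show that, with the prescribed step sizes and Lyapunov weights, the coefficient of every quantity except $\mathbb E\|\nabla\Phi(\overline{\mathbf x}^{(t)})\|^2$ and $\sigma^2$ is nonpositive. The first move is to eliminate the local-drift sums $\mathcal E_t^{\mathbf x},\mathcal E_t^{\mathbf y}$: adding the two bounds of Lemma~\ref{lemm1} and using $\eta_c^{\mathbf x},\eta_c^{\mathbf y}\le\frac1{8KL}$ (so the cross-coefficients $12K^2(\eta_c)^2L^2\le\frac3{16}<1$), I solve the resulting $2\times2$ system for $\mathcal E_t^{\mathbf x}+\mathcal E_t^{\mathbf y}$ and obtain a bound of the form
\[
\mathcal E_t^{\mathbf x}+\mathcal E_t^{\mathbf y}\;\le\;C_1 K\big(\Xi_t^{\mathbf x}+\Xi_t^{\mathbf y}\big)+C_1 K^3(\eta_c^{\mathbf y})^2 L^2\big(\gamma_t^{\mathbf x}+\gamma_t^{\mathbf y}+\varepsilon_t\big)+C_1 K^3(\eta_c^{\mathbf x})^2\,\mathbb E\|\nabla\Phi(\overline{\mathbf x}^{(t)})\|^2+C_1 K^2(\eta_c^{\mathbf y})^2\sigma^2
\]
for an absolute constant $C_1$. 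Substituting this into Lemmas~\ref{lemm2}--\ref{lemm5} leaves the whole increment expressed in the five potentials $\Xi_t^{\mathbf x},\Xi_t^{\mathbf y},\gamma_t^{\mathbf x},\gamma_t^{\mathbf y},\varepsilon_t$, the gradient norm $\mathbb E\|\nabla\Phi(\overline{\mathbf x}^{(t)})\|^2$, and $\sigma^2$.

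Next I do coefficient matching, potential by potential. Each of $\Xi_t^{\mathbf x},\Xi_t^{\mathbf y}$ carries the contraction $-\tfrac p2 B^{\mathbf x}\eta_c^{\mathbf y}L$ from Lemma~\ref{lemm2}, which must dominate the $3K$-weighted feedback it receives through the $\mathcal E$-substitution into Lemmas~\ref{lemm3},~\ref{lemm4},~\ref{lemm5}; each of $\gamma_t^{\mathbf x},\gamma_t^{\mathbf y}$ carries $-\tfrac p2 A^{\mathbf x}K^2L^3(\eta_c^{\mathbf y})^3$ from Lemma~\ref{lemm3}, which must dominate both the $\gamma$-feedback through $\mathcal E$ and the $\frac{6K^2(\eta^{\mathbf x})^2L^2}{p}$-feedback from Lemma~\ref{lemm2}; and $\varepsilon_t$ carries $-\frac{C}{K\kappa p}\cdot\frac{K\eta^{\mathbf y}L}{6\kappa}$ from Lemma~\ref{lemm4}, which must dominate the $\varepsilon_t$-term (coefficient $2L^2\eta^{\mathbf x}K$) of the descent Lemma~\ref{lemm5}, the $\varepsilon_t$-terms of Lemma~\ref{lemm3}, and the $\varepsilon_t$-feedback through $\mathcal E$. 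Plugging in $\eta_c^{\mathbf y}=\frac{p}{300v\kappa KL}$, $\eta_c^{\mathbf x}=\eta_c^{\mathbf y}/\kappa^2$, $\eta_s^{\mathbf x}=\eta_s^{\mathbf y}=vp$, every ``loss'' term falls below its matching contraction by an absolute constant times a negative power of $v$; one then fixes $B^{\mathbf x}=B^{\mathbf y}=\frac{6v}{p}$, $A^{\mathbf x}=A^{\mathbf y}=\frac{108v^3+54v}{p}$, $C=\frac1{24}$ (the numbers $6,54,108$ being exactly large enough to beat the surviving constants) and enlarges $v$ until all five net coefficients are $\le0$. What is left is the descent lemma's own $-\tfrac{\eta^{\mathbf x}K}{4}\mathbb E\|\nabla\Phi\|^2$ minus the gradient-norm contributions routed through $\mathcal E$, $\gamma$ and $\varepsilon$ — each of which carries an extra factor $K^2(\eta_c^{\mathbf y})^2L^2\ll1$ or $\kappa^{-2}$ and is therefore negligible — giving $-C_4^{\mathbf x}K\eta^{\mathbf x}\mathbb E\|\nabla\Phi\|^2$ with $C_4^{\mathbf x}=\Theta(1)$; collecting the surviving $\sigma^2$ terms produces $C_4^{\mathbf y}=O(1)$ and the four noise terms of~\eqref{eq_lemm6}.

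The main obstacle is the bookkeeping inside the two coupled feedback loops. The $\gamma\leftrightarrow\mathcal E$ loop ($\mathcal E_t$ feeds $\gamma_{t+1}$ with weight $\sim\frac1{pK}$, while $\gamma_t$ feeds back into $\mathcal E_t$ with weight $\sim K^3(\eta_c^{\mathbf y})^2L^2$) closes only because $K\eta_c^{\mathbf y}L=\frac{p}{300v\kappa}\le\frac p{300}$, which makes the round-trip gain $\sim\frac{K^2(\eta_c^{\mathbf y})^2L^2}{p^2}\ll1$; the $\varepsilon\leftrightarrow\mathcal E$ loop, together with the $\frac{16\kappa^3K(\eta^{\mathbf x})^2}{\eta^{\mathbf y}L}\mathbb E\|\nabla\Phi\|^2$ term of Lemma~\ref{lemm4}, is what forces the $\mathbf x$/$\mathbf y$ step-size asymmetry $\eta^{\mathbf x}=\eta^{\mathbf y}/\kappa^2$ and the $\frac{1}{K\kappa p}$ scaling of the $\varepsilon$-weight in $\mathscr H_t$. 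A prerequisite I would dispatch first is to check that the chosen step sizes meet the hypotheses of Lemmas~\ref{lemm1}--\ref{lemm5} (e.g.\ $\eta^{\mathbf x},\eta^{\mathbf y}\le\frac{\sqrt p}{2\sqrt6KL}$, $\eta^{\mathbf y}\le\frac1{KL}$, $\eta^{\mathbf x}\le\frac{\eta^{\mathbf y}}{4\sqrt6\kappa^2}$), enlarging $v$ and, if needed, the numerical prefactor in $\eta_c^{\mathbf x}$ so that these hold; after that, the remainder is a finite—if lengthy—computation.
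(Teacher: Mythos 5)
Your proposal is correct and follows essentially the same route as the paper's proof: plug the one-step recursions of Lemmas~\ref{lemm1}--\ref{lemm5} into the increment $\mathscr{H}_{t+1}-\mathscr{H}_t$, coefficient-match the potentials $\Xi_t^{\mathbf{x}},\Xi_t^{\mathbf{y}},\gamma_t^{\mathbf{x}},\gamma_t^{\mathbf{y}},\varepsilon_t$ against their contractions under the prescribed step sizes, and enlarge the global constant $v$ so that every net coefficient is nonpositive except the $\Theta(1)$-negative gradient term and the $\sigma^2$ terms. The only cosmetic difference is the treatment of the drift: you eliminate $\mathcal{E}_t^{\mathbf{x}}+\mathcal{E}_t^{\mathbf{y}}$ by solving the coupled Lemma~\ref{lemm1} system, whereas the paper keeps the drifts and instead adds the Lemma~\ref{lemm1} bounds weighted by $E^{\mathbf{x}}=E^{\mathbf{y}}=v$ as nonnegative slack, demanding the net drift coefficients $C_3^{\mathbf{x}},C_3^{\mathbf{y}}\le 0$ --- an equivalent bookkeeping device.
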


With all preliminary lemmas at hand we are ready for the final proof of our main theorem.

\paragraph{Proof of Theorem~\ref{theo1}.}
Taking telescoping sum on both sides of~\eqref{eq_lemm6} in Lemma~\ref{lemm6} for $t=0,1,\ldots, T-1$ gives
$$
\begin{aligned}
& \frac{1}{T} \sum_{t=0}^{T-1}\left(\mathscr{H}_{t+1}-\mathscr{H}_{t}\right)=\frac{1}{T}\left(\mathscr{H}_{T}-\mathscr{H}_{0}\right) \\
& \leq-C_{4}^{\mathbf{x}} K \eta^{\mathbf{x}} \frac{1}{T} \sum_{t=0}^{T-1} \mathbb{E}\left\|\nabla \Phi(\overline{\mathbf{x}}^{(t)})\right\|^{2}+\frac{C_{4}^{\mathbf{y}}}{p}(K L)^{2}\left(\eta_{c}^{\mathbf{x}}\right)^{3} \sigma^{2}+\frac{K\left(\eta^{\mathbf{x}}\right)^{2} L \kappa}{n} \sigma^{2}+C \frac{8 \eta^{\mathbf{y}}}{n L K p} \sigma^{2}
\end{aligned}
$$
yielding
$$
\begin{aligned}
& \frac{1}{T} \sum_{t=0}^{T-1} \mathbb{E}\left\|\nabla \Phi(\overline{\mathbf{x}}^{(t)})\right\|^{2} \leq \frac{\mathscr{H}_{0}-\mathscr{H}_{T}}{T C_{4}^{\mathbf{x}}} \frac{1}{K \eta^{\mathbf{x}}}+\frac{\eta^{\mathbf{x}} L \kappa}{n C_{4}^{\mathbf{x}}} \sigma^{2}+\frac{\frac{C_{4}^{y}}{p} K L^{2}\left(\eta_{c}^{\mathrm{x}}\right)^{3}}{C_{4}^{\times} \eta^{\mathbf{x}}} \sigma^{2}+C \frac{8 \eta^{\mathbf{y}}}{n L C_{4}^{\mathrm{x}} K^{2} p \eta^{\mathbf{x}}} \sigma^{2} \\
& \leq \frac{\mathscr{H}_{0}}{T C_{4}^{\mathbf{x}}} \frac{1}{K \eta^{\mathbf{x}}}+\frac{\eta^{\mathbf{x}} L \kappa}{n C_{4}^{\mathbf{x}}} \sigma^{2}+\frac{C_{4}^{\mathrm{y}} K L^{2}\left(\eta^{\mathbf{x}}\right)^{2}}{C_{4}^{\mathbf{x}} v^{3} p^{4}} \sigma^{2}+\frac{8 C \eta^{\mathbf{y}}}{n L C_{4}^{\mathbf{x}} K^{2} p \eta^{\mathbf{x}}} \sigma^{2}
\end{aligned}
$$
Given desired accuracy $\varepsilon>0$ since we want the expected squared gradient norm of the randomized output $\frac{1}{T} \sum_{t=0}^{T-1} \mathbb{E}\left\|\nabla \Phi(\overline{\mathbf{x}}^{(t)})\right\|^{2} \leq 4 \varepsilon^{2}$ we need to calibrate given the choice of $C_{4}^{\mathbf{x}}, C_{4}^{\mathbf{y}}, \eta^{\mathbf{x}}$ and $\eta^{\mathbf{y}}$.
We make $K \gtrsim \frac{\kappa}{\sqrt{n p}} \frac{\sigma}{\varepsilon}$ so the last term is bounded by $\varepsilon^{2}$.
Using stepsize tuning lemma exemplified by~\cite[Lemma 17]{koloskova2020unified} guarantees the existence of constant stepsize such that the average of accumulation of gradient is upper bounded by
$$
\frac{1}{T} \sum_{t=0}^{T-1} \mathbb{E}\left\|\nabla \Phi(\overline{\mathbf{x}}^{(t)})\right\|^{2} \leq \mathcal{O}\left(\sqrt{\frac{\sigma^{2} L \mathscr{H}_{0}}{n K T}}+\left(\frac{\sigma L \mathscr{H}_{0}}{p^{2} \sqrt{K} T}\right)^{\frac{2}{3}}+\frac{\kappa^{3} L \mathscr{H}_{0}}{p^{2} T}\right)
$$
so the complexity of communication $T$ given local steps $K$ is $\mathcal{O}\left(\frac{\sigma^{2}}{n K} \frac{1}{\varepsilon^{4}}+\frac{\sigma}{p^{2} \sqrt{K}} \frac{1}{\varepsilon^{3}}+\frac{K^{3}}{p^{2}} \frac{1}{\varepsilon^{2}}\right) \cdot L \mathscr{H} \mathscr{0}_{0}$.
Here since the initialization is shared across clients we have $\mathbf{x}^{(0)}=\mathbf{x}_{i}^{(0)}$ for all $i \in[n]$ and the way correction terms $\mathbf{c}_{i}^{\mathbf{x},(0)}, \mathbf{c}_{i}^{\mathbf{y},(0)}$ are defined ensures $\mathscr{H}_{0}=O\left(f\left(\mathbf{x}^{(0)}\right)-f\left(\mathbf{x}^{\star}\right)+\frac{\varepsilon_{0}}{K \kappa p}\right)$ and $\varepsilon_{0}=O\left(\frac{q}{\mu^{2}}\right)$.

\section{Conclusion}\label{sec4}
In conclusion, this work introduces \AlgoName, a pioneering decentralized minimax optimization algorithm for federated learning environments.
By incorporating gradient tracking and local updates, \AlgoName achieves state-of-the-art theoretical communication efficiency, showcasing its superiority over existing methods in terms of convergence rates and scalability.
We hope our work addresses the critical challenges of data heterogeneity, communication efficiency and data heterogeneity robustness in federated learning setups.


\bibliographystyle{alpha}
\bibliography{ref}

\newcommand{\etalchar}[1]{$^{#1}$}
\begin{thebibliography}{GPAM{\etalchar{+}}20}

\bibitem[B{\etalchar{+}}15]{bubeck2015convex}
S{\'e}bastien Bubeck et~al.
\newblock Convex optimization: Algorithms and complexity.
\newblock {\em Foundations and Trends{\textregistered} in Machine Learning},
  8(3-4):231--357, 2015.

\bibitem[CYL24]{chen2024efficient}
Lesi Chen, Haishan Ye, and Luo Luo.
\newblock An efficient stochastic algorithm for decentralized
  nonconvex-strongly-concave minimax optimization.
\newblock In {\em International Conference on Artificial Intelligence and
  Statistics}, pages 1990--1998. PMLR, 2024.

\bibitem[GPAM{\etalchar{+}}20]{goodfellow2020generative}
Ian Goodfellow, Jean Pouget-Abadie, Mehdi Mirza, Bing Xu, David Warde-Farley,
  Sherjil Ozair, Aaron Courville, and Yoshua Bengio.
\newblock Generative adversarial networks.
\newblock {\em Communications of the ACM}, 63(11):139--144, 2020.

\bibitem[HPMG20]{hsieh2020non}
Kevin Hsieh, Amar Phanishayee, Onur Mutlu, and Phillip Gibbons.
\newblock The non-iid data quagmire of decentralized machine learning.
\newblock In {\em International Conference on Machine Learning}, pages
  4387--4398. PMLR, 2020.

\bibitem[KLB{\etalchar{+}}20]{koloskova2020unified}
Anastasia Koloskova, Nicolas Loizou, Sadra Boreiri, Martin Jaggi, and Sebastian
  Stich.
\newblock A unified theory of decentralized {SGD} with changing topology and
  local updates.
\newblock In {\em International Conference on Machine Learning}, pages
  5381--5393. PMLR, 2020.

\bibitem[KLS21]{koloskova2021improved}
Anastasiia Koloskova, Tao Lin, and Sebastian~U Stich.
\newblock An improved analysis of gradient tracking for decentralized machine
  learning.
\newblock {\em Advances in Neural Information Processing Systems},
  34:11422--11435, 2021.

\bibitem[KMA{\etalchar{+}}21]{kairouz2021advances}
Peter Kairouz, H~Brendan McMahan, Brendan Avent, Aur{\'e}lien Bellet, Mehdi
  Bennis, Arjun~Nitin Bhagoji, Kallista Bonawitz, Zachary Charles, Graham
  Cormode, Rachel Cummings, et~al.
\newblock Advances and open problems in federated learning.
\newblock {\em Foundations and trends{\textregistered} in machine learning},
  14(1--2):1--210, 2021.

\bibitem[LCCC20]{li2020communication}
Boyue Li, Shicong Cen, Yuxin Chen, and Yuejie Chi.
\newblock Communication-efficient distributed optimization in networks with
  gradient tracking and variance reduction.
\newblock {\em Journal of Machine Learning Research}, 21(180):1--51, 2020.

\bibitem[LJJ20]{lin2020gradient}
Tianyi Lin, Chi Jin, and Michael Jordan.
\newblock On gradient descent ascent for nonconvex-concave minimax problems.
\newblock In {\em International Conference on Machine Learning}, pages
  6083--6093. PMLR, 2020.

\bibitem[LLKS24]{liu2024decentralized}
Yue Liu, Tao Lin, Anastasia Koloskova, and Sebastian~U Stich.
\newblock Decentralized gradient tracking with local steps.
\newblock {\em Optimization Methods and Software}, pages 1--28, 2024.

\bibitem[LYHZ20]{luo2020stochastic}
Luo Luo, Haishan Ye, Zhichao Huang, and Tong Zhang.
\newblock Stochastic recursive gradient descent ascent for stochastic
  nonconvex-strongly-concave minimax problems.
\newblock {\em Advances in Neural Information Processing Systems},
  33:20566--20577, 2020.

\bibitem[LZLL23]{liu2023precision}
Zhuqing Liu, Xin Zhang, Songtao Lu, and Jia Liu.
\newblock {PRECISION}: Decentralized constrained min-max learning with low
  communication and sample complexities.
\newblock In {\em Proceedings of the Twenty-fourth International Symposium on
  Theory, Algorithmic Foundations, and Protocol Design for Mobile Networks and
  Mobile Computing}, pages 191--200, 2023.

\bibitem[NAYU23]{nguyen2023performance}
Edward Duc~Hien Nguyen, Sulaiman~A Alghunaim, Kun Yuan, and C{\'e}sar~A Uribe.
\newblock On the performance of gradient tracking with local updates.
\newblock In {\em 2023 62nd IEEE Conference on Decision and Control (CDC)},
  pages 4309--4313. IEEE, 2023.

\bibitem[RBD{\etalchar{+}}24]{rogozin2024decentralized}
Alexander Rogozin, Aleksandr Beznosikov, Darina Dvinskikh, Dmitry Kovalev,
  Pavel Dvurechensky, and Alexander Gasnikov.
\newblock Decentralized saddle point problems via non-euclidean mirror prox.
\newblock {\em Optimization Methods and Software}, pages 1--26, 2024.

\bibitem[SPJ23]{sharma2023federated}
Pranay Sharma, Rohan Panda, and Gauri Joshi.
\newblock Federated minimax optimization with client heterogeneity.
\newblock {\em Transactions on machine learning research}, 2023.

\bibitem[SPJV22]{sharma2022federated}
Pranay Sharma, Rohan Panda, Gauri Joshi, and Pramod Varshney.
\newblock Federated minimax optimization: Improved convergence analyses and
  algorithms.
\newblock In {\em International Conference on Machine Learning}, pages
  19683--19730. PMLR, 2022.

\bibitem[SW22]{sun2022communication}
Zhenyu Sun and Ermin Wei.
\newblock A communication-efficient algorithm with linear convergence for
  federated minimax learning.
\newblock {\em Advances in Neural Information Processing Systems},
  35:6060--6073, 2022.

\bibitem[WXX{\etalchar{+}}23]{wang2023bose}
Lun Wang, Yang Xu, Hongli Xu, Zhida Jiang, Min Chen, Wuyang Zhang, and Chen
  Qian.
\newblock {BOSE}: Block-wise federated learning in heterogeneous edge
  computing.
\newblock {\em IEEE/ACM Transactions on Networking}, 2023.

\bibitem[XHZH21]{xian2021faster}
Wenhan Xian, Feihu Huang, Yanfu Zhang, and Heng Huang.
\newblock A faster decentralized algorithm for nonconvex minimax problems.
\newblock {\em Advances in Neural Information Processing Systems},
  34:25865--25877, 2021.

\bibitem[Xu24]{xu2024decentralized}
Yangyang Xu.
\newblock Decentralized gradient descent maximization method for composite
  nonconvex strongly-concave minimax problems.
\newblock {\em SIAM Journal on Optimization}, 34(1):1006--1044, 2024.

\bibitem[YLZL22]{yang2022sagda}
Haibo Yang, Zhuqing Liu, Xin Zhang, and Jia Liu.
\newblock {SAGDA}: Achieving $\mathcal{O}(\epsilon^{-2})$ communication
  complexity in federated min-max learning.
\newblock {\em Advances in Neural Information Processing Systems},
  35:7142--7154, 2022.

\bibitem[ZLL{\etalchar{+}}23]{zhou2023fedpage}
Guangmeng Zhou, Qi~Li, Yang Liu, Yi~Zhao, Qi~Tan, Su~Yao, and Ke~Xu.
\newblock {FedPAGE}: Pruning adaptively toward global efficiency of
  heterogeneous federated learning.
\newblock {\em IEEE/ACM Transactions on Networking}, 2023.

\bibitem[ZY19]{zhang2019decentralized}
Jiaqi Zhang and Keyou You.
\newblock Decentralized stochastic gradient tracking for non-convex empirical
  risk minimization.
\newblock {\em arXiv preprint arXiv:1909.02712}, 2019.

\bibitem[ZYG{\etalchar{+}}21]{zhang2021complexity}
Siqi Zhang, Junchi Yang, Crist{\'o}bal Guzm{\'a}n, Negar Kiyavash, and Niao He.
\newblock The complexity of nonconvex-strongly-concave minimax optimization.
\newblock In {\em Uncertainty in Artificial Intelligence}, pages 482--492.
  PMLR, 2021.

\end{thebibliography}

\newcommand{\Lyap}{\mathscr{H}}
\newcommand{\Devi}{\varepsilon}
\appendix
\section{Deferred Auxiliary Proofs}\label{sec_aux_proof}
\subsection{Proof of Lemma~\ref{lemm1}}\label{sec_proof,lemm1}
\begin{lemma}\label{lemm7}
Using Assumption~\ref{assu2} and Young's Inequality we have
$$
\mathbb{E}\left\|\nabla_{\mathbf{x}} f(\overline{\mathbf{x}}^{(t)}, \overline{\mathbf{y}}^{(t)})\right\|^{2} \leq 2 L^{2} \varepsilon_{t}+2 \mathbb{E}\left\|\nabla \Phi(\overline{\mathbf{x}}^{(t)})\right\|^{2} \quad \mathbb{E}\left\|\nabla_{\mathbf{y}} f(\overline{\mathbf{x}}^{(t)}, \overline{\mathbf{y}}^{(t)})\right\|^{2} \leq L^{2} \varepsilon_{t}
$$
\end{lemma}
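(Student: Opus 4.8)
The plan is to reduce both inequalities to three ingredients: the $L$-smoothness of the averaged objective $f=\frac1n\sum_i f_i$, the first-order optimality of $\hat{\mathbf{y}}^{(t)}$, and Danskin's theorem. First I would note that since each $f_i$ is $L$-smooth by Assumption~\ref{assu2}, Jensen's inequality gives $\|\nabla f(\mathbf{x},\mathbf{y})-\nabla f(\mathbf{x}',\mathbf{y}')\|^{2}\le L^{2}(\|\mathbf{x}-\mathbf{x}'\|^{2}+\|\mathbf{y}-\mathbf{y}'\|^{2})$, so in particular, holding $\mathbf{x}$ fixed, $\|\nabla_{\mathbf{y}} f(\mathbf{x},\mathbf{y})-\nabla_{\mathbf{y}} f(\mathbf{x},\mathbf{y}')\|\le L\|\mathbf{y}-\mathbf{y}'\|$ and likewise $\|\nabla_{\mathbf{x}} f(\mathbf{x},\mathbf{y})-\nabla_{\mathbf{x}} f(\mathbf{x},\mathbf{y}')\|\le L\|\mathbf{y}-\mathbf{y}'\|$. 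Next, because $f(\overline{\mathbf{x}}^{(t)},\cdot)$ is $\mu$-strongly concave (Assumption~\ref{assu2}), the maximizer $\hat{\mathbf{y}}^{(t)}=\arg\max_{\mathbf{v}} f(\overline{\mathbf{x}}^{(t)},\mathbf{v})$ is unique and satisfies the stationarity identity $\nabla_{\mathbf{y}} f(\overline{\mathbf{x}}^{(t)},\hat{\mathbf{y}}^{(t)})=0$, while Danskin's theorem (applicable since $f_i$ are twice differentiable, smooth and strongly concave in $\mathbf{y}$) yields $\nabla\Phi(\overline{\mathbf{x}}^{(t)})=\nabla_{\mathbf{x}} f(\overline{\mathbf{x}}^{(t)},\hat{\mathbf{y}}^{(t)})$.

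For the $\mathbf{y}$-estimate, I would use the stationarity identity to write $\nabla_{\mathbf{y}} f(\overline{\mathbf{x}}^{(t)},\overline{\mathbf{y}}^{(t)})=\nabla_{\mathbf{y}} f(\overline{\mathbf{x}}^{(t)},\overline{\mathbf{y}}^{(t)})-\nabla_{\mathbf{y}} f(\overline{\mathbf{x}}^{(t)},\hat{\mathbf{y}}^{(t)})$ and apply the $\mathbf{y}$-block Lipschitz bound, obtaining $\|\nabla_{\mathbf{y}} f(\overline{\mathbf{x}}^{(t)},\overline{\mathbf{y}}^{(t)})\|^{2}\le L^{2}\|\overline{\mathbf{y}}^{(t)}-\hat{\mathbf{y}}^{(t)}\|^{2}=L^{2}\varepsilon_{t}$; taking expectations gives the second claim. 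For the $\mathbf{x}$-estimate, I would insert $\pm\nabla_{\mathbf{x}} f(\overline{\mathbf{x}}^{(t)},\hat{\mathbf{y}}^{(t)})$ and apply Young's inequality $\|a+b\|^{2}\le 2\|a\|^{2}+2\|b\|^{2}$, bounding the first resulting term by $2L^{2}\|\overline{\mathbf{y}}^{(t)}-\hat{\mathbf{y}}^{(t)}\|^{2}=2L^{2}\varepsilon_{t}$ via the $\mathbf{x}$-block Lipschitz-in-$\mathbf{y}$ bound, and identifying the second term with $2\|\nabla\Phi(\overline{\mathbf{x}}^{(t)})\|^{2}$ by Danskin; taking expectations gives the first claim.

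The only delicate point is the use of Danskin's theorem and the differentiability of $\Phi$, but this is entirely standard under Assumptions~\ref{assu1}--\ref{assu2} and I would simply invoke it; the remainder is two short applications of the triangle/Young inequality together with Lipschitz gradients, so there is no genuine obstacle here.
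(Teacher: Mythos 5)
Your proposal is correct and follows essentially the same route as the paper's proof: insert $\pm\nabla_{\mathbf{x}} f(\overline{\mathbf{x}}^{(t)},\hat{\mathbf{y}}^{(t)})$, apply Young's inequality together with $L$-smoothness, identify $\nabla\Phi(\overline{\mathbf{x}}^{(t)})=\nabla_{\mathbf{x}} f(\overline{\mathbf{x}}^{(t)},\hat{\mathbf{y}}^{(t)})$ via Danskin, and use $\nabla_{\mathbf{y}} f(\overline{\mathbf{x}}^{(t)},\hat{\mathbf{y}}^{(t)})=0$ for the second bound. You are merely more explicit than the paper about the Danskin/stationarity ingredients, which the paper leaves implicit.
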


\begin{proof}[Proof of Lemma~\ref{lemm7}]
We have
$$
\begin{aligned}
\mathbb{E}\left\|\nabla_{\mathbf{x}} f(\overline{\mathbf{x}}^{(t)}, \overline{\mathbf{y}}^{(t)})\right\|^{2} & =\mathbb{E}\left\|\nabla_{\mathbf{x}} f(\overline{\mathbf{x}}^{(t)}, \overline{\mathbf{y}}^{(t)})-\nabla_{\mathbf{x}} f\left(\overline{\mathbf{x}}^{(t)}, \hat{\mathbf{y}}^{(t)}\right)+\nabla_{\mathbf{x}} f\left(\overline{\mathbf{x}}^{(t)}, \hat{\mathbf{y}}^{(t)}\right)\right\|^{2} \\
& \leq 2 L^{2} \mathbb{E}\left\|\overline{\mathbf{y}}^{(t)}-\hat{\mathbf{y}}^{(t)}\right\|^{2}+2 \mathbb{E}\left\|\nabla \Phi(\overline{\mathbf{x}}^{(t)})\right\|^{2}=2 L^{2} \varepsilon_{t}+2 \mathbb{E}\left\|\nabla \Phi(\overline{\mathbf{x}}^{(t)})\right\|^{2}
\end{aligned}
$$
Moreover
\begin{equation}\label{eq_proof_lemm7}
\mathbb{E}\left\|\nabla_{\mathbf{y}} f(\overline{\mathbf{x}}^{(t)}, \overline{\mathbf{y}}^{(t)})\right\|^{2}=\mathbb{E}\left\|\nabla_{\mathbf{y}} f(\overline{\mathbf{x}}^{(t)}, \overline{\mathbf{y}}^{(t)})-\nabla_{\mathbf{y}} f\left(\overline{\mathbf{x}}^{(t)}, \hat{\mathbf{y}}^{(t)}\right)\right\|^{2} \leq L^{2} \varepsilon_{t}
\end{equation}
The equality in~\eqref{eq_proof_lemm7} holds due to the fact that $\nabla_{\mathbf{y}} f(\bar{\mathbf{x}}^{(t)}, \hat{\mathbf{y}}^{(t)}) = 0$.
\end{proof}

\begin{proof}[Proof of Lemma~\ref{lemm1}]
For $K=1$ the inequalities obviously hold since $\mathcal{E}_{t}^{\mathbf{x}}=\Xi_{t}^{\mathbf{x}}=\frac{1}{n} \mathbb{E}\left\|\mathbf{X}^{(t)}-\overline{\mathbf{X}}^{(t)}\right\|_{F}^{2}$ and $\mathcal{E}_{t}^{\mathbf{y}}=\Xi_{t}^{\mathbf{y}}=\frac{1}{n} \mathbb{E}\left\|\mathbf{Y}^{(t)}-\overline{\mathbf{Y}}^{(t)}\right\|_{F}^{2}$ and other terms on the RHSs are positive.
For $K \geq 2$ we have
\begin{small}
$$\hspace{-.1in}
\begin{aligned}
& n e_{k, t}^{\mathbf{x}} \equiv \mathbb{E}\left\|\mathbf{X}^{(t)+k}-\overline{\mathbf{X}}^{(t)}\right\|_{F}^{2} \\
& =\mathbb{E}\left\|\mathbf{X}^{(t)+k-1}-\eta_{c}^{\mathbf{x}}\left(\nabla_{\mathbf{x}} F(\mathbf{X}^{(t)+k-1}, \mathbf{Y}^{(t)+k-1} ; \xi^{(t)+k-1})+\mathbf{C}^{x,(t)}\right)-\overline{\mathbf{X}}^{(t)}\right\|_{F}^{2} \\
& \leq\left(1+\frac{1}{K-1}\right) \mathbb{E}\left\|\mathbf{X}^{(t)+k-1}-\overline{\mathbf{X}}^{(t)}\right\|^{2}+n\left(\eta_{c}^{\mathbf{x}}\right)^{2} \sigma^{2} \\
& \quad+K\left(\eta_{c}^{\mathbf{x}}\right)^{2} \mathbb{E} \| \nabla_{\mathbf{x}} f(\mathbf{X}^{(t)+k-1}, \mathbf{Y}^{(t)+k-1})-\nabla_{\mathbf{x}} f(\overline{\mathbf{x}}^{(t)}, \overline{\mathbf{y}}^{(t)})+\mathbf{C}^{x,(t)} \\
& \hspace{1.1in}+\nabla_{\mathbf{x}} f(\overline{\mathbf{x}}^{(t)}, \overline{\mathbf{y}}^{(t)})(\mathbf{I}-\mathbf{J})+\nabla_{\mathbf{x}} f(\overline{\mathbf{x}}^{(t)}, \overline{\mathbf{y}}^{(t)}) \mathbf{J} \|_{F}^{2} \\
& \leq \underbrace{\left(1+\frac{1}{K-1}+4 K\left(\eta_{c}^{\mathbf{x}}\right)^{2} L^{2}\right)}_{\equiv q} \mathbb{E}\left\|\mathbf{X}^{(t)+k-1}-\overline{\mathbf{X}}^{(t)}\right\|_{F}^{2}+4 K\left(\eta_{c}^{\mathbf{x}}\right)^{2} L^{2} \mathbb{E}\left\|\mathbf{Y}^{(t)+k-1}-\overline{\mathbf{Y}}^{(t)}\right\|_{F}^{2} \\
& \quad+4 K\left(\eta_{c}^{\mathbf{x}}\right)^{2} L^{2} n \gamma_{t}^{\mathbf{x}}+2 K\left(\eta_{c}^{\mathbf{x}}\right)^{2} n \mathbb{E}\left\|\nabla_{\mathbf{x}} f(\overline{\mathbf{x}}^{(t)}, \overline{\mathbf{y}}^{(t)})\right\|^{2}+n\left(\eta_{c}^{\mathbf{x}}\right)^{2} \sigma^{2} \\
& \leq q^{k} \mathbb{E}\left\|\mathbf{X}^{(t)}-\overline{\mathbf{X}}^{(t)}\right\|_{F}^{2} \\
& \quad+\sum_{r=0}^{k-1} q^{r}\left(4 K\left(\eta_{c}^{\mathbf{x}}\right)^{2} L^{2} \mathbb{E}\left\|\mathbf{Y}^{(t)+k-1}-\overline{\mathbf{Y}}^{(t)}\right\|_{F}^{2}+4 K\left(\eta_{c}^{\mathbf{x}}\right)^{2} L^{2} n \gamma_{t}^{\mathbf{x}}+2 K\left(\eta_{c}^{\mathbf{x}}\right)^{2} n \mathbb{E}\left\|\nabla_{\mathbf{x}} f(\overline{\mathbf{x}}^{(t)}, \overline{\mathbf{y}}^{(t)})\right\|^{2}+n\left(\eta_{c}^{\mathbf{x}}\right)^{2} \sigma^{2}\right)
\end{aligned}
$$
\end{small}%
If the condition $\eta_{c}^{\mathrm{x}} \leq \frac{1}{8 K L}$ holds, then it follows that $4 K\left(\eta_{c}^{\mathrm{x}} L\right)^{2} \leq \frac{1}{16 K}<\frac{1}{16(K-1)}$.
Given $q>1$, it can be established that $q^{k} \leq q^{K} \leq\left(1+\frac{1}{K-1}+\frac{1}{16(K-1)}\right)^{K} \leq e^{1+\frac{1}{16}} \leq 3$, and $\sum_{r}^{k-1} q^{r} \leq K q^{K} \leq 3 K$.
We can obtain a bound on client drift for variable $\mathbf{x}$
\begin{small}
$$
\mathcal{E}_{t}^{\mathbf{x}}=\sum_{k=0}^{K-1} e_{k, t}^{\mathbf{x}} \leq 3 K \Xi_{t}^{\mathbf{x}}+12 K^{2}\left(\eta_{c}^{\mathbf{x}}\right)^{2} L^{2} \mathcal{E}_{t}^{\mathbf{y}}+12 K^{3}\left(\eta_{c}^{\mathbf{x}}\right)^{2} L^{2} \gamma_{t}^{\mathbf{x}}+6 K^{3}\left(\eta_{c}^{\mathbf{x}}\right)^{2} \mathbb{E}\left\|\nabla_{\mathbf{x}} f(\overline{\mathbf{x}}^{(t)}, \overline{\mathbf{y}}^{(t)})\right\|^{2}+3 K^{2}\left(\eta_{c}^{\mathbf{x}}\right)^{2} \sigma^{2}
$$
\end{small}%
A bound on client drift for variable $\mathbf{y}$ can be analogously obtained
\begin{small}
$$
\mathcal{E}_{t}^{\mathbf{y}}=\sum_{k=0}^{K-1} e_{k, t}^{\mathbf{y}} \leq 3 K \Xi_{t}^{\mathbf{y}}+12 K^{2}\left(\eta_{c}^{\mathbf{y}}\right)^{2} L^{2} \mathcal{E}_{t}^{\mathbf{x}}+12 K^{3}\left(\eta_{c}^{\mathbf{y}}\right)^{2} L^{2} \gamma_{t}^{\mathbf{y}}+6 K^{3}\left(\eta_{c}^{\mathbf{y}}\right)^{2} \mathbb{E}\left\|\nabla_{\mathbf{y}} f(\overline{\mathbf{x}}^{(t)}, \overline{\mathbf{y}}^{(t)})\right\|^{2}+3 K^{2}\left(\eta_{c}^{\mathbf{y}}\right)^{2} \sigma^{2}
$$
\end{small}%
Lemma~\ref{lemm7} along with the above two displays completes the proof.
\end{proof}

\subsection{Proof of Lemma~\ref{lemm2}}\label{sec_proof,lemm2}
\begin{proof}[Proof of Lemma~\ref{lemm2}]
Using the update rule from Algorithm~\ref{algo1} we can bound the client variance for variable $\mathbf{x}$
$$
\begin{aligned}
& n \Xi_{t+1}^{\mathbf{x}}=\mathbb{E}\left\|\mathbf{X}^{(t+1)}-\overline{\mathbf{X}}^{(t+1)}\right\|^{2} \\
= & \mathbb{E}\left\|\left(\mathbf{X}^{(t)}-\eta^{\mathbf{x}} \sum_{k=0}^{K-1}\left(\nabla_{\mathbf{x}} F\left(\mathbf{X}^{(t)+k}, \mathbf{Y}^{(t)+k} ; \xi^{(t)+k}\right)+\mathbf{C}^{x,(t)}\right)\right)(\mathbf{W}-\mathbf{J})\right\|_{F}^{2} \\
\leq & (1-p) \mathbb{E}\left\|\left(\mathbf{X}^{(t)}-\eta^{\times} \sum_{k=0}^{K-1}\left(\nabla_{\mathbf{x}} f\left(\mathbf{X}^{(t)+k}, \mathbf{Y}^{(t)+k}\right)+\mathbf{C}^{x,(t)}\right)\right)(\mathbf{I}-\mathbf{J})\right\|_{F}^{2}+n K\left(\eta^{\times}\right)^{2} \sigma^{2} \\
\leq & n K\left(\eta^{\times}\right)^{2} \sigma^{2}+(1+\alpha)(1-p) \mathbb{E}\left\|\mathbf{X}^{(t)}(\mathbf{I}-\mathbf{J})\right\|_{F}^{2} \\
& +\left(1+\frac{1}{\alpha}\right)\left(\eta^{\mathbf{x}}\right)^{2} \mathbb{E}\left\|\left[\sum_{k=0}^{K-1} \nabla_{\mathbf{x}} f\left(\mathbf{X}^{(t)+k}, \mathbf{Y}^{(t)+k}\right)-K \nabla_{\mathbf{x}} f(\overline{\mathbf{x}}^{(t)}, \overline{\mathbf{y}}^{(t)})+K \nabla_{\mathbf{x}} f(\overline{\mathbf{x}}^{(t)}, \overline{\mathbf{y}}^{(t)})\right](\mathbf{I}-\mathbf{J})+K \mathbf{C}^{x,(t)}\right\|_{F}^{2} \\
\leq & n K\left(\eta^{\times}\right)^{2} \sigma^{2}+\left(1-\frac{p}{2}\right) \mathbb{E}\left\|\mathbf{X}^{(t)}-\overline{\mathbf{X}}^{(t)}\right\|_{F}^{2} \\
& +\frac{6}{p}\left(K\left(\eta^{\mathbf{x}}\right)^{2} L^{2}\|\mathbf{I}-\mathbf{J}\|^{2}\left(\sum_{k=0}^{K-1}\left\|\mathbf{X}^{(t)+k}-\overline{\mathbf{X}}^{(t)}\right\|_{F}^{2}+\sum_{k=0}^{K-1} \mathbb{E}\left\|\mathbf{Y}^{(t)+k}-\overline{\mathbf{Y}}^{(t)}\right\|_{F}^{2}\right)\right.
\\
& \left.++K^{2}\left(\eta^{\mathbf{x}}\right)^{2} \mathbb{E}\left\|\nabla_{\mathbf{x}} f(\overline{\mathbf{x}}^{(t)}, \overline{\mathbf{y}}^{(t)})(\mathbf{I}-\mathbf{J})+\mathbf{C}^{x,(t)}\right\|_{F}^{2}\right) \\
\leq & \left(1-\frac{p}{2}\right) n \Xi_{t}^{\mathbf{x}}+\frac{6 K\left(\eta^{\mathbf{x}}\right)^{2} L^{2}}{p} n\left(\mathcal{E}_{t}^{\mathbf{x}}+\mathcal{E}_{t}^{y}\right)+\frac{6 K^{2}\left(\eta^{\mathbf{x}}\right)^{2} L^{2}}{p} n \gamma_{t}^{\mathbf{x}}+n K\left(\eta^{\times}\right)^{2} \sigma^{2}
\end{aligned}
$$
where we used Assumption~\ref{assu4} and $\alpha=\frac{p}{2}$, $p \leq 1$.
We also derive an upper bound on client variance for variable $\mathbf{y}$, completing the proof.
\end{proof}

\subsection{Proof of Lemma~\ref{lemm3}}\label{sec_proof,lemm3}
\begin{lemma}\label{lemm8}
If we initialize $\mathbf{C}^{x,(0)}$ and $\mathbf{C}^{y,(0)}$ as below
\begin{equation}\label{eq_lemm8}
\begin{aligned}
& \mathbf{c}_{i}^{\mathbf{x}(0)}=-\nabla_{\mathbf{x}} F_{i}(\mathbf{x}^{(0)}, \mathbf{y}^{(0)} ; \xi_{i})+\frac{1}{n} \sum_{j=1}^{n} \nabla_{\mathbf{x}} F_{j}(\mathbf{x}^{(0)}, \mathbf{y}^{(0)} ; \xi_{j})
\\
& \mathbf{c}_{i}^{\mathbf{y},(0)}=-\nabla_{\mathbf{y}} F_{i}(\mathbf{x}^{(0)}, \mathbf{y}^{(0)} ; \xi_{i})+\frac{1}{n} \sum_{j=1}^{n} \nabla_{\mathbf{y}} F_{j}(\mathbf{x}^{(0)}, \mathbf{y}^{(0)} ; \xi_{j})
\end{aligned}
\end{equation}
then the averaged correction for variables $\mathbf{x}$ and $\mathbf{y}$ in any communication round equals to zero.
\end{lemma}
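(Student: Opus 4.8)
The plan is to prove both identities $\frac{1}{n}\sum_{i=1}^{n}\mathbf{c}_i^{\mathbf{x},(t)}=\mathbf{0}$ and $\frac{1}{n}\sum_{i=1}^{n}\mathbf{c}_i^{\mathbf{y},(t)}=\mathbf{0}$ by induction on the communication round $t\ge 0$; since the correction variables are refreshed only once per communication round (their superscript is $(t)$, not $(t)+k$), this is the natural index. For the base case $t=0$, averaging the prescribed initialization~\eqref{eq_lemm8} over $i\in[n]$ yields
$$\frac{1}{n}\sum_{i=1}^{n}\mathbf{c}_i^{\mathbf{x},(0)}=-\frac{1}{n}\sum_{i=1}^{n}\nabla_{\mathbf{x}}F_i(\mathbf{x}^{(0)},\mathbf{y}^{(0)};\xi_i)+\frac{1}{n}\sum_{j=1}^{n}\nabla_{\mathbf{x}}F_j(\mathbf{x}^{(0)},\mathbf{y}^{(0)};\xi_j)=\mathbf{0},$$
and identically for the $\mathbf{y}$-correction: the two averages cancel term by term.

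For the inductive step, assume the averaged corrections vanish at round $t$. Averaging the tracking-variable update of Algorithm~\ref{algo1} over $i$ and swapping the order of the two finite sums gives
$$\frac{1}{n}\sum_{i=1}^{n}\mathbf{c}_i^{\mathbf{x},(t+1)}=\frac{1}{n}\sum_{i=1}^{n}\mathbf{c}_i^{\mathbf{x},(t)}+\frac{1}{nK\eta_c^{\mathbf{x}}}\sum_{j=1}^{n}\Big(\sum_{i=1}^{n}(\delta_{ij}-w_{ij})\Big)\big[\mathbf{x}_j^{(t)+K}-\mathbf{x}_j^{(t)}\big].$$
The first term is $\mathbf{0}$ by the induction hypothesis. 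For the inner coefficient, $\sum_{i=1}^{n}\delta_{ij}=1$ trivially, while $\sum_{i=1}^{n}w_{ij}=1$ by Assumption~\ref{assu4} (double stochasticity, i.e.\ every column of $\mathbf{W}$ sums to one, which is $\mathbf{W}^{\top}\mathbf{1}_n=\mathbf{1}_n$); hence $\sum_{i=1}^{n}(\delta_{ij}-w_{ij})=0$ for each $j$ and the entire increment drops out. The computation for $\mathbf{c}_i^{\mathbf{y},(t+1)}$ is verbatim the same --- the extra minus sign in its update does not affect the conclusion --- so the averaged corrections vanish at round $t+1$, closing the induction.

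An equivalent one-line phrasing that I would present as the actual argument: writing $\mathbf{C}^{x,(t)}$ for the $d_x\times n$ matrix with columns $\mathbf{c}_i^{\mathbf{x},(t)}$ and using symmetry of $\mathbf{W}$, the update reads $\mathbf{C}^{x,(t+1)}=\mathbf{C}^{x,(t)}+\frac{1}{K\eta_c^{\mathbf{x}}}\big(\mathbf{X}^{(t)+K}-\mathbf{X}^{(t)}\big)(\mathbf{I}-\mathbf{W})$; right-multiplying by $\tfrac1n\mathbf{1}_n$ and using $(\mathbf{I}-\mathbf{W})\mathbf{1}_n=\mathbf{0}$ shows that $\mathbf{C}^{x,(t)}\tfrac1n\mathbf{1}_n$ is independent of $t$, and the initialization makes it $\mathbf{0}$; the same for $\mathbf{y}$. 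There is no genuine obstacle in this lemma; the only point requiring attention is invoking the correct consequence of Assumption~\ref{assu4} --- that $\mathbf{I}-\mathbf{W}$ annihilates the all-ones vector --- so that the consensus-style averaging kills the correction increment, exactly as in the single-agent K-GT analysis.
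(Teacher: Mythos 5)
Your proof is correct and follows essentially the same route as the paper: the paper right-multiplies the matrix-form update by $\mathbf{J}=\tfrac1n\mathbf{1}_n\mathbf{1}_n^{\top}$, uses $(\mathbf{W}-\mathbf{I})\mathbf{J}=\mathbf{0}$ from double stochasticity to conclude $\mathbf{C}^{x,(t)}\mathbf{J}=\mathbf{C}^{x,(0)}\mathbf{J}=\mathbf{0}$, which is exactly your ``one-line phrasing'' (and your componentwise induction is the same argument written entrywise). No gap.
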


\begin{proof}[Proof of Lemma~\ref{lemm8}]
According to Algorithm~\ref{algo1} we have
$$
\mathbf{C}^{x,(t+1)} \mathbf{J}=\mathbf{C}^{x,(t)} \mathbf{J}+\frac{1}{K \eta_{c}^{x}}\left(\mathbf{X}^{(t)}-\mathbf{X}^{(t)+K}\right)(\mathbf{W}-\mathbf{I}) \mathbf{J}=\mathbf{C}^{x,(t)} \mathbf{J}
$$
Using the initialization assumption as in~\eqref{eq_lemm8} we have $\mathbf{C}^{x,(t)} \mathbf{J}=\mathbf{C}^{x,(0)} \mathbf{J}=\mathbf{0}$.
We analogously have $\mathbf{C}^{y,(t)} \mathbf{J}=\mathbf{C}^{y,(0)} \mathbf{J}=\mathbf{0}$.
\end{proof}

Let $\Delta_{t+1}^{\mathbf{x}} \equiv \mathbb{E}\left\|\overline{\mathbf{x}}^{(t+1)}-\overline{\mathbf{x}}^{(t)}\right\|^{2}, \Delta_{t+1}^{\mathbf{y}} \equiv \mathbb{E}\left\|\overline{\mathbf{y}}^{(t+1)}-\overline{\mathbf{y}}^{(t)}\right\|^{2}$.
We have

\begin{lemma}\label{lemm9}
The sum of averaged progress between communications for variables $\mathbf{x}$ and $\mathbf{y}$ can be bounded by
$$
\begin{aligned}
\Delta_{t+1}^{\mathrm{x}}+\Delta_{t+1}^{\mathrm{y}} \leq & 2 K L^{2}\left(\left(\eta^{\mathbf{x}}\right)^{2}+\left(\eta^{\mathbf{y}}\right)^{2}\right)\left(\mathcal{E}_{t}^{\mathbf{x}}+\mathcal{E}_{t}^{\mathbf{y}}\right)+2 K^{2} L^{2}\left(2\left(\eta^{\mathbf{x}}\right)^{2}+\left(\eta^{\mathbf{y}}\right)^{2}\right) \varepsilon_{t} \\
& +4 K^{2}\left(\eta^{\mathbf{x}}\right)^{2} \mathbb{E}\left\|\nabla \Phi\left(\bar{x}^{(t)}\right)\right\|^{2}+\frac{K \sigma^{2}}{n}\left(\left(\eta^{\mathbf{x}}\right)^{2}+\left(\eta^{\mathbf{y}}\right)^{2}\right)
\end{aligned}
$$
\end{lemma}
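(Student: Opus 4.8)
The plan is to reduce $\Delta_{t+1}^{\mathbf{x}}$ and $\Delta_{t+1}^{\mathbf{y}}$ to the drift quantities $\mathcal{E}_t^{\mathbf{x}},\mathcal{E}_t^{\mathbf{y}}$, the consensus distance $\varepsilon_t$, the target $\mathbb{E}\|\nabla\Phi(\overline{\mathbf{x}}^{(t)})\|^2$, and the variance budget, by first writing the one-round increment of the averaged iterate in closed form and then performing a noise/signal split.

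First I would compute $\overline{\mathbf{x}}^{(t+1)}-\overline{\mathbf{x}}^{(t)}$. Averaging the model-parameter update in Algorithm~\ref{algo1} over $i\in[n]$ and invoking double stochasticity of $\mathbf{W}$ (Assumption~\ref{assu4}) together with $\eta^{\mathbf{x}}=\eta_s^{\mathbf{x}}\eta_c^{\mathbf{x}}$ gives
$$
\overline{\mathbf{x}}^{(t+1)}-\overline{\mathbf{x}}^{(t)}=-\eta^{\mathbf{x}}\frac1n\sum_{i=1}^n\sum_{k=0}^{K-1}\Big(\nabla_{\mathbf{x}} F_i\big(\mathbf{x}_i^{(t)+k},\mathbf{y}_i^{(t)+k};\xi_i^{(t)+k}\big)+\mathbf{c}_i^{\mathbf{x},(t)}\Big),
$$
and by Lemma~\ref{lemm8} the average over $i$ of $\mathbf{c}_i^{\mathbf{x},(t)}$ is zero, so the correction term disappears; the analogous identity holds for $\mathbf{y}$ with the opposite sign.

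Second I would split the stochastic gradient into its conditional mean $\nabla_{\mathbf{x}} f_i$ plus a martingale-difference noise. Because a fresh sample $\xi_i^{(t)+k}$ is drawn at each local step and samples are independent across clients, the cross terms in $\mathbb{E}\|\cdot\|^2$ vanish under Assumption~\ref{assu3}, leaving a noise contribution of at most $K(\eta^{\mathbf{x}})^2\sigma^2/n$ (the $1/n$ from averaging $n$ independent client noises, the factor $K$ from summing $K$ independent local steps). For the deterministic part I would insert $\pm K\nabla_{\mathbf{x}} f(\overline{\mathbf{x}}^{(t)},\overline{\mathbf{y}}^{(t)})$, using $\frac1n\sum_i\nabla_{\mathbf{x}} f_i(\overline{\mathbf{x}}^{(t)},\overline{\mathbf{y}}^{(t)})=\nabla_{\mathbf{x}} f(\overline{\mathbf{x}}^{(t)},\overline{\mathbf{y}}^{(t)})$, apply Young's inequality with constant $2$, and bound the resulting difference term by Jensen/Cauchy--Schwarz over the $nK$ summands followed by $L$-smoothness (Assumption~\ref{assu2}), which produces $KL^2(\mathcal{E}_t^{\mathbf{x}}+\mathcal{E}_t^{\mathbf{y}})$. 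This yields a preliminary bound on $\Delta_{t+1}^{\mathbf{x}}$ in terms of $\mathcal{E}_t^{\mathbf{x}}+\mathcal{E}_t^{\mathbf{y}}$, $\mathbb{E}\|\nabla_{\mathbf{x}} f(\overline{\mathbf{x}}^{(t)},\overline{\mathbf{y}}^{(t)})\|^2$, and $\sigma^2/n$, and likewise for $\Delta_{t+1}^{\mathbf{y}}$ with $\nabla_{\mathbf{y}} f$.

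Finally I would invoke Lemma~\ref{lemm7} to replace the centred-gradient terms: $\mathbb{E}\|\nabla_{\mathbf{x}} f(\overline{\mathbf{x}}^{(t)},\overline{\mathbf{y}}^{(t)})\|^2\le 2L^2\varepsilon_t+2\mathbb{E}\|\nabla\Phi(\overline{\mathbf{x}}^{(t)})\|^2$ for the $\mathbf{x}$ piece and $\mathbb{E}\|\nabla_{\mathbf{y}} f(\overline{\mathbf{x}}^{(t)},\overline{\mathbf{y}}^{(t)})\|^2\le L^2\varepsilon_t$ for the $\mathbf{y}$ piece --- the latter being exactly why no $\nabla\Phi$ term survives in the $\mathbf{y}$ contribution, since $\nabla_{\mathbf{y}} f(\overline{\mathbf{x}}^{(t)},\hat{\mathbf{y}}^{(t)})=0$. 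Adding the two bounds and collecting like powers of $\eta^{\mathbf{x}},\eta^{\mathbf{y}},K$ gives the stated inequality. The only delicate point is the probabilistic bookkeeping in the noise split --- confirming the martingale-difference cross terms genuinely vanish despite the nested dependence of $\mathbf{x}_i^{(t)+k}$ on earlier noise, and that averaging over clients yields the $1/n$ improvement; the rest is a routine choice of Young's-inequality constants to match the numerical coefficients.
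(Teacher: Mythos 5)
Your proposal follows essentially the same route as the paper's proof: write the one-round increment of the averaged iterate in closed form (dropping the correction via Lemma~\ref{lemm8}), separate the martingale noise to get the $K(\eta^{\mathbf{x}})^2\sigma^2/n$ term, center at $\nabla f(\overline{\mathbf{x}}^{(t)},\overline{\mathbf{y}}^{(t)})$ with Young's inequality and $L$-smoothness to produce the drift terms, and finish with Lemma~\ref{lemm7} for the $\varepsilon_t$ and $\mathbb{E}\|\nabla\Phi(\overline{\mathbf{x}}^{(t)})\|^2$ terms. The argument and the resulting coefficients match the paper's derivation of \eqref{eq_proof_lemm9} and \eqref{eq_proof_lemm9prime}, so your plan is correct as stated.
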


\begin{proof}[Proof of Lemma~\ref{lemm9}]
First, we derive an upper bound on the averaged progress for variable $\mathbf{x}$ as follows
\begin{equation}\label{eq_proof_lemm9}
\begin{aligned}
& \Delta_{t+1}^{\mathbf{x}}=\mathbb{E}\left\|\overline{\mathbf{x}}^{(t+1)}-\overline{\mathbf{x}}^{(t)}\right\|^{2}=\left(\eta^{\mathbf{x}}\right)^{2} \mathbb{E}\left\|\frac{1}{n} \sum_{i, k} \nabla_{\mathbf{x}} F_{i}(\mathbf{x}_{i}^{(t)+k}, \mathbf{y}_{i}^{(t)+k} ; \xi^{(t)+k})+\frac{K}{n} \sum_{i} \mathbf{c}_{i}^{\mathbf{x},(t)}\right\|^{2} \\
& \leq \frac{2 K\left(\eta^{\mathbf{x}}\right)^{2}}{n} \sum_{i, k} \mathbb{E}\left\|\nabla_{\mathbf{x}} f_{i}(\mathbf{x}_{i}^{(t)+k}, \mathbf{y}_{i}^{(t)+k})-\nabla_{\mathbf{x}} f_{i}(\overline{\mathbf{x}}_{i}^{(t)}, \overline{\mathbf{y}}_{i}^{(t)})\right\|^{2}+2 K^{2}\left(\eta^{\mathbf{x}}\right)^{2} \mathbb{E}\left\|\nabla_{\mathbf{x}} f(\overline{\mathbf{x}}_{i}^{(t)}, \overline{\mathbf{y}}_{i}^{(t)})\right\|^{2}+\frac{K \eta_{x}^{2} \sigma^{2}}{n} \\
& \leq \frac{2 K\left(\eta^{\mathbf{x}}\right)^{2} L^{2}}{n} \sum_{i, k}\left(\mathbb{E}\left\|\mathbf{x}_{i}^{(t)+k}-\overline{\mathbf{x}}^{(t)}\right\|^{2}+\mathbb{E}\left\|\mathbf{y}_{i}^{(t)+k}-\overline{\mathbf{y}}^{(t)}\right\|^{2}\right)+2 K^{2}\left(\eta^{\mathbf{x}}\right)^{2} \mathbb{E}\left\|\nabla_{\mathbf{x}} f(\overline{\mathbf{x}}_{i}^{(t)}, \overline{\mathbf{y}}_{i}^{(t)})\right\|^{2}+\frac{K \eta_{x}^{2} \sigma^{2}}{n} \\
& \leq 2 K\left(\eta^{\mathbf{x}}\right)^{2} L^{2}\left(\mathcal{E}_{t}^{\mathbf{x}}+\mathcal{E}_{t}^{\mathbf{y}}\right)+2 K^{2}\left(\eta^{\mathbf{x}}\right)^{2}\left(2 L^{2} \varepsilon_{t}+2 \mathbb{E}\left\|\nabla \Phi(\overline{\mathbf{x}}^{(t)})\right\|^{2}\right)+\frac{K \eta_{x}^{2} \sigma^{2}}{n}
\end{aligned}
\end{equation}
Analogous to the above derivation we have
\begin{equation}\label{eq_proof_lemm9prime}
\begin{aligned}
& \Delta_{t+1}^{\mathbf{y}}=\mathbb{E}\left\|\overline{\mathbf{y}}^{(t+1)}-\overline{\mathbf{y}}^{(t)}\right\|^{2} \leq 2 K^{2}\left(\eta^{\mathbf{y}}\right)^{2} L^{2}\left(\mathcal{E}_{t}^{\mathbf{x}}+\mathcal{E}_{t}^{\mathbf{y}}\right)+2 K^{2}\left(\eta^{\mathbf{y}}\right)^{2} \mathbb{E}\left\|\nabla_{\mathbf{y}} f(\overline{\mathbf{x}}_{i}^{(t)}, \overline{\mathbf{y}}_{i}^{(t)})\right\|^{2}+\frac{K\left(\eta^{\mathbf{y}}\right)^{2} \sigma^{2}}{n} \\
& \leq 2 K\left(\eta^{\mathbf{y}}\right)^{2} L^{2}\left(\mathcal{E}_{t}^{\mathbf{x}}+\mathcal{E}_{t}^{\mathbf{y}}\right)+2 K^{2}\left(\eta^{\mathbf{y}}\right)^{2} L^{2} \varepsilon_{t}+\frac{K\left(\eta^{y}\right)^{2} \sigma^{2}}{n}
\end{aligned}
\end{equation}
where we applied Lemmas~\ref{lemm1} and~\ref{lemm8}. Combining~\eqref{eq_proof_lemm9} and~\eqref{eq_proof_lemm9prime} completes the proof.
\end{proof}

\begin{proof}[Proof of Lemma~\ref{lemm3}]
We have
$$
\begin{aligned}
& n L^{2} \gamma_{t+1}^{\mathbf{x}}-\frac{n \sigma^{2}}{K} \\
& \equiv \mathbb{E}\left\|\mathbf{C}^{x,(t+1)}+\nabla_{\mathbf{x}} f\left(\overline{\mathbf{X}}^{(t+1)}, \overline{\mathbf{Y}}^{(t+1)}\right)(\mathbf{I}-\mathbf{J})\right\|_{F}^{2}-\frac{n \sigma^{2}}{K} \\
& =\mathbb{E}\left\|\mathbf{C}^{x,(t)} \mathbf{W}+\frac{1}{K} \sum_{k=0}^{K-1} \nabla_{\mathbf{x}} F\left(\mathbf{X}^{(t)+k}, \mathbf{Y}^{(t)+k} ; \xi^{(t)+k}\right)(\mathbf{W}-\mathbf{I})+\nabla_{\mathbf{x}} f\left(\overline{\mathbf{X}}^{(t+1)}, \overline{\mathbf{Y}}^{(t+1)}\right)(\mathbf{I}-\mathbf{J})\right\|_{F}^{2}-\frac{n \sigma^{2}}{K} \\
& \leq \mathbb{E} \|\left(\mathbf{C}^{x,(t)}+\nabla_{\mathbf{x}} f(\overline{\mathbf{x}}^{(t)}, \overline{\mathbf{y}}^{(t)})(\mathbf{I}-\mathbf{J})\right) \mathbf{W}+\left(\frac{1}{K} \sum_{k=0}^{K-1} \nabla_{\mathbf{x}} f\left(\mathbf{X}^{(t)+k}, \mathbf{Y}^{(t)+k}\right)-\nabla_{\mathbf{x}} f(\overline{\mathbf{x}}^{(t)}, \overline{\mathbf{y}}^{(t)})\right)(\mathbf{W}-\mathbf{I}) \\
& +\left(\nabla_{\mathbf{x}} f\left(\overline{\mathbf{X}}^{(t+1)}, \overline{\mathbf{Y}}^{(t+1)}\right)-\nabla_{\mathbf{x}} f(\overline{\mathbf{x}}^{(t)}, \overline{\mathbf{y}}^{(t)})\right)(\mathbf{I}-\mathbf{J}) \|_{F}^{2} \\
& \leq(1+\alpha)(1-p) n L^{2} \gamma_{t}^{\mathbf{x}} \quad \\
& +2\left(1+\frac{1}{\alpha}\right)\left[\|\mathbf{W}-\mathbf{I}\|^{2} \frac{L^{2}}{K} \sum_{k=0}^{K-1}\left(\mathbb{E}\left\|\mathbf{X}^{(t)+k}-\overline{\mathbf{X}}^{(t)}\right\|^{2}+\mathbb{E}\left\|\mathbf{Y}^{(t)+k}-\overline{\mathbf{Y}}^{(t)}\right\|^{2}\right)\right.
\\
& \left.+\|\mathbf{I}-\mathbf{J}\|^{2} n L^{2}\left(\mathbb{E}\left\|\overline{\mathbf{x}}^{(t+1)}-\overline{\mathbf{x}}^{(t)}\right\|^{2}+\mathbb{E}\left\|\overline{\mathbf{y}}^{(t+1)}-\overline{\mathbf{y}}^{(t)}\right\|^{2}\right)\right] \\
& \leq\left(1-\frac{p}{2}\right) n L^{2} \gamma_{t}^{\mathrm{x}}+\frac{6}{p}\left(\frac{4 L^{2} n}{K}\left(\mathcal{E}_{t}^{\mathrm{x}}+\mathcal{E}_{t}^{\mathrm{y}}\right)+n L^{2}\left(\Delta_{t+1}^{\mathrm{x}}+\Delta_{t+1}^{\mathrm{y}}\right)\right)
\end{aligned}
$$
where we applied $\alpha=\frac{p}{2}$, $\frac{1}{p} \geq 1$ and Assumption~\ref{assu4}, along with the fact that
$$
\left(\mathbf{C}^{x,(t)}+\nabla_{\mathbf{x}} f(\overline{\mathbf{x}}^{(t)}, \overline{\mathbf{y}}^{(t)})(\mathbf{I}-\mathbf{J})\right) \mathbf{J}=\mathbf{C}^{x,(t)} \mathbf{J}+\nabla_{\mathbf{x}} f(\overline{\mathbf{x}}^{(t)}, \overline{\mathbf{y}}^{(t)})(\mathbf{J}-\mathbf{J})=\mathbf{0}
$$
where in the last equality we used Lemma~\ref{lemm8}.
Using Lemma~\ref{lemm9} to bound $\Delta_{t+1}^{\mathrm{x}}+\Delta_{t+1}^{\mathrm{y}}$ we have
$$
\begin{aligned}
\gamma_{t+1}^{\mathbf{x}} \leq & \left(1-\frac{p}{2}\right) \gamma_{t}^{\mathbf{x}}+\frac{1}{p}\left(\frac{24}{K}+12 K\left(\eta^{\mathbf{x}}\right)^{2} L^{2}+12 K\left(\eta^{\mathbf{y}}\right)^{2} L^{2}\right)\left(\mathcal{E}_{t}^{\mathbf{x}}+\mathcal{E}_{t}^{\mathbf{y}}\right) \\
& +\frac{12 K^{2} L^{2}}{p}\left(2\left(\eta^{\mathbf{x}}\right)^{2}+\left(\eta^{\mathbf{y}}\right)^{2}\right) \varepsilon_{t}+\frac{24 K^{2}\left(\eta^{\mathbf{x}}\right)^{2}}{p} \mathbb{E}\left\|\nabla \Phi(\overline{\mathbf{x}}^{(t)})\right\|^{2}+\frac{6 K \sigma^{2}\left(\left(\eta^{\mathbf{x}}\right)^{2}+\left(\eta^{\mathbf{y}}\right)^{2}\right)}{n p}+\frac{\sigma^{2}}{K L^{2}}
\end{aligned}
$$
Conditions on the step sizes yields~\eqref{eq_lemm3} for $\mathbf{x}$, and analogously for $\mathbf{y}$.
\end{proof}

\subsection{Proof of Lemma~\ref{lemm4}}
Using Proposition~\ref{prop2} we have the following bound
\begin{lemma}\label{lemm10}
Assuming $\eta^{y} \leq \frac{1}{K L}$, then for any $\alpha>0$
$$
\mathbb{E}\left\|\hat{\mathbf{y}}^{(t)}-\overline{\mathbf{y}}^{(t+1)}\right\|^{2} \leq(1+\alpha)\left(1-K \eta^{\mathbf{y}} \mu\right) \varepsilon_{t}+\left(1+\frac{1}{\alpha}\right)\left(\eta^{\mathbf{y}}\right)^{2} L^{2} K\left(\mathcal{E}^{\mathbf{x}}+\mathcal{E}^{\mathbf{y}}\right)+\frac{K \eta_{y}^{2} \sigma^{2}}{n}
$$
\end{lemma}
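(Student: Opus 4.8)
The plan is to reduce the claim to a single inexact gradient-ascent step on the $\mu$-strongly-concave, $L$-smooth map $\mathbf{y}\mapsto f(\overline{\mathbf{x}}^{(t)},\mathbf{y})$, invoke Proposition~\ref{prop2}, and then bound the perturbation and noise terms it produces.

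First I would compute the averaged dynamics of $\overline{\mathbf{y}}$ over one communication round. Averaging the model-parameter update for $\mathbf{y}$ in Algorithm~\ref{algo1} over $i\in[n]$ and using that $\mathbf{W}$ is doubly stochastic removes the mixing step, and unrolling the $K$ local $\mathbf{y}$-updates together with Lemma~\ref{lemm8} (which gives $\frac1n\sum_i\mathbf{c}_i^{\mathbf{y},(t)}=\mathbf{0}$) yields
$$
\overline{\mathbf{y}}^{(t+1)}=\overline{\mathbf{y}}^{(t)}+\eta^{\mathbf{y}}\sum_{k=0}^{K-1}\frac1n\sum_{i=1}^{n}\nabla_{\mathbf{y}}F_i\bigl(\mathbf{x}_i^{(t)+k},\mathbf{y}_i^{(t)+k};\xi_i^{(t)+k}\bigr).
$$
Thus $\overline{\mathbf{y}}^{(t+1)}$ is obtained from $\overline{\mathbf{y}}^{(t)}$ by moving along the aggregated stochastic gradient with effective stepsize $\eta^{\mathbf{y}}$ over $K$ steps. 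Writing the aggregate as $K\nabla_{\mathbf{y}}f(\overline{\mathbf{x}}^{(t)},\overline{\mathbf{y}}^{(t)})$, plus the deterministic \emph{client drift} perturbation $\sum_k\bigl(\frac1n\sum_i\nabla_{\mathbf{y}}f_i(\mathbf{x}_i^{(t)+k},\mathbf{y}_i^{(t)+k})-\nabla_{\mathbf{y}}f(\overline{\mathbf{x}}^{(t)},\overline{\mathbf{y}}^{(t)})\bigr)$, plus conditionally mean-zero stochastic noise (the averaged $\nabla_{\mathbf{y}}F_i-\nabla_{\mathbf{y}}f_i$ terms, each with second moment $\le\sigma^2$ by Assumption~\ref{assu3} and independent across $i$, hence $\le\sigma^2/n$ per step), puts this exactly in the form treated by Proposition~\ref{prop2}.

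Next, since $\hat{\mathbf{y}}^{(t)}=\arg\max_{\mathbf{y}}f(\overline{\mathbf{x}}^{(t)},\mathbf{y})$ and the effective stepsize satisfies $K\eta^{\mathbf{y}}\le 1/L$ under the hypothesis $\eta^{\mathbf{y}}\le\frac1{KL}$, Proposition~\ref{prop2} gives, for every $\alpha>0$, the contraction $(1+\alpha)(1-K\eta^{\mathbf{y}}\mu)\varepsilon_t$ on the strongly-concave part, a $(1+1/\alpha)(\eta^{\mathbf{y}})^2$-weighted contribution from the squared client-drift perturbation, and the noise contribution $K(\eta^{\mathbf{y}})^2\sigma^2/n$. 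It then remains only to bound the squared perturbation: applying Jensen's inequality over the $K$ local steps and over the $n$ clients, and then $L$-smoothness of each $f_i$ (Assumption~\ref{assu2}), bounds it by $L^2K\sum_{k=0}^{K-1}\frac1n\sum_i\bigl(\mathbb{E}\|\mathbf{x}_i^{(t)+k}-\overline{\mathbf{x}}^{(t)}\|^2+\mathbb{E}\|\mathbf{y}_i^{(t)+k}-\overline{\mathbf{y}}^{(t)}\|^2\bigr)=L^2K(\mathcal{E}_t^{\mathbf{x}}+\mathcal{E}_t^{\mathbf{y}})$, which is exactly the middle term of the claimed inequality. Collecting the three pieces gives Lemma~\ref{lemm10}.

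I expect the only delicate point to be the handling of the stochastic noise accumulated over the $K$ local steps inside Proposition~\ref{prop2}: because $\mathbf{x}_i^{(t)+k},\mathbf{y}_i^{(t)+k}$ depend on the earlier samples $\xi_i^{(t)+k'}$ with $k'<k$, one cannot naively separate mean and noise and discard the cross terms, and a nested-conditioning argument along the local trajectory is needed to show the noise contributes only an $O(K\sigma^2/n)$ term. Once Proposition~\ref{prop2} is invoked, the remaining steps — averaging, Jensen, and smoothness — are routine.
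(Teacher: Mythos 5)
Your proposal follows essentially the same route as the paper's proof: average the $\mathbf{y}$-update over clients using Lemma~\ref{lemm8}, isolate the exact full-gradient step $K\eta^{\mathbf{y}}\nabla_{\mathbf{y}}f(\overline{\mathbf{x}}^{(t)},\overline{\mathbf{y}}^{(t)})$ whose $(1-K\eta^{\mathbf{y}}\mu)$ contraction toward $\hat{\mathbf{y}}^{(t)}$ comes from Proposition~\ref{prop2} under $K\eta^{\mathbf{y}}\le 1/L$, apply Young's inequality with parameter $\alpha$ to the client-drift perturbation bounded via smoothness and Jensen by $L^{2}K(\mathcal{E}_t^{\mathbf{x}}+\mathcal{E}_t^{\mathbf{y}})$, and account for the stochastic noise by $K(\eta^{\mathbf{y}})^{2}\sigma^{2}/n$. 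The conditioning subtlety you flag for the accumulated noise is treated only implicitly in the paper (the cross term is simply dropped), so your remark is a point of extra care rather than a different approach.
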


\begin{proof}[Proof of Lemma~\ref{lemm10}]
If we replace $\mathbf{x}=\overline{\mathbf{x}}^{(t)}$, $\mathbf{y}=\overline{\mathbf{y}}^{(t)}$, and $\mathbf{y}^{\prime}=\hat{\mathbf{y}}^{(t)}$ in Proposition~\ref{prop2}, we have
\begin{equation}\label{eq_lemm10}
\nabla_{\mathbf{y}} f(\overline{\mathbf{x}}^{(t)}, \overline{\mathbf{y}}^{(t)})^{\top}\left(\overline{\mathbf{y}}^{(t)}-\hat{\mathbf{y}}^{(t)}\right)+\frac{1}{2 L}\left\|\nabla_{\mathbf{y}} f(\overline{\mathbf{x}}^{(t)}, \overline{\mathbf{y}}^{(t)})\right\|^{2}+\frac{\mu}{2}\left\|\overline{\mathbf{y}}^{(t)}-\hat{\mathbf{y}}^{(t)}\right\|^{2} \leq 0
\end{equation}
We also have
$$
\begin{aligned}
& \mathbb{E}\left\|\hat{\mathbf{y}}^{(t)}-\overline{\mathbf{y}}^{(t)}-K \eta^{\mathbf{y}} \nabla_{\mathbf{y}} f(\overline{\mathbf{x}}^{(t)}, \overline{\mathbf{y}}^{(t)})\right\|^{2} \\
& =\mathbb{E}\left\|\hat{\mathbf{y}}^{(t)}-\overline{\mathbf{y}}^{(t)}\right\|^{2}-2 K \eta y \mathbb{E}\left\langle\hat{\mathbf{y}}^{(t)}-\overline{\mathbf{y}}^{(t)}, \nabla y f(\overline{\mathbf{x}}^{(t)}, \overline{\mathbf{y}}^{(t)})\right\rangle+K^{2}\left(\eta^{\mathbf{y}}\right)^{2} \mathbb{E}\left\|\nabla_{\mathbf{y}} f(\overline{\mathbf{x}}^{(t)}, \overline{\mathbf{y}}^{(t)})\right\|^{2} \\
& =\mathbb{E}\left\|\hat{\mathbf{y}}^{(t)}-\overline{\mathbf{y}}^{(t)}\right\|^{2}+2 K \eta^{\mathbf{y}}\left(\mathbb{E}\left\langle\overline{\mathbf{y}}^{(t)}-\hat{\mathbf{y}}(t), \nabla_{\mathbf{y}} f(\overline{\mathbf{x}}^{(t)}, \overline{\mathbf{y}}^{(t)})\right\rangle+\frac{K \eta^{\mathbf{y}}}{2} \mathbb{E}\left\|\nabla_{\mathbf{y}} f(\overline{\mathbf{x}}^{(t)}, \overline{\mathbf{y}}^{(t)})\right\|^{2}\right) \\
& \leq \mathbb{E}\left\|\hat{\mathbf{y}}^{(t)}-\overline{\mathbf{y}}^{(t)}\right\|^{2}+2 K \eta^{\mathbf{y}}\left(-\frac{\mu}{2} \mathbb{E}\left\|\hat{\mathbf{y}}^{(t)}-\overline{\mathbf{y}}^{(t)}\right\|^{2}\right)=\left(1-K \eta^{\mathbf{y}} \mu\right) \varepsilon_{t}
\end{aligned}
$$
where we used $\leq \frac{1}{K\Lip} $ and~\eqref{eq_lemm10}.
We have
$$
\begin{aligned}
& \mathbb{E}\left\|\hat{\mathbf{y}}^{(t)}-\overline{\mathbf{y}}^{(t+1)}\right\|^{2}-\frac{K \eta_{y}^{2} \sigma^{2}}{n}=\mathbb{E}\left\|\hat{\mathbf{y}}^{(t)}-\overline{\mathbf{y}}^{(t)}-\frac{\eta^{\mathbf{y}}}{n} \sum_{i, k} \nabla_{\mathbf{y}} F_{i}(\mathbf{x}_{i}^{(t)+k}, \mathbf{y}_{i}^{(t)+k} ; \xi^{(t)+k})\right\|^{2}-\frac{K \eta_{y}^{2} \sigma^{2}}{n} \\
& \leq \mathbb{E}\left\|\hat{\mathbf{y}}^{(t)}-\overline{\mathbf{y}}^{(t)}-K \eta^{\mathbf{y}} \nabla_{\mathbf{y}} f(\overline{\mathbf{x}}^{(t)}, \overline{\mathbf{y}}^{(t)})-\frac{\eta^{\mathbf{y}}}{n} \sum_{i, k} \nabla_{\mathbf{y}} f_{i}(\mathbf{x}_{i}^{(t)+k}, \mathbf{y}_{i}^{(t)+k})+\frac{\eta^{y}}{n} \sum_{i, k} \nabla_{\mathbf{y}} f_{i}(\overline{\mathbf{x}}^{(t)}, \overline{\mathbf{y}}^{(t)})\right\|^{2} \\
& \leq(1+\alpha) \mathbb{E}\left\|\hat{\mathbf{y}}^{(t)}-\overline{\mathbf{y}}^{(t)}-K \eta^{\mathbf{y}} \nabla_{\mathbf{y}} f(\overline{\mathbf{x}}^{(t)}, \overline{\mathbf{y}}^{(t)})\right\|^{2} \\
&+\left(1+\frac{1}{\alpha}\right) \frac{\left(\eta^{\mathbf{y}}\right)^{2} K}{n} \sum_{i, k} \mathbb{E}\left\|\nabla_{\mathbf{y}} f_{i}\left(\mathbf{x}_{i}^{(t)+k}, y_{i}^{(t)+k}\right)-\nabla_{\mathbf{y}} f_{i}(\overline{\mathbf{x}}^{(t)}, \overline{\mathbf{y}}^{(t)})\right\|^{2} \\
& \leq(1+\alpha)\left(1-K \eta^{\mathbf{y}} \mu\right) \varepsilon_{t}+\left(1+\frac{1}{\alpha}\right)\left(\eta^{\mathbf{y}}\right)^{2} L^{2} K\left(\mathcal{E}^{\mathbf{x}}+\mathcal{E}^{\mathbf{y}}\right)
\end{aligned}
$$
where we used Lemma~\ref{lemm8} i.e., $\frac{1}{n} \sum_{i} \mathbf{c}_{i}^{\mathbf{y},(t)}=\mathbf{0}$.
\end{proof}

\begin{proof}[Proof of Lemma~\ref{lemm4}]
We have
$$
\begin{aligned}
\varepsilon_{t+1} 
& \leq(1+\beta) \mathbb{E}\left\|\hat{\mathbf{y}}^{(t)}-\overline{\mathbf{y}}^{(t+1)}\right\|^{2}+\left(1+\frac{1}{\beta}\right) \mathbb{E}\left\|\hat{\mathbf{y}}^{(t+1)}-\hat{\mathbf{y}}^{(t)}\right\|^{2} \\
& \leq(1+\beta)(1+\alpha)\left(1-K \eta^{\mathbf{y}} \mu\right) \varepsilon_{t} \\
&+(1+\beta)\left(1+\frac{1}{\alpha}\right)\left(\eta^{\mathbf{y}}\right)^{2} L^{2} K\left(\mathcal{E}_{t}^{\mathbf{x}}+\mathcal{E}_{t}^{\mathbf{y}}\right)+\left(1+\frac{1}{\beta}\right) \kappa^{2} \mathbb{E}\left\|\overline{\mathbf{x}}^{(t+1)}-\overline{\mathbf{x}}^{(t)}\right\|^{2}+(1+\beta) \frac{K\left(\eta^{\mathbf{y}}\right)^{2} \sigma^{2}}{n} \\
& \leq\left(1-\frac{K \eta^{\mathbf{y}} \mu}{3}\right) \varepsilon_{t}+\frac{6 \eta^{\mathbf{y}} L^{2}}{\mu}\left(\mathcal{E}_{t}^{\mathbf{x}}+\mathcal{E}_{t}^{\mathbf{y}}\right)+\frac{4 \eta^{\mathbf{y}} \sigma^{2}}{n \mu} \\
&+\frac{4 \kappa^{2}}{K \eta^{\mathbf{y}} \mu}\left(2 K\left(\eta^{\mathbf{x}}\right)^{2} L^{2}\left(\mathcal{E}_{t}^{\mathbf{x}}+\mathcal{E}_{t}^{\mathbf{y}}\right)+4 K^{2} L^{2}\left(\eta^{\mathbf{x}}\right)^{2} \varepsilon_{t}+4 K^{2}\left(\eta^{\mathbf{x}}\right)^{2} \mathbb{E}\|\nabla \Phi(\overline{\mathbf{x}}(t))\|^{2}+\frac{K\left(\eta^{\mathbf{x}}\right)^{2} \sigma^{2}}{n}\right) \\
&=\left(1-\frac{K \eta^{y} L}{3 \kappa}+\frac{16 L \kappa^{3} K\left(\eta^{\mathbf{x}}\right)^{2}}{\eta^{\mathbf{y}}}\right) \varepsilon_{t} \\
&+\left(\frac{8 L \kappa^{3}\left(\eta^{\mathbf{x}}\right)^{2}}{\eta^{y}}+6 \eta^{\mathbf{y}} L \kappa\right)\left(\mathcal{E}_{t}^{\mathbf{x}}+\mathcal{E}_{t}^{\mathbf{y}}\right)+\frac{16 \kappa^{3} K\left(\eta^{\mathbf{x}}\right)^{2}}{\eta^{y} L} \mathbb{E}\left\|\nabla \Phi(\overline{\mathbf{x}}^{(t)})\right\|^{2}+\frac{4 \kappa^{3}\left(\eta^{\mathbf{x}}\right)^{2} \sigma^{2}}{n \eta^{y} L}+\frac{4 \eta^{\mathbf{y}} \sigma^{2} \kappa}{n L}
\end{aligned}
$$
where we used the bound in Lemma~\ref{lemm10} for the first term, Proposition~\ref{prop1} for the second term, replaced $ \alpha = \beta = \frac{K\eta^{\mathbf{y}} \mu}{3} $ and used~\eqref{eq_proof_lemm9} in Lemma~\ref{lemm9}.
Seeing $\eta^{\mathbf{x}} \leq \frac{\eta^{y}}{4 \sqrt{6} \kappa^{2}}$ completes the proof.
\end{proof}

\subsection{Proof of Lemma~\ref{lemm5}}
\begin{proof}[Proof of Lemma~\ref{lemm5}]
Proposition~\ref{prop1} indicates that $\Phi(\cdot)$ is $2 \kappa L$-smooth, and hence yields
$$
\begin{aligned}
& \Phi(\overline{\mathbf{x}}^{(t+1)})=\Phi\left(\overline{\mathbf{x}}^{(t)}-\frac{\eta^{\mathbf{x}}}{n} \sum_{i, k}\left(\nabla_{\mathbf{x}} F_{i}\left(\mathbf{x}_{i}^{(t)+k}, \mathbf{y}_{i}^{(t)+k} ; \xi_{i}^{(t)+k}\right)+\mathbf{c}_{i}^{\mathbf{x}(t)}\right)\right) \\
& \leq \Phi(\overline{\mathbf{x}}^{(t)})+\underbrace{\left\langle\nabla \Phi(\overline{\mathbf{x}}^{(t)}),-\frac{\eta^{\mathbf{x}}}{n} \sum_{i, k}\left(\nabla_{\mathbf{x}} F_{i}\left(\mathbf{x}_{i}^{(t)+k}, \mathbf{y}_{i}^{(t)+k} ; \xi_{i}^{(t)+k}\right)+\mathbf{c}_{i}^{\mathbf{x},(t)}\right)\right\rangle}_{\equiv U}+\kappa L \mathbb{E}\left\|\overline{\mathbf{x}}^{(t+1)}-\overline{\mathbf{x}}^{(t)}\right\|^{2}
\end{aligned}
$$
Further derivation of an upper bound for $\mathbb{E}[U]$ as follows gives
$$
\begin{aligned}
\mathbb{E} & {[U] \equiv \mathbb{E}\left\langle\nabla \Phi(\overline{\mathbf{x}}^{(t)}),-\frac{\eta^{\mathbf{x}}}{n} \sum_{i, k}\left(\nabla_{\mathbf{x}} F_{i}\left(\mathbf{x}_{i}^{(t)+k}, \mathbf{y}_{i}^{(t)+k} ; \xi_{i}^{(t)+k}\right)+\mathbf{c}_{i}^{\mathbf{x},(t)}\right)\right\rangle } \\
= & \mathbb{E}\left\langle\nabla \Phi(\overline{\mathbf{x}}^{(t)}),-\frac{\eta^{\mathbf{x}}}{n} \sum_{i, k} \mathbb{E}_{\xi_{i}^{(t)+k}} \nabla_{\mathbf{x}} F_{i}\left(\mathbf{x}_{i}^{(t)+k}, \mathbf{y}_{i}^{(t)+k} ; \xi_{i}^{(t)+k}\right)\right\rangle \\
= & -\eta^{\mathbf{x}} \mathbb{E}\left\langle\nabla \Phi(\overline{\mathbf{x}}^{(t)}), \frac{1}{n} \sum_{i, k}\left(\nabla_{\mathbf{x}} f_{i}(\mathbf{x}_{i}^{(t)+k}, \mathbf{y}_{i}^{(t)+k})\right.\right.
\\
& \left.\left.-\nabla_{\mathbf{x}} f_{i}(\overline{\mathbf{x}}^{(t)}, \overline{\mathbf{y}}^{(t)})+\nabla_{\mathbf{x}} f_{i}(\overline{\mathbf{x}}^{(t)}, \overline{\mathbf{y}}^{(t)})-\nabla_{\mathbf{x}} f_{i}\left(\overline{\mathbf{x}}^{(t)}, \hat{\mathbf{y}}^{(t)}\right)+\nabla_{\mathbf{x}} f_{i}\left(\overline{\mathbf{x}}^{(t)}, \hat{\mathbf{y}}^{(t)}\right)\right)\right\rangle \\
= & -K \eta^{\mathbf{x}} \mathbb{E}\left\|\nabla \Phi(\overline{\mathbf{x}}^{(t)})\right\|^{2} \\
& -\frac{\eta^{\mathbf{x}}}{n} \sum_{i, k}\left\langle\nabla \Phi(\overline{\mathbf{x}}^{(t)}), \nabla_{\mathbf{x}} f_{i}(\mathbf{x}_{i}^{(t)+k}, \mathbf{y}_{i}^{(t)+k})-\nabla_{\mathbf{x}} f_{i}(\overline{\mathbf{x}}^{(t)}, \overline{\mathbf{y}}^{(t)})+\nabla_{\mathbf{x}} f_{i}(\overline{\mathbf{x}}^{(t)}, \overline{\mathbf{y}}^{(t)})-\nabla_{\mathbf{x}} f_{i}\left(\overline{\mathbf{x}}^{(t)}, \hat{\mathbf{y}}^{(t)}\right)\right\rangle \\
\leq & -\frac{K \eta^{\mathbf{x}}}{2} \mathbb{E}\left\|\nabla \Phi(\overline{\mathbf{x}}^{(t)})\right\|^{2} \\
& +\frac{\eta^{\mathbf{x}}}{n} \sum_{i, k}\left(\mathbb{E}\left\|\nabla_{\mathbf{x}} f_{i}(\mathbf{x}_{i}^{(t)+k}, \mathbf{y}_{i}^{(t)+k})-\nabla_{\mathbf{x}} f_{i}(\overline{\mathbf{x}}^{(t)}, \overline{\mathbf{y}}^{(t)})\right\|^{2}+\mathbb{E}\left\|\nabla_{\mathbf{x}} f_{i}(\overline{\mathbf{x}}^{(t)}, \overline{\mathbf{y}}^{(t)})-\nabla_{\mathbf{x}} f_{i}\left(\overline{\mathbf{x}}^{(t)}, \hat{\mathbf{y}}^{(t)}\right)\right\|^{2}\right) \\
\leq & -\frac{K \eta^{\mathbf{x}}}{2} \mathbb{E}\left\|\nabla \Phi(\overline{\mathbf{x}}^{(t)})\right\|^{2}+\eta^{\mathbf{x}} L^{2}\left(\mathcal{E}_{t}^{\mathbf{x}}+\mathcal{E}_{t}^{\mathbf{y}}\right)+K \eta^{\mathbf{x}} L^{2} \varepsilon_{t}
\end{aligned}
$$
Applying the above upper bound for $\mathbb{E}[U]$ and~\eqref{eq_proof_lemm9} in the proof of Lemma~\ref{lemm9} gives
$$
\begin{aligned}
& \mathbb{E}\left[\Phi(\overline{\mathbf{x}}^{(t+1)})-\Phi(\overline{\mathbf{x}}^{(t)})\right] \leq \eta^{\mathbf{x}} L^{2}\left(\mathcal{E}_{t}^{\mathbf{x}}+\mathcal{E}_{t}^{\mathbf{y}}\right)+L^{2} \eta^{\mathbf{x}} K \varepsilon_{t}-\frac{\eta^{\mathbf{x}} K}{2} \mathbb{E}\left\|\nabla \Phi(\overline{\mathbf{x}}^{(t)})\right\|^{2}+\kappa L \mathbb{E}\left\|\overline{\mathbf{x}}^{(t+1)}-\overline{\mathbf{x}}^{(t)}\right\|^{2} \\
& \leq \mathbb{E} \Phi(\overline{\mathbf{x}}^{(t)})+\left(\eta^{\mathbf{x}} L^{2}+2 K\left(\eta^{\mathbf{x}}\right)^{2} L^{3} \kappa\right)\left(\mathcal{E}_{t}^{\mathbf{x}}+\mathcal{E}_{t}^{\mathbf{y}}\right) \\
& \\
& \quad+\frac{K\left(\eta^{\mathbf{x}}\right)^{2} L \kappa \sigma^{2}}{n}+\left(L^{2} \eta^{\mathbf{x}} K+4 K^{2} L^{3}\left(\eta^{\mathbf{x}}\right)^{2} \kappa\right) \varepsilon_{t}+\left(4 K^{2}\left(\eta^{\mathbf{x}}\right)^{2} L \kappa-\frac{\eta^{\mathbf{x}} K}{2}\right) \mathbb{E} \| \nabla \phi\left(\left(\overline{\mathbf{x}}^{(t)}\right) \|^{2}\right.
\end{aligned}
$$
Using $\eta^{\mathbf{x}} \leq \frac{1}{16 K L \kappa}$ completes the proof.
\end{proof}

\subsection{Proof of Lemma~\ref{lemm6}}\label{sec_proof,lemm6}
\begin{proof}[Proof of Lemma~\ref{lemm6}]
According to the Lemma~\ref{lemm1}, we have
$$
\begin{aligned}
0 \leq-E^{\mathrm{x}} \frac{L}{K} \eta_{c}^{\mathrm{y}} \mathcal{E}_{t}^{\mathrm{x}} & +3 E^{\mathrm{x}} L \eta_{c}^{y} \Xi_{t}^{\mathrm{x}}+12 E^{\mathrm{x}} K\left(\eta_{c}^{\mathrm{x}}\right)^{2} \eta_{c}^{y} L^{3} \mathcal{E}_{t}^{\mathrm{y}}+12 E^{\mathrm{x}} K^{2}\left(\eta_{c}^{\mathrm{x}}\right)^{2} \eta_{c}^{\mathrm{y}} L^{3} \gamma_{t}^{\mathrm{x}} \\
& +12 E^{\mathrm{x}} K^{2}\left(\eta_{c}^{\mathrm{x}}\right)^{2} \eta_{c}^{\mathrm{y}} L^{3} \varepsilon_{t}+12 E^{\mathrm{x}} K^{2}\left(\eta_{c}^{\mathrm{x}}\right)^{2} \eta_{c}^{\mathrm{y}} L \mathbb{E}\left\|\nabla \Phi(\overline{\mathbf{x}}^{(t)})\right\|^{2}+3 E^{\mathrm{x}} K\left(\eta_{c}^{\mathrm{x}}\right)^{2} \eta_{c}^{y} L \sigma^{2} \\
0 \leq-E^{\mathrm{y}} \frac{L}{K} \eta_{c}^{\mathrm{y}} \mathcal{E}_{t}^{\mathrm{y}} & +3 E^{\mathrm{y}} L \eta_{c}^{y} \Xi_{t}^{\mathrm{y}}+12 E^{\mathrm{y}} K\left(\eta_{c}^{\mathrm{y}}\right)^{3} L^{3} \mathcal{E}_{t}^{\mathrm{x}}+12 E^{\mathrm{y}} K^{2}\left(\eta_{c}^{\mathrm{y}}\right)^{3} L^{3} \gamma_{t}^{\mathrm{y}}+6 E^{\mathrm{y}} K^{2}\left(\eta_{c}^{\mathrm{y}}\right)^{3} L^{3} \varepsilon_{t}+3 E^{\mathrm{y}} K\left(\eta_{c}^{y}\right)^{3} L \sigma^{2}
\end{aligned}
$$
Applying the definition of $\mathscr{H}_{t}$ as in~\eqref{eq_lemm6m} and using the aformentioned display along with Lemmas~\ref{lemm1}---\ref{lemm5}, we have
$$
\begin{aligned}
& \mathscr{H}_{t+1}-\mathscr{H}_{t} \\
& =\mathbb{E}\left[\Phi(\overline{\mathbf{x}}^{(t+1)})-\Phi(\overline{\mathbf{x}}^{(t)})\right] \\
& +B^{\mathbf{x}} \eta_{c}^{y} L\left(\Xi_{t+1}^{\mathrm{x}}-\Xi_{t}^{\mathrm{x}}\right)+B^{\mathrm{y}} \eta_{c}^{\mathrm{y}} L\left(\Xi_{t+1}^{y}-\Xi_{t}^{\mathrm{y}}\right) \\
& +A^{\times} K^{2} L^{3}\left(\eta_{t}^{y}\right)^{3}\left(\gamma_{t+1}^{\mathrm{x}}-\gamma_{t}^{\mathrm{x}}\right)+A^{\mathrm{y}} K^{2} L^{3}\left(\eta_{c}^{\mathrm{y}}\right)^{3}\left(\gamma_{t+1}^{\mathrm{y}}-\gamma_{t}^{\mathrm{y}}\right) \\
& +C \frac{1}{K \kappa p}\left(\varepsilon_{t+1}-\varepsilon_{t}\right)+0+0 \\
& \leq 2 \eta^{\mathbf{x}} L^{2}\left(\mathcal{E}_{t}^{\mathbf{x}}+\mathcal{E}_{t}^{\mathbf{y}}\right)+2 L^{2} \eta^{\mathbf{x}} K \varepsilon_{t}-\frac{\eta^{\times} K}{4} \mathbb{E}\left\|\nabla \Phi(\overline{\mathbf{x}}^{(t)})\right\|^{2}+\frac{K\left(\eta^{\times}\right)^{2} L \sigma^{2} \kappa}{n} \\
& +B^{\mathbf{x}} \eta_{c}^{y} L\left(-\frac{p}{2} \Xi_{t}^{\mathbf{x}}+\frac{6 K\left(\eta^{\mathbf{x}}\right)^{2} L^{2}}{p}\left(\mathcal{E}_{t}^{\mathbf{x}}+\mathcal{E}_{t}^{y}\right)+\frac{6 K^{2}\left(\eta^{\mathbf{x}}\right)^{2} L^{2}}{p} \gamma_{t}^{\mathbf{x}}+K\left(\eta^{\mathbf{x}}\right)^{2} \sigma^{2}\right) \\
& +B^{\mathbf{y}} \eta_{c}^{y} L\left(-\frac{p}{2} \Xi_{t}^{\mathbf{y}}+\frac{6 K\left(\eta^{\mathbf{y}}\right)^{2} L^{2}}{p}\left(\mathcal{E}_{t}^{\mathbf{x}}+\mathcal{E}_{t}^{y}\right)+\frac{6 K^{2}\left(\eta^{\mathbf{y}}\right)^{2} L^{2}}{p} \gamma_{t}^{y}+K\left(\eta^{\mathbf{y}}\right)^{2} \sigma^{2}\right) \\
& +A^{\times} K^{2} L^{3}\left(\eta_{c}^{y}\right)^{3}\left(-\frac{p}{2} \gamma_{t}^{\mathbf{x}}+\frac{30}{p K}\left(\mathcal{E}_{t}^{\mathbf{x}}+\mathcal{E}_{t}^{y}\right)+\frac{12 K^{2} L^{2}}{p}\left(2\left(\eta^{\mathbf{x}}\right)^{2}+\left(\eta^{y}\right)^{2}\right) \varepsilon_{t}+\frac{24 K^{2}\left(\eta^{\mathbf{x}}\right)^{2}}{p} \mathbb{E}\left\|\nabla \Phi(\overline{\mathbf{x}}^{(t)})\right\|^{2}+\frac{2 \sigma^{2}}{K L^{2}}\right) \\
& +A^{\mathrm{y}} K^{2} L^{3}\left(\eta_{c}^{y}\right)^{3}\left(-\frac{p}{2} \gamma_{t}^{y}+\frac{30}{p K}\left(\mathcal{E}_{t}^{\mathbf{x}}+\mathcal{E}_{t}^{y}\right)+\frac{12 K^{2} L^{2}}{p}\left(2\left(\eta^{\mathbf{x}}\right)^{2}+\left(\eta^{y}\right)^{2}\right) \varepsilon_{t}+\frac{24 K^{2}\left(\eta^{\mathbf{x}}\right)^{2}}{p} \mathbb{E}\left\|\nabla \Phi(\overline{\mathbf{x}}^{(t)})\right\|^{2}+\frac{2 \sigma^{2}}{K L^{2}}\right) \\
& +C \frac{1}{K \kappa p}\left(-\frac{K \eta^{y} L}{6 \kappa} \varepsilon_{t}+12 \eta^{y} L \kappa\left(\mathcal{E}_{t}^{x}+\mathcal{E}_{t}^{y}\right)+\frac{16 \kappa^{3} K\left(\eta^{x}\right)^{2}}{\eta^{y} L} \mathbb{E}\left\|\nabla \Phi(\overline{\mathbf{x}}^{(t)})\right\|^{2}+\frac{8 \eta^{y} \sigma^{2} \kappa}{n L}\right) \\
& -E^{\mathbf{x}} \frac{L}{K} \eta_{c}^{\mathrm{y}} \mathcal{E}_{t}^{\mathbf{x}}+3 E^{\mathbf{x}} L \eta_{c}^{\mathbf{y}} \Xi_{t}^{\mathbf{x}}+12 E^{\mathbf{x}} K\left(\eta_{c}^{\mathbf{x}}\right)^{2} \eta_{c}^{y} L^{3} \mathcal{E}_{t}^{\mathbf{y}}+12 E^{\mathbf{x}} K^{2}\left(\eta_{c}^{\mathbf{x}}\right)^{2} \eta_{c}^{\mathbf{y}} L^{3} \gamma_{t}^{\mathbf{x}} \\
& +12 E^{\mathrm{x}} K^{2}\left(\eta_{c}^{\mathrm{x}}\right)^{2} \eta_{c}^{\mathrm{y}} L^{3} \varepsilon_{t}+12 E^{\mathrm{x}} K^{2}\left(\eta_{c}^{\mathbf{x}}\right)^{2} \eta_{c}^{\mathrm{y}} L \mathbb{E}\left\|\nabla \Phi(\overline{\mathbf{x}}^{(t)})\right\|^{2}+3 E^{\mathrm{x}} K\left(\eta_{c}^{\mathrm{x}}\right)^{2} \eta_{c}^{\mathrm{y}} L \sigma^{2} \\
& -E^{\mathbf{y}} \frac{L}{K} \eta_{c}^{y} \mathcal{E}_{t}^{\mathbf{y}}+3 E^{y} L \eta_{c}^{y} \Xi_{t}^{\mathrm{y}}+12 E^{\mathrm{y}} K\left(\eta_{c}^{y}\right)^{3} L^{3} \mathcal{E}_{t}^{\mathbf{x}}+12 E^{\mathrm{y}} K^{2}\left(\eta_{c}^{y}\right)^{3} L^{3} \gamma_{t}^{\mathbf{y}}+6 E^{\mathrm{y}} K^{2}\left(\eta_{c}^{y}\right)^{3} L^{3} \varepsilon_{t}+3 E^{\mathrm{y}} K\left(\eta_{c}^{y}\right)^{3} L \sigma^{2}
\end{aligned}
$$
Further rearranging gives
$$
\begin{aligned}
\mathscr{H}_{t+1}-\mathscr{H}_{t} \leq & C_{1}^{\mathrm{x}} \cdot\left(\eta_{c}^{\mathrm{y}}\right)^{3} K^{2} L^{3} \gamma_{t}^{\mathrm{x}}+C_{1}^{\mathrm{y}} \cdot\left(\eta_{c}^{\mathrm{y}}\right)^{3} K^{2} L^{3} \gamma_{t}^{\mathrm{y}}+C_{2}^{\mathrm{x}} \cdot \Xi_{t}^{\mathrm{x}} \eta_{c}^{\mathrm{y}} L+C_{2}^{\mathrm{y}} \cdot \Xi_{t}^{\mathrm{y}} \eta_{c}^{\mathrm{y}} L+\frac{L}{K} \eta_{c}^{\mathrm{y}} \mathcal{E}_{t}^{\mathrm{x}} \\
& +C_{3}^{\mathrm{y}} \cdot \frac{L}{K} \eta_{c}^{\mathrm{y}} \mathcal{E}_{t}^{\mathrm{y}}+C_{4}^{\varepsilon} \cdot \frac{L \eta_{c}^{\mathrm{y}}}{\kappa^{2}} \varepsilon_{t}+C_{4}^{\mathrm{x}} \cdot K \eta^{\mathbf{x}} \mathbb{E}\left\|\nabla \Phi(\overline{\mathbf{x}}^{(t)})\right\|^{2} \\
& +\frac{C_{4}^{\mathrm{y}}}{p} \cdot K L\left(\eta_{c}^{\mathrm{y}}\right)^{3} \sigma^{2}+\frac{K\left(\eta^{\mathbf{x}}\right)^{2} L \kappa}{n} \sigma^{2}+C \frac{8 \eta^{\mathbf{y}}}{n L K p} \sigma^{2}
\end{aligned}
$$
where constants $C_{1}^{\mathrm{x}}, C_{1}^{\mathrm{y}}, C_{2}^{\mathrm{x}}, C_{2}^{\mathrm{y}}, C_{3}^{\mathrm{x}}, C_{4}^{\varepsilon}, C_{4}^{\mathrm{x}}$ and $C_{4}^{\mathrm{y}}$ are chosen such that
$$
\begin{aligned}
& -A^{\mathbf{x}} \frac{p}{2}+B^{\mathbf{x}} \frac{6\left(\eta_{s}^{\mathbf{x}}\right)^{2}}{p}+12 E^{\mathbf{x}} \leq C_{1}^{\mathbf{x}} \\
& -A^{\mathrm{y}} \frac{p}{2}+B^{\mathrm{y}} \frac{6\left(\eta_{s}^{\mathrm{y}}\right)^{2}}{p}+12 E^{\mathrm{y}} \leq C_{1}^{y} \\
& -B^{\mathbf{x}} \frac{p}{2}+3 E^{\mathbf{x}} \leq C_{2}^{\mathbf{x}} \\
& -B^{\mathrm{y}} \frac{p}{2}+3 E^{\mathrm{y}} \leq C_{2}^{\mathrm{y}} \\
& -E^{\mathrm{x}}+B^{\mathrm{x}} \frac{6 K^{2} L^{2}\left(\eta^{\mathbf{x}}\right)^{2}}{p}+B^{\mathrm{y}} \frac{6 K^{2} L^{2}\left(\eta^{\mathbf{y}}\right)^{2}}{p}+A^{\times} \frac{30\left(\eta_{c}^{y}\right)^{2} L^{2} K^{2}}{p}+A^{\mathrm{y}} \frac{30\left(\eta_{c}^{\mathrm{y}}\right)^{2} L^{2} K^{2}}{p} \\
& +12 E^{\mathrm{y}} K^{2} L^{2}\left(\eta_{c}^{\mathrm{y}}\right)^{2}+2 \eta^{\mathbf{x}} K L+C \frac{12 \eta_{s}^{y}}{p} \leq C_{3}^{\mathrm{x}} \\
& -C \frac{\eta_{s}^{y}}{6 p}+A^{\times} \frac{12 K^{4} L^{4}}{p}\left(\eta_{c}^{y}\right)^{2} \cdot 3\left(\eta^{\mathbf{y}}\right)^{2} \kappa^{2}+A^{\mathrm{y}} \frac{12 K^{4} L^{4}}{p}\left(\eta_{c}^{\mathrm{y}}\right)^{2} \cdot 3\left(\eta^{\mathbf{y}}\right)^{2} \kappa^{2} \\
& +12 E^{\mathrm{x}} K^{2} L^{2}\left(\eta_{c}^{\mathrm{x}}\right)^{2} \kappa^{2}+6 E^{\mathrm{y}} K^{2} L^{2}\left(\eta_{c}^{y}\right)^{2} \kappa^{2}+2 L K \kappa^{2} \frac{\eta^{\mathbf{x}}}{\eta_{c}^{y}} \leq C_{4}^{\varepsilon} \\
& -\frac{1}{4}+A^{\times} \frac{24 K^{3} L^{3}}{p}\left(\eta_{c}^{y}\right)^{3} \eta^{\mathbf{x}}+A^{\mathrm{y}} \frac{24 K^{3} L^{3}}{p}\left(\eta_{c}^{y}\right)^{3} \eta^{\mathbf{x}}+C \frac{16 \kappa^{2} \eta^{\mathbf{x}}}{K L \eta^{y} p} \\
& +12 E^{\mathrm{x}} K L \eta_{c}^{\mathrm{y}} \frac{\eta_{c}^{\mathrm{x}}}{\eta_{s}^{\mathrm{x}}} \leq C_{4}^{\mathrm{x}} \\
& B^{\mathrm{x}}\left(\eta_{s}^{\mathrm{x}}\right)^{2}+B^{\mathrm{y}}\left(\eta_{s}^{\mathrm{y}}\right)^{2}+2 A^{\mathrm{x}}+2 A^{\mathrm{y}}+3 E^{\mathrm{x}}+3 E^{\mathrm{y}} \leq \frac{C_{4}^{\mathrm{y}}}{p}
\end{aligned}
$$
By letting $E^{\mathbf{x}}=E^{\mathbf{y}}=v, \eta_{c}^{\mathrm{y}} \leq \frac{p}{300 v \kappa K L}, \eta_{c}^{\mathbf{x}} \leq \frac{\eta_{c}^{y}}{k^{2}}, \eta_{s}^{\mathbf{x}}=\eta_{s}^{\mathrm{y}}=p v, B^{\mathbf{x}}=B^{\mathbf{y}}=\frac{6 v}{p}, A^{\mathrm{x}}=A^{\mathrm{y}}=\frac{1}{p}\left(72 v^{3}+24 v\right)$, and $C=\frac{1}{24}$, there exists a global constant $v>1$ that ensures $C_{1}^{\mathrm{x}}, C_{1}^{\mathrm{y}}, C_{2}^{\mathrm{x}}, C_{2}^{\mathrm{y}}, C_{3}^{\mathrm{x}}, C_{3}^{\mathrm{y}}, C_{4}^{\varepsilon} \leq 0, C_{4}^{\mathrm{x}}<0$ and $C_{4}^{y} \geq 0$, hence proving Lemma~\ref{lemm6}.
\end{proof}

\section{Toolbox Lemmas}\label{secB}
We introduce some technical lemmas we will use from time to time in this paper.
Proofs are either standard or can be found in the provided references.

\begin{proposition}[\cite{lin2020gradient}]\label{prop1}
Under Assumption~\ref{assu2}, $\Phi(\cdot)$ is $L(1+\kappa)$-smooth.
Furthermore, $\mathbf{y}^{*}(\cdot)=\arg \max _{\mathbf{y} \in \mathbb{R}^{d_{y}}} f(\cdot, \mathbf{y})$ is $\kappa$-Lipschitz in the sense that for any $\mathbf{x}, \mathbf{x}^{\prime} \in \mathbb{R}^{d_{x}}$
$$
\left\|\mathbf{y}^{*}(\mathbf{x})-\mathbf{y}^{*}\left(\mathbf{x}^{\prime}\right)\right\| \leq \kappa\|\mathbf{x}-\mathbf{x}^{\prime}\|
$$
\end{proposition}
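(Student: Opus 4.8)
The plan is to follow the classical Danskin-type argument (this is precisely the statement borrowed from \cite{lin2020gradient}). First I would note that by Assumption~\ref{assu2}, for each fixed $\mathbf{x}$ the map $\mathbf{y} \mapsto f(\mathbf{x}, \mathbf{y})$ is $\mu$-strongly concave and $L$-smooth, so the maximizer $\mathbf{y}^{*}(\mathbf{x}) = \arg\max_{\mathbf{y}} f(\mathbf{x}, \mathbf{y})$ exists, is unique, and is characterized by the stationarity condition $\nabla_{\mathbf{y}} f(\mathbf{x}, \mathbf{y}^{*}(\mathbf{x})) = 0$.

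Next I would prove the $\kappa$-Lipschitz bound on $\mathbf{y}^{*}$. Fixing $\mathbf{x}, \mathbf{x}'$ and writing $\mathbf{y} = \mathbf{y}^{*}(\mathbf{x})$, $\mathbf{y}' = \mathbf{y}^{*}(\mathbf{x}')$, strong concavity of $f(\mathbf{x}, \cdot)$ gives $\mu\|\mathbf{y} - \mathbf{y}'\|^{2} \le \langle \nabla_{\mathbf{y}} f(\mathbf{x}, \mathbf{y}) - \nabla_{\mathbf{y}} f(\mathbf{x}, \mathbf{y}'),\, \mathbf{y} - \mathbf{y}'\rangle$. Using the two stationarity conditions $\nabla_{\mathbf{y}} f(\mathbf{x}, \mathbf{y}) = 0 = \nabla_{\mathbf{y}} f(\mathbf{x}', \mathbf{y}')$, the right-hand side equals $\langle \nabla_{\mathbf{y}} f(\mathbf{x}', \mathbf{y}') - \nabla_{\mathbf{y}} f(\mathbf{x}, \mathbf{y}'),\, \mathbf{y} - \mathbf{y}'\rangle \le L\|\mathbf{x} - \mathbf{x}'\|\,\|\mathbf{y} - \mathbf{y}'\|$ by Cauchy--Schwarz and the smoothness bound of Assumption~\ref{assu2} (the partial-gradient Lipschitz bound follows from the full-gradient one). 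Dividing by $\|\mathbf{y} - \mathbf{y}'\|$ gives $\|\mathbf{y}^{*}(\mathbf{x}) - \mathbf{y}^{*}(\mathbf{x}')\| \le (L/\mu)\|\mathbf{x} - \mathbf{x}'\| = \kappa\|\mathbf{x} - \mathbf{x}'\|$.

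For the smoothness of $\Phi$, I would invoke Danskin's theorem: because the maximizer is unique and (by the bound just shown) continuous in $\mathbf{x}$, the envelope $\Phi$ is differentiable with $\nabla\Phi(\mathbf{x}) = \nabla_{\mathbf{x}} f(\mathbf{x}, \mathbf{y}^{*}(\mathbf{x}))$. Then $\|\nabla\Phi(\mathbf{x}) - \nabla\Phi(\mathbf{x}')\| = \|\nabla_{\mathbf{x}} f(\mathbf{x}, \mathbf{y}^{*}(\mathbf{x})) - \nabla_{\mathbf{x}} f(\mathbf{x}', \mathbf{y}^{*}(\mathbf{x}'))\| \le L\big(\|\mathbf{x} - \mathbf{x}'\| + \|\mathbf{y}^{*}(\mathbf{x}) - \mathbf{y}^{*}(\mathbf{x}')\|\big) \le L(1+\kappa)\|\mathbf{x} - \mathbf{x}'\|$, combining Assumption~\ref{assu2} with the Lipschitz bound from the previous step.

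The one step that is not a short computation is the appeal to Danskin's theorem, i.e.\ establishing that $\Phi$ is genuinely differentiable with gradient $\nabla_{\mathbf{x}} f(\mathbf{x}, \mathbf{y}^{*}(\mathbf{x}))$; this relies on uniqueness of the maximizer together with a standard envelope argument, and since it is a well-known fact I would cite \cite{lin2020gradient} rather than reprove it. Everything else is elementary manipulation of the strong-concavity and smoothness inequalities.
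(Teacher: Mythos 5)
The paper does not prove this proposition at all: it is listed among the toolbox results whose "proofs are either standard or can be found in the provided references," i.e.\ it is simply imported from \cite{lin2020gradient}. Your Danskin-plus-strong-monotonicity argument is exactly the standard proof given in that reference, and it is correct; the only slip is that you wrote the monotonicity inequality in its \emph{convex} form, $\mu\|\mathbf{y}-\mathbf{y}'\|^{2}\le\langle\nabla_{\mathbf{y}}f(\mathbf{x},\mathbf{y})-\nabla_{\mathbf{y}}f(\mathbf{x},\mathbf{y}'),\,\mathbf{y}-\mathbf{y}'\rangle$, whereas $\mu$-strong concavity gives the gradient difference with the opposite sign, $\mu\|\mathbf{y}-\mathbf{y}'\|^{2}\le\langle\nabla_{\mathbf{y}}f(\mathbf{x},\mathbf{y}')-\nabla_{\mathbf{y}}f(\mathbf{x},\mathbf{y}),\,\mathbf{y}-\mathbf{y}'\rangle$. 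After substituting the two stationarity conditions and applying Cauchy--Schwarz this sign change is immaterial, so the conclusion $\|\mathbf{y}^{*}(\mathbf{x})-\mathbf{y}^{*}(\mathbf{x}')\|\le\kappa\|\mathbf{x}-\mathbf{x}'\|$ and the resulting $L(1+\kappa)$-smoothness of $\Phi$ stand as written.
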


\begin{proposition}[\cite{bubeck2015convex}]\label{prop2}
Under Assumption~\ref{assu2}, for each $\mathbf{x} \in \mathbb{R}^{d_{x}}$ and $\mathbf{y}, \mathbf{y}^{\prime} \in \mathbb{R}^{d_{y}}$, we have
$$
\nabla_{\mathbf{y}} f(\mathbf{x}, \mathbf{y})^{\top}\left(\mathbf{y}-\mathbf{y}^{\prime}\right)+\frac{1}{2 L}\left\|\nabla_{\mathbf{y}} f(\mathbf{x}, \mathbf{y})\right\|^{2}+\frac{\mu}{2}\|\mathbf{y}-\mathbf{y}^{\prime}\|^{2} \leq f\left(\mathbf{x}, \mathbf{y}^{+}\right)-f\left(\mathbf{x}, \mathbf{y}^{\prime}\right)
$$
where $\mathbf{y}^{+} \equiv \mathbf{y}-\frac{1}{L} \nabla_{\mathbf{y}} f(\mathbf{x}, \mathbf{y})$.
\end{proposition}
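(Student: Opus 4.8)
The plan is to reduce the claim to a textbook one-step progress bound for a single $L$-smooth, $\mu$-strongly convex function and then unwind the sign conventions. Fix $\mathbf{x}$ and set $g(\cdot) = -f(\mathbf{x},\cdot)$. By Assumption~\ref{assu2} each $f_i(\mathbf{x},\cdot)$ is $L$-smooth and $\mu$-strongly concave, and both properties pass to the average $f=\frac1n\sum_i f_i$; hence $g$ is $L$-smooth and $\mu$-strongly convex on $\mathbb{R}^{d_y}$, with $\nabla g(\mathbf{y}) = -\nabla_{\mathbf{y}} f(\mathbf{x},\mathbf{y})$. The target inequality is then exactly the standard combination of the descent lemma and the strong-convexity lower bound for $g$, rewritten in terms of $f$.

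First I would record the smoothness consequence. Writing $\mathbf{y}^+$ for the gradient step of $g$, namely $\mathbf{y}^+ = \mathbf{y}-\frac1L\nabla g(\mathbf{y}) = \mathbf{y}+\frac1L\nabla_{\mathbf{y}} f(\mathbf{x},\mathbf{y})$, the quadratic upper bound $g(\mathbf{z}) \le g(\mathbf{y}) + \langle \nabla g(\mathbf{y}), \mathbf{z}-\mathbf{y}\rangle + \frac L2\|\mathbf{z}-\mathbf{y}\|^2$ (valid for $L$-smooth $g$) evaluated at $\mathbf{z}=\mathbf{y}^+$ yields $g(\mathbf{y}^+) \le g(\mathbf{y}) - \frac{1}{2L}\|\nabla g(\mathbf{y})\|^2$. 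Next I would invoke $\mu$-strong convexity at the pair $(\mathbf{y},\mathbf{y}')$, giving $g(\mathbf{y}') \ge g(\mathbf{y}) + \langle \nabla g(\mathbf{y}), \mathbf{y}'-\mathbf{y}\rangle + \frac\mu2\|\mathbf{y}'-\mathbf{y}\|^2$. Subtracting this from the previous display cancels the common $g(\mathbf{y})$ and produces $g(\mathbf{y}^+)-g(\mathbf{y}') \le -\langle \nabla g(\mathbf{y}), \mathbf{y}'-\mathbf{y}\rangle - \frac{1}{2L}\|\nabla g(\mathbf{y})\|^2 - \frac\mu2\|\mathbf{y}'-\mathbf{y}\|^2$. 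Finally I would substitute $g=-f$ and $\nabla g(\mathbf{y})=-\nabla_{\mathbf{y}} f(\mathbf{x},\mathbf{y})$: the left side becomes $f(\mathbf{x},\mathbf{y}')-f(\mathbf{x},\mathbf{y}^+)$, the inner product flips sign to $\nabla_{\mathbf{y}} f(\mathbf{x},\mathbf{y})^\top(\mathbf{y}-\mathbf{y}')$, and rearranging yields $\nabla_{\mathbf{y}} f(\mathbf{x},\mathbf{y})^\top(\mathbf{y}-\mathbf{y}') + \frac{1}{2L}\|\nabla_{\mathbf{y}} f(\mathbf{x},\mathbf{y})\|^2 + \frac\mu2\|\mathbf{y}-\mathbf{y}'\|^2 \le f(\mathbf{x},\mathbf{y}^+)-f(\mathbf{x},\mathbf{y}')$, which is the assertion.

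There is no deep obstacle here; the entire argument is a two-line fusion of the descent lemma with the strong-convexity estimate, precisely as in the cited source. The only point demanding care is the sign bookkeeping when passing from the concave $f$ to the convex $g$. In particular, the step $\mathbf{y}^+$ for which the bound holds is the \emph{ascent} step $\mathbf{y}+\frac1L\nabla_{\mathbf{y}} f(\mathbf{x},\mathbf{y})$ (equivalently, the descent step for $g$), i.e.\ the direction that increases $f(\mathbf{x},\cdot)$ toward its maximizer; this is exactly what makes $f(\mathbf{x},\mathbf{y}^+)$ large enough for the right-hand side to dominate. I would therefore flag that the displayed definition $\mathbf{y}^+=\mathbf{y}-\frac1L\nabla_{\mathbf{y}} f$ is to be read in this ascent sense (consistent with maximizing $f$ over $\mathbf{y}$), since with the literal descent sign the inequality need not hold. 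Aside from this convention, the proof is immediate.
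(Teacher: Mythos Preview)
Your argument is correct and is exactly the standard derivation one finds in the cited reference: combine the descent lemma for the $L$-smooth convex function $g=-f(\mathbf{x},\cdot)$ with its $\mu$-strong-convexity lower bound, then translate back to $f$. The paper itself does not supply a proof of this proposition; it is placed in the ``Toolbox Lemmas'' appendix with the remark that proofs are standard or can be found in the references, so there is no alternative argument to compare against.

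Your observation about the sign in the definition of $\mathbf{y}^+$ is also correct and worth recording. The inequality holds for the ascent step $\mathbf{y}^+=\mathbf{y}+\tfrac1L\nabla_{\mathbf{y}} f(\mathbf{x},\mathbf{y})$ (equivalently the descent step for $g$), not for the literal $\mathbf{y}-\tfrac1L\nabla_{\mathbf{y}} f$ printed in the statement; a one-dimensional quadratic already violates the latter. This typo is harmless for the paper's purposes, since the only place Proposition~\ref{prop2} is invoked (Lemma~\ref{lemm10}) takes $\mathbf{y}'=\hat{\mathbf{y}}^{(t)}$ to be the maximizer, so $f(\mathbf{x},\mathbf{y}^+)-f(\mathbf{x},\mathbf{y}')\le 0$ regardless of which sign one uses for $\mathbf{y}^+$.
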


\begin{lemma}\label{lemm11}
For a set of arbitrary vectors $a_{1}, \ldots, a_{n}$ such that $a_{i} \in \mathbb{R}^{d_{x}}$, we have
$$
\left\|\frac{1}{n} \sum_{i=1}^{n} a_{i}\right\|^{2} \leq \frac{1}{n} \sum_{i=1}^{n}\left\|a_{i}\right\|^{2}
$$
\end{lemma}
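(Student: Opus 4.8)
The plan is to recognize Lemma~\ref{lemm11} as the standard consequence of convexity of the squared Euclidean norm, i.e.\ Jensen's inequality with uniform weights. First I would note that the map $\phi:\mathbb{R}^{d_x}\to\mathbb{R}$ defined by $\phi(v)=\|v\|^2$ is convex, since its Hessian is $2\mathbf{I}\succeq 0$ (equivalently, each coordinate-squaring map is convex and a sum of convex functions is convex). Applying Jensen's inequality to the uniform probability weights $\lambda_i=1/n$ then gives $\phi\bigl(\sum_{i=1}^n \lambda_i a_i\bigr)\le\sum_{i=1}^n\lambda_i\,\phi(a_i)$, which is exactly the claimed inequality after substituting $\phi$ and $\lambda_i=1/n$.

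Alternatively --- and this is the route I would actually write out in the paper, since it avoids invoking Jensen as a black box --- I would argue directly by expanding the inner product. Writing $\left\|\tfrac1n\sum_i a_i\right\|^2=\tfrac{1}{n^2}\sum_{i=1}^n\sum_{j=1}^n\langle a_i,a_j\rangle$, I would bound each cross term via Cauchy--Schwarz and the AM--GM inequality as $\langle a_i,a_j\rangle\le\|a_i\|\,\|a_j\|\le\tfrac12\bigl(\|a_i\|^2+\|a_j\|^2\bigr)$. Summing over all $n^2$ ordered pairs $(i,j)$, the right-hand side collapses to $\tfrac{1}{n^2}\cdot n\sum_{i=1}^n\|a_i\|^2=\tfrac1n\sum_{i=1}^n\|a_i\|^2$, which completes the proof.

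There is no substantive obstacle here; the statement is elementary and both routes take only a few lines. The only point worth a moment's care is that the argument is entirely finite-dimensional and uses nothing beyond the inner-product structure of $\mathbb{R}^{d_x}$, so the dimension $d_x$ plays no role --- the same proof would go through verbatim in any inner-product space. For the write-up I would present the direct Cauchy--Schwarz/AM--GM computation, as it is self-contained and matches the elementary level expected of a toolbox lemma.
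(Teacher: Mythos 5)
Your proof is correct; both routes (Jensen's inequality for the convex map $v\mapsto\|v\|^2$, and the direct expansion with Cauchy--Schwarz plus AM--GM on the cross terms) are the standard arguments, and the paper itself omits any proof of this toolbox lemma as standard. Either write-up would be perfectly adequate here.
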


\begin{lemma}[Young's + Cauchy-Schwarz Inequality]\label{lemm12}
For any vectors $a, b \in \mathbb{R}^{d_{x}}$ and $\alpha>0$ we have
$$
2\langle a, b\rangle \leq \alpha\|a\|^{2}+\frac{1}{\alpha}\|b\|^{2}
$$
and
$$
\|a+b\|^{2} \leq(1+\alpha)\|a\|^{2}+\left(1+\frac{1}{\alpha}\right)\|b\|^{2}
$$
\end{lemma}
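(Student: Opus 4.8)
}
The plan is to derive the first (Young's) inequality from the nonnegativity of a suitable perfect square, and then obtain the second inequality as an immediate consequence of the first by expanding the squared norm. Since $\alpha>0$, both $\sqrt{\alpha}$ and $1/\sqrt{\alpha}$ are well defined, which is the only hypothesis we need.

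For the first claim, I would introduce the auxiliary vector $u \equiv \sqrt{\alpha}\,a - \tfrac{1}{\sqrt{\alpha}}\,b$ and use that $\|u\|^2 \ge 0$. Expanding by bilinearity and symmetry of the Euclidean inner product gives
$$
0 \le \left\|\sqrt{\alpha}\,a - \tfrac{1}{\sqrt{\alpha}}\,b\right\|^2 = \alpha\|a\|^2 - 2\langle a, b\rangle + \tfrac{1}{\alpha}\|b\|^2 ,
$$
and rearranging isolates $2\langle a, b\rangle \le \alpha\|a\|^2 + \tfrac{1}{\alpha}\|b\|^2$, which is exactly the first display. (Here the cross term $2\langle \sqrt{\alpha}\,a, \tfrac{1}{\sqrt{\alpha}}\,b\rangle = 2\langle a,b\rangle$ because the scalar factors $\sqrt{\alpha}$ and $1/\sqrt{\alpha}$ cancel.)

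For the second claim, I would expand $\|a+b\|^2 = \|a\|^2 + 2\langle a, b\rangle + \|b\|^2$ and then substitute the bound just proved for the cross term $2\langle a,b\rangle$, yielding
$$
\|a+b\|^2 \le \|a\|^2 + \alpha\|a\|^2 + \tfrac{1}{\alpha}\|b\|^2 + \|b\|^2 = (1+\alpha)\|a\|^2 + \left(1+\tfrac{1}{\alpha}\right)\|b\|^2 ,
$$
which is the second display. There is no genuine obstacle here: the result is elementary, and the only point requiring care is that $\alpha>0$ guarantees the square roots and reciprocals are meaningful, so no degenerate case arises.
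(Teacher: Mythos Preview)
Your proof is correct and entirely standard. The paper itself does not supply a proof for this lemma, noting in Section~\ref{secB} that ``Proofs are either standard or can be found in the provided references,'' so there is nothing to compare against; your perfect-square argument is the usual elementary derivation.
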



\end{document}